\newcommand*{\NEURIPS}{}

\newcommand*{\CAMREADY}{}

\ifdefined\ARXIV
	\documentclass[10pt]{article}
		\usepackage{times}
	\usepackage{fullpage}
	
	\usepackage{natbib}
	
	\renewcommand{\cite}[1]{\citep{#1}}
	\setcitestyle{authoryear,round,citesep={;},aysep={,},yysep={;}}
	
	\usepackage[utf8]{inputenc} 
	\usepackage[T1]{fontenc}    
	\usepackage{booktabs}       
	\usepackage{multirow}
	\usepackage{microtype}      
	\usepackage{xcolor}         
	
	\usepackage{tabularx}
	\usepackage{thmtools}
	\usepackage{thm-restate}
	\usepackage[left=1.25in,right=1.25in,top=1in]{geometry}
    \usepackage[normalem]{ulem}

	\usepackage{comment}
	\usepackage{hyperref}
	\definecolor{mydarkblue}{rgb}{0,0.08,0.5}
	\hypersetup{ %
		pdftitle={},
		pdfauthor={},
		pdfsubject={},
		pdfkeywords={},
		colorlinks=true,
		linkcolor=mydarkblue,
		citecolor=mydarkblue,
		filecolor=mydarkblue,
		urlcolor=mydarkblue
	}
	
	\skip\footins 9pt plus 4pt minus 2pt
	\def\footnoterule{\kern-3pt \hrule width 12pc \kern 2.6pt }
	\leftmargini\leftmargin \leftmarginii 2em
	\renewenvironment{abstract}%
	{%
		\vskip 0in%
		\centerline%
		{\large\bf Abstract}%
		\vspace{-1ex}%
		\begin{quote}%
		}
		{
			\par%
		\end{quote}%
		\vskip 0ex%
	}
	
	\title{\vskip -4ex \bf{TBD} \vskip 0ex}
	\author{%
		\textbf{Yonatan Slutzky}\\ 
		Tel Aviv University\\
		\texttt{\normalsize slutzky1@mail.tau.ac.il} \\
		\and
		\textbf{Yotam Alexander}\\
		Tel Aviv University\\
		\texttt{\normalsize yotam.alexander@gmail.com} \\
		\and
		\textbf{Noam Razin}\\
		PLI, Princeton University\\
		\texttt{\normalsize noamrazin@princeton.edu}
		\and
		\textbf{Nadav Cohen}\\
		Tel Aviv University\\
		\texttt{\normalsize cohennadav@tauex.tau.ac.il}
	}	
	\date{}
\fi

\ifdefined\NEURIPS
	\PassOptionsToPackage{numbers}{natbib}
	\documentclass{article}
	\ifdefined\CAMREADY
		\usepackage[preprint]{neurips_2026}
	\else
		\usepackage{neurips_2026}
	\fi
	\usepackage{footmisc}
	\usepackage[utf8]{inputenc} 
	\usepackage[T1]{fontenc}    
	\usepackage{hyperref}       	
	\usepackage{url}            	
	\usepackage{booktabs}       
	\usepackage{amsfonts}       
	\usepackage{nicefrac}       	
	\usepackage{microtype}      
\fi

\ifdefined\CVPR
	\documentclass[10pt,twocolumn,letterpaper]{article}
	\usepackage{footmisc}
	\usepackage{cvpr}
	\usepackage{times}
	\usepackage{epsfig}
	\usepackage{graphicx}
	\usepackage{amsmath}
	\usepackage{amssymb}
	\usepackage[square,comma,numbers]{natbib}
\fi
\ifdefined\AISTATS
	\documentclass[twoside]{article}
	\ifdefined\CAMREADY
		\usepackage[accepted]{aistats2015}
	\else
		\usepackage{aistats2015}
	\fi
	\usepackage{url}
	\usepackage[round,authoryear]{natbib}
\fi
\ifdefined\ICML
	\documentclass{article}
	\usepackage{footmisc}
	\usepackage{microtype}
	\usepackage{graphicx}
	\usepackage{booktabs}
	\usepackage{hyperref}
	
	\ifdefined\CAMREADY
		\usepackage[accepted]{icml2025}
	\else
		\usepackage{icml2025}
	\fi
\fi
\ifdefined\ICLR
	\documentclass{article}
	\usepackage{iclr2025_conference_arxiv,times}
	\usepackage{footmisc}
	\usepackage{hyperref}
	\usepackage{xcolor}
	\definecolor{mydarkblue}{rgb}{0,0.08,0.5}
	\hypersetup{ %
		pdftitle={},
		pdfauthor={},
		pdfsubject={},
		pdfkeywords={},
		colorlinks=true,
		linkcolor=mydarkblue,
		citecolor=mydarkblue,
		filecolor=mydarkblue,
		urlcolor=mydarkblue}

	\usepackage{url}

	\ifdefined\CAMREADY
		\iclrfinalcopy
	\fi
\fi
\ifdefined\COLT
	\ifdefined\CAMREADY
		\documentclass{colt2017}
	\else
		\documentclass[anon,12pt]{colt2017}
	\fi
\	usepackage{times}
\fi

\usepackage{color}
\usepackage{amsfonts}
\usepackage{algorithm}
\usepackage{algorithmic}
\usepackage{graphicx}
\usepackage{nicefrac}
\usepackage{bbm}
\usepackage{wrapfig}
\usepackage{tikz}
\usepackage[titletoc,title]{appendix}
\usepackage{stmaryrd}
\usepackage{comment}
\usepackage{endnotes}
\usepackage{hyperendnotes}
\usepackage{enumerate}
\usepackage{enumitem}
\usepackage[normalem]{ulem}
\usepackage{booktabs}
\usepackage{multirow}
\usepackage{amsmath}
	
\ifdefined\COLT
\else
	\usepackage{amsmath,amsthm,amssymb,mathtools}
\fi
\usepackage[small]{caption}
\usepackage[caption = false]{subfig}

\usepackage[extdef=true]{delimset}
\usepackage[capitalize,noabbrev]{cleveref}

\ifdefined\COLT
	\newtheorem{claim}[theorem]{Claim}
	\newtheorem{fact}[theorem]{Fact}
	\newtheorem{procedure}{Procedure}
	\newtheorem{conjecture}{Conjecture}	
	\newtheorem{hypothesis}{Hypothesis}	
	\Crefname{claim}{Claim}{Claims}
	\Crefname{fact}{Fact}{Facts}
	\Crefname{procedure}{Procedure}{Procedures}
	\Crefname{conjecture}{Conjecture}{Conjectures}
	\Crefname{hypothesis}{Hypothesis}{Hypotheses}
	
\else
	\newtheorem{lemma}{Lemma}
	\newtheorem{corollary}{Corollary}
	\newtheorem{theorem}{Theorem}
	\newtheorem{proposition}{Proposition}
	\newtheorem{assumption}{Assumption}

	\theoremstyle{definition}
	\newtheorem{definition}{Definition}
	\Crefname{lemma}{Lemma}{Lemmas}
	\Crefname{corollary}{Corollary}{Corollaries}
	\Crefname{theorem}{Theorem}{Theorems}
	\Crefname{proposition}{Proposition}{Propositions}
	\Crefname{assumption}{Assumption}{Assumptions}
	\Crefname{observation}{Observation}{Observations}
	\Crefname{example}{Example}{Examples}
	\Crefname{remark}{Remark}{Remarks}
	\Crefname{claim}{Claim}{Claims}
	\Crefname{fact}{Fact}{Facts}
	\Crefname{procedure}{Procedure}{Procedures}
	\Crefname{conjecture}{Conjecture}{Conjectures}
	\Crefname{hypothesis}{Hypothesis}{Hypotheses}
	\Crefname{definition}{Definition}{Definitions}
	\Crefname{appendix}{Appendix}{Appendices} %
\fi

\definecolor{green}{rgb}{0.0, 0.5, 0.0}

\definecolor{xcolor-gray}{gray}{0.95}

\newcommand{\kbf}{{\mathbf k}}
\newcommand{\qbf}{{\mathbf q}}
\newcommand{\xbf}{{\mathbf x}}
\newcommand{\ybf}{{\mathbf y}}

\newcommand{\ubf}{{\mathbf u}}
\newcommand{\vbf}{{\mathbf v}}
\newcommand{\rr}{{\mathbf r}}
\newcommand{\wbf}{{\mathbf w}}

\newcommand{\ebf}{{\mathbf e}}

\newcommand{\obf}{{\mathbf o}}

\newcommand{\0}{{\mathbf 0}}

\newcommand{\D}{{\mathcal D}}

\newcommand{\F}{{\mathcal F}}

\newcommand{\NN}{{\mathcal N}}
\newcommand{\PP}{{\mathcal P}}
\newcommand{\RR}{{\mathcal R}}

\newcommand{\T}{{\mathcal T}}
\newcommand{\U}{{\mathcal U}}
\newcommand{\V}{{\mathcal V}}

\newcommand{\X}{{\mathcal X}}
\newcommand{\Y}{{\mathcal Y}}
\newcommand{\OO}{{\mathcal O}}

\renewcommand{\L}{\mathcal{L}}

\newcommand{\EE}{\mathop{\mathbb E}} 
\newcommand{\R}{{\mathbb R}}
\newcommand{\N}{{\mathbb N}}

\newcommand{\BN}{\mathbb{N}}
\newcommand{\BR}{\mathbb{R}}
\newcommand{\BC}{\mathbb{C}}

\DeclareMathOperator{\Tr}{Tr}

\DeclareMathOperator{\diag}{Diag}

\newcommand{\dist}{\text{Dist}}

\newcommand{\dstate}{m}
\newcommand{\Klqr}{\Kunsafe}
\newcommand{\Kinf}{\Ksafe}
\newcommand{\gammainf}{\gamma_\infty}
\newcommand{\Hinf}{H_\infty}

\newcommand{\Plqr}{P_{\mathrm{unsafe}}}
\newcommand{\Pinf}{P_{\mathrm{safe}}}
\newcommand{\lip}{\text{Lip}}
\newcommand{\wt}{\widetilde}
\newcommand{\fst}{f_{\text{state}}}
\newcommand{\fobs}{f_{\text{obs}}}
\newcommand{\frwd}{f_{\text{reward}}}
\newcommand{\fdist}{f_{\text{dist}}}
\newcommand{\Fdist}{\F_{\text{dist}}}
\newcommand{\Xban}{\X_{\text{ban}}}
\newcommand{\Thetatr}{\Theta_{\mathrm{train}}}
\newcommand{\Thetate}{\Theta_{\mathrm{test}}}

\newcommand{\piteach}{\pi_{\phi^*}}
\newcommand{\Kunsafe}{K_{\mathrm{unsafe}}}
\newcommand{\Ksafe}{K_{\mathrm{safe}}}
\newcommand{\phiunsafe}{\phi_{\mathrm{unsafe}}}
\newcommand{\phisafe}{\phi_{\mathrm{safe}}}
\newcommand{\ksafe}{k^{\mathrm{safe}}}
\newcommand{\Anorm}{\norm{A}_{2}}
\newcommand{\Bnorm}{\norm{B}_{2}}

\newcommand{\Rinvnorm}{\norm{R^{-1}}_{2}}
\newcommand{\kbfsafe}{\kbf^{\mathrm{safe}}}
\newcommand{\alignfactor}{\alpha_{\mathrm{align}}}
\newcommand{\Dtrain}{\mathcal{D}^{\mathrm{train}}}
\newcommand{\Devaltr}{\mathcal{D}^{\mathrm{eval}}_{\mathrm{train}}}
\newcommand{\Devalte}{\mathcal{D}^{\mathrm{eval}}_{\mathrm{test}}}

\renewcommand{\bibsection}{}

\DeclareFontFamily{U}{mathx}{\hyphenchar\font45}
\DeclareFontShape{U}{mathx}{m}{n}{<-> mathx10}{}
\DeclareSymbolFont{mathx}{U}{mathx}{m}{n}
\DeclareMathAccent{\widebar}{0}{mathx}{"73}

\definecolor{darkspringgreen}{rgb}{0.09, 0.45, 0.27}
\usetikzlibrary{decorations.pathreplacing,calligraphy}
\usetikzlibrary{patterns}

\ifdefined\ENABLEENDNOTES
	\renewcommand{\theendnote}{\alph{endnote}} 
\else
	\renewcommand{\endnote}[1]{\null} 
\fi
\let\note\footnote

\ifdefined\CVPR
	\usepackage[breaklinks=true,bookmarks=false]{hyperref}
	\ifdefined\CAMREADY
		\cvprfinalcopy 
	\fi
	
\fi

\ifdefined\ARXIV
	\newcommand*{\ABBR}{}
\fi
\ifdefined\ICML
	\newcommand*{\ABBR}{}
\fi
\ifdefined\NEURIPS
	\newcommand*{\ABBR}{}
\fi
\ifdefined\ICLR
	\newcommand*{\ABBR}{}
\fi
\ifdefined\COLT
	\newcommand*{\ABBR}{}
\fi
\ifdefined\ABBR
	\newcommand{\eg}{{\it e.g.}}
	\newcommand{\ie}{{\it i.e.}}

\fi

\begin{document}
	
	
	\ifdefined\ARXIV
		\maketitle
	\fi
	\ifdefined\NEURIPS
	\title{Why Does Agentic Safety Fail to \\ Generalize Across Tasks?}
		\author{
		\hspace{-2.35em}
		\begin{tabular}[t]{c@{\hspace{3em}}c@{\hspace{3em}}c}
			\textbf{Yonatan Slutzky} & \textbf{Yotam Alexander} & \textbf{Tomer Slor} \\
			Tel Aviv University & Tel Aviv University & Tel Aviv University \\
			\texttt{slutzky1@mail.tau.ac.il} & \texttt{yotam.alexander@gmail.com} & \texttt{tomerslor@gmail.com}
		\end{tabular}
		\and
		\\[1ex]
		\begin{tabular}[t]{c@{\hspace{3em}}c}
			\textbf{Yoav Nagel} & \textbf{Nadav Cohen} \\
			\textbf{Tel Aviv University} & \textbf{Tel Aviv University} \\
			\texttt{yoavyosefn@mail.tau.ac.il} & \texttt{cohennadav@tauex.tau.ac.il}
		\end{tabular}
		}
		\maketitle
	\fi
	\ifdefined\CVPR
		\title{Paper Title}
		\author{
			Author 1 \\
			Author 1 Institution \\	
			\texttt{author1@email} \\
			\and
			Author 2 \\
			Author 2 Institution \\
			\texttt{author2@email} \\	
			\and
			Author 3 \\
			Author 3 Institution \\
			\texttt{author3@email} \\
		}
		\maketitle
	\fi
	\ifdefined\AISTATS
		\twocolumn[
		\aistatstitle{Paper Title}
		\ifdefined\CAMREADY
			\aistatsauthor{Author 1 \And Author 2 \And Author 3}
			\aistatsaddress{Author 1 Institution \And Author 2 Institution \And Author 3 Institution}
		\else
			\aistatsauthor{Anonymous Author 1 \And Anonymous Author 2 \And Anonymous Author 3}
			\aistatsaddress{Unknown Institution 1 \And Unknown Institution 2 \And Unknown Institution 3}
		\fi
		]	
	\fi
	\ifdefined\ICML
		\icmltitlerunning{A Provable Pitfall of Generalization in SSMs}
		\twocolumn[
		\icmltitle{A Provable Pitfall of Generalization in SSMs} 
		\icmlsetsymbol{equal}{*}
		\begin{icmlauthorlist}
			\icmlauthor{Author 1}{inst} 
			\icmlauthor{Author 2}{inst}
		\end{icmlauthorlist}
		\icmlaffiliation{inst}{Some Institute}
		\icmlcorrespondingauthor{Author 1}{author1@email}
		\icmlkeywords{}
		\vskip 0.3in
		]
		\printAffiliationsAndNotice{} 
	\fi
	\ifdefined\ICLR
		\title{TBD}
         \newcommand{\aff}[1]{\textsuperscript{\normalfont #1}}
		\author{
			Yonatan Slutzky\thanks{Equal contribution}~\,\aff{$\dagger$}, Yotam Alexander\footnotemark[1]~\,\aff{$\dagger$}, Noam Razin\aff{$\sharp$}, Nadav Cohen\aff{$\dagger$} \\[2mm]
			\textsuperscript{$\dagger$ }Tel Aviv University \;
			\textsuperscript{$\sharp$ }Princeton Language and Intelligence, Princeton University
		}
		\maketitle
	\fi
	\ifdefined\COLT
		\title{Paper Title}
		\coltauthor{
			\Name{Author 1} \Email{author1@email} \\
			\addr Author 1 Institution
			\And
			\Name{Author 2} \Email{author2@email} \\
			\addr Author 2 Institution
			\And
			\Name{Author 3} \Email{author3@email} \\
			\addr Author 3 Institution}
		\maketitle
	\fi

	\begin{abstract}
AI agents are increasingly deployed in multi-task settings, where the task to perform is specified at test time, and the agent must generalize to unseen tasks.
A major concern in such settings is safety: often, an agent must not only execute unseen tasks, but do so while avoiding risks and handling ones that materialize.
Empirical evidence suggests that even when the ability to execute generalizes to unseen tasks, the ability to do so safely frequently does not.
This paper provides theory and experiments indicating that failures of agentic safety to generalize across tasks are not merely due to limitations of training methods, but reflect an inherent property of safety itself: the relationship between a task and its safe execution is more complex than the relationship between a task and its execution alone.
Theoretically, we analyze linear-quadratic control with $\Hinf$-robustness, and prove that the mapping from task specification to an optimal controller has higher Lipschitz constant with safety requirements than without, yielding a Lipschitz bound of independent interest.
Empirically, we demonstrate our conclusions in simulated quadcopter navigation with a neural network agent and in CRM with an LLM agent.
Our findings suggest that current efforts to enhance agentic safety may be insufficient, and point to a need for fundamentally different approaches.
\end{abstract}

	\ifdefined\COLT
		\medskip
		\begin{keywords}
			\emph{TBD}, \emph{TBD}, \emph{TBD}
		\end{keywords}
	\fi

	
	\vspace{-3mm}
\section{Introduction}
\label{sec:intro}
\vspace{-2mm}

The evolution of Artificial Intelligence (AI) is manifested by its progression from passive settings, where it processes static data for generating predictions, to \emph{agentic} settings, where it interacts with dynamic environments for performing tasks entailing long-term rewards.
Agentic settings have been studied for decades in the context of Reinforcement Learning (RL)~\citep{kaelbling1996reinforcement,sutton1998reinforcement,li2017deep,bertsekas2019reinforcement} and control theory~\citep{rynaski1965theory,molinari1977time,zhou1996robust,aastrom2021feedback}, and in recent years---due to the rapid proliferation of Large Language Models (LLMs)~\citep{brown2020language,team2023gemini,openai2023gpt4,caruccio2024claude,bi2024deepseek,comanici2025gemini,liu2025deepseek}---they have become a central facet of practical AI~\citep{yao2023react,schick2023toolformer,zhou2023webarena,deng2023mind2web,liu2023agentbench,mialon2023gaia,guo2024large}.
Unlike classic RL and control theory formulations, which typically consider a single (fixed) task, modern agentic AI is predominantly \emph{multi-task}: there are multiple possible tasks, and the one to perform is specified only post-deployment (\ie, at test time)~\citep{dayan1993feudal,duan2016rl2,wang2016learning,finn2017model,rakelly2019efficient,zintgraf2019varibad,yu2020meta,zitkovich2023rt,kirk2023survey}.
Often, it is impractical to train on all possible tasks, and the hope is that a learned agent will \emph{generalize across tasks}, meaning it will perform tasks unseen in training as effectively as it performs tasks that are seen in training.

A primary concern in agentic AI is \emph{safety}: when decisions are made and acted upon with no human in the loop, the implications of unsafe behavior can be catastrophic~\citep{banerjee2023system,jimenez2023swe,yang2024swe,fan2024workflowllm,amro2025github}.
For example, an autonomous navigation agent may lead a vehicle to a high risk region (\eg, one with many obstacles), and a customer-facing agent may succumb to phishing attacks, revealing sensitive enterprise information.
Two important characteristics of agentic safety manifested in the latter two examples are:
\emph{(i)}~\emph{risk avoidance}: a preference for avoiding situations of high risk;
and
\emph{(ii)}~\emph{risk handling}: an ability to cope with risks when they materialize.

In many applications it is difficult to formulate criteria that ensure agentic safety~\citep{jilk2018limits,gough2026bounded,biswas2026responsible}.
A common approach for circumventing said difficulty is to incorporate safety into a learned agent through \emph{imitation learning}, \ie, through demonstrations from a trusted safe teacher (often a human)~\citep{bai2022training,ouyang2022training,dai2023safe,swamy2024minimaximalist}.
While this approach has been shown to enhance safe behavior on tasks included in training~\citep{liu2022robustness,ren2024codeattack}, the \emph{safe behavior often fails to generalize across tasks}, even in cases where the agent's ability to execute does generalize~\citep{ren2024codeattack,mou2024sg,andriushchenko2024agentharm,zinjad2025can,kwon2026safety}.
This phenomenon raises a fundamental question: are failures of agentic safety to generalize across tasks caused by limitations in current training techniques, or are they due to inherent properties that make safety less transferable across tasks than the ability to execute?

In this paper, we theoretically and empirically investigate the foregoing question. 
Theoretically, we analyze what is perhaps the simplest embodiment of learning a multi-task agent with safety requirements: learning to control a Linear Dynamical System (LDS) to maximize a task-dependent quadratic reward, while handling a risk of potential disturbances, \ie, while ensuring robustness in the sense of~$\Hinf$~norm~\citep{rynaski1965theory,molinari1977time,zhou1996robust,anderson2007optimal,bacsar2008h,green2012linear}.
We prove that the mapping from task specification (quadratic reward matrix) to an optimal controller has higher Lipschitz constant under safety requirements (\ie, with $\Hinf$-robustness) than it does when safety is ignored (\ie, without $\Hinf$-robustness).
In light of the literature~\citep{bousquet2002stability,barron2002universal,gyorfi2002distribution,tsybakov2003introduction,xu2012robustness,yarotsky2017error,poggio2017theory,schmidt2020nonparametric} tying the Lipschitz constant of a target mapping to the difficulty of generalizing when learning it, we conclude that \emph{generalization across tasks is fundamentally more difficult with safety requirements than without}.
This holds \emph{regardless of how well safety requirements are met on tasks seen in training}.
As a technical contribution of independent interest, we establish an upper bound for the Lipschitz constant of the Linear-Quadratic Regulator (LQR) as a function of its state cost matrix~$Q$~\citep{dorato1994linear,anderson2007optimal}.

We corroborate our theory via experiments in three settings that span different types of agents and different notions of safety:
\emph{(i)}~a setting that mirrors our theoretical analysis: learning to control an LDS to maximize a task-dependent quadratic reward while handling a risk of potential disturbances;
\emph{(ii)}~a setting in which a neural network agent is trained to navigate a simulated quadcopter to a task-dependent location, while avoiding regions marked as high-risk;
and
\emph{(iii)}~a setting in which an LLM agent (LLaMA-3.2 model~\citep{meta2024llama32}) is fine-tuned to perform tasks in a realistic Customer Relationship Management (CRM) benchmark~\citep{levy2024st}, while avoiding situations marked as high-risk and handling events defined as a risk materializing.
Our experiments compare learning to imitate a safe teacher to learning to imitate an unsafe teacher (\ie, a teacher that executes tasks while ignoring safety).
In all settings, we find that although imitating safe and unsafe teachers on tasks seen in training is similarly straightforward, generalizing across tasks is considerably more difficult with the safe teacher.

Overall, our findings suggest that failures of agentic safety to generalize across tasks are at least in part due to inherent properties of safety, and more specifically, to a complex relationship between the task to be executed and the characteristics of safe execution.
From a practical standpoint, this implies that current efforts to enhance agentic safety through more training on a subset of tasks may be insufficient, necessitating a fundamentally different approach.
A promising possibility suggested by our findings is to learn representations for task specifications and safety characteristics which simplify the relationship between them.
We hope that our work will spur progress along this line.

\vspace{-3mm}
\subsection{Paper Organization}\label{sec:intro:org}
\vspace{-2mm}

The remainder of the paper is organized as follows. 
\cref{sec:prelim} presents a formalism for reasoning about learning multi-task agents with safety requirements.
\cref{sec:analysis} delivers our theoretical analysis.
\cref{sec:exper} reports our experimental results.
\cref{sec:limit} discusses limitations of our work.
\cref{sec:related} surveys related work.
Finally, \cref{sec:conclusion} concludes.

	\vspace{-3mm}
\section{Preliminaries}
\label{sec:prelim}
\vspace{-2mm}

\subsection{Notation}
\label{sec:prelim:not}
\vspace{-2mm}

We use non-boldface lowercase letters for denoting scalars (\eg, $\alpha \in \R$, $d \in \N$), boldface lowercase letters for denoting vectors (\eg, $\xbf \in \R^d$), and non-boldface uppercase letters for denoting matrices (\eg, $A \in \R^{d ,  d}$).
For $d \in \N$, we let $[d] := \{1,\ldots,d\}$.
We respectively use $\ebf_j$ and $I$ to denote the $j$-th standard basis vector and the identity matrix, with their dimensions omitted from the notation and inferred from context.
We use~$\|\cdot\|_{2}$ to denote the Euclidean norm for vectors and the induced spectral norm for matrices, and $\|\cdot\|_{F}$ to denote the Frobenius norm for matrices.
For a square matrix $P$, we use $\Tr(P)$ to denote its trace, and write $P \succ 0$ (respectively, $P \succeq 0$) to indicate that $P$ is positive definite (respectively, positive semidefinite).
For $d \in \BN$ and a function $f : \V \to \BR^{d , d}$ over some domain $\V \subseteq \BR^{d , d}$, we use $\lip ( f )$ to denote the Lipschitz constant of~$f ( \cdot )$, defined by \smash{$\lip ( f ) := \sup_{V_{1}, V_{2} \in \V , V_{1} \neq V_{2}} \norm{f(V_{1}) - f(V_{2})}_{F} \big/ \norm{V_{1} - V_{2}}_{F}$}.

\vspace{-2mm}
\subsection{Learning a Multi-Task Agent with Safety Requirements}
\label{sec:prelim:setup}
\vspace{-2mm}

We consider the below formulation of learning a multi-task agent with safety requirements.
The formulation captures not only conventional multi-task settings in RL and control theory, but also contemporary settings involving LLM agents; see \cref{sec:exper:LLM} for details.

An \emph{environment} comprises a space of \emph{states}~$\X$, a space of \emph{observations}~$\OO$, a space of \emph{actions}~$\U$, and a space of \emph{tasks}~$\Theta$.
At every time $t\in \N\cup\{0\}$, the environment holds a state $\xbf_t \in \X$, produces an observation $\obf_t \in \OO$, intakes an action $\ubf_t \in \U$, and grants a reward $r_t \in \BR$, where the latter depends on a chosen task $\theta \in \Theta$.
The state evolves per the update rule $\xbf_{t + 1} = \fst ( \xbf_t , \ubf_t , t )$, where $\fst ( \cdot )$ is the environment's \emph{state transition function}, and the initial state $\xbf_0$ is set by some (deterministic or random) assignment.
The observation is produced per $\obf_t = \fobs ( \xbf_t , t )$, where $\fobs ( \cdot )$ is the environment's \emph{observation function}.
The reward is assigned per $r_t = \frwd ( \xbf_t , \obf_t , \ubf_t , t ; \theta )$, where $\frwd ( \cdot )$ is the environment's \emph{reward function}.\note{%
It is possible to account for environments where the reward depends on states, observations and actions from the past (\ie, where $\frwd ( \cdot )$ respectively intakes $( \xbf_\tau )_{\tau = 0}^t$, $( \obf_\tau )_{\tau = 0}^t$ and $( \ubf_\tau )_{\tau = 0}^t$ in place of $\xbf_t$, $\obf_t$ and $\ubf_t$), but for simplicity we consider the typical setting of an instantaneous reward.
}
Each of the functions $\fst ( \cdot )$, $\fobs ( \cdot )$ and $\frwd ( \cdot )$ may be stochastic (in the sense that its output may be sampled from some random distribution), and is allowed to vary with time (hence it receives the time index as an input argument).

An \emph{agent} interacts with the environment by assigning, at every time $t\in \N\cup\{0\}$, the action~$\ubf_t$ based on: the concurrent observation~$\obf_t$, the past observations~$ ( \obf_\tau )_{\tau = 0}^{t - 1}$, the past actions~$ ( \ubf_\tau )_{\tau = 0}^{t - 1}$, and the chosen task~$\theta$.
Namely, the agent implements $\ubf_t = \pi_\phi ( \obf_t , ( \obf_\tau )_{\tau = 0}^{t - 1} ,  ( \ubf_\tau )_{\tau = 0}^{t - 1} ; \theta)$, where $\pi_\phi ( \cdot )$ is the agent's \emph{policy function}, which may be stochastic, and is parameterized by $\phi \in \Phi$.
The ability of the agent to execute the task~$\theta$ is quantified by the \emph{expected cumulative reward} $\RR ( \phi ; \theta ) := \EE \big[ \sum_{t = 0}^\infty r_t \big]$, where the expectation is over the (possible) randomness in the environment (state transition function~$\fst ( \cdot )$, observation function~$\fobs ( \cdot )$, reward function~$\frwd ( \cdot )$, and the assignment of the initial state~$\xbf_0$) and in the agent (policy function~$\pi_\phi ( \cdot )$).

The extent to which the agent executes a task safely is evaluated through the complementary notions of \emph{risk avoidance} and \emph{risk handling}.
Risk avoidance is formalized by defining a set of banned states $\Xban \subset \X$ known to the agent, and requiring the agent to avoid those, \ie, to assign actions such that $\xbf_t \notin \Xban$ for every $t\in \N\cup\{0\}$.
This corresponds to agentic settings with known ``no-go'' regions, for example autonomous navigation around restricted areas, robotic manipulation near fragile objects, and industrial control systems with hard operating limits.
Risk handling is formalized by introducing disturbances to state transitions and requiring the agent to mitigate degradation in the expected cumulative reward.
Specifically, the requirement is that the agent assign actions such that replacement of the state transition function~$\fst ( \cdot )$ with any function from a set~$\Fdist$ does not lead to a significant reduction in~$\RR ( \cdot )$.
This captures agentic settings where risks cannot be fully avoided and must be dealt with as they materialize, for example autonomous driving under sudden wind gusts or slippery roads, LLM agents exposed to phishing attacks, and industrial control systems encountering never-before-seen regimes.

The agent's policy function~$\pi_\phi ( \cdot )$ is trained via \emph{imitation learning} from a \emph{teacher policy function}~$\piteach ( \cdot )$ over a set of \emph{training tasks} $\Thetatr \subset \Theta$.
That is, the agent's parameters~$\phi$ are tuned with the objective of having $\pi_\phi ( \cdot \, ; \theta )$ match~$\piteach ( \cdot \, ; \theta )$ for every $\theta \in \Thetatr$.
\emph{Generalization across tasks} is measured by the extent to which $\pi_\phi ( \cdot \, ; \theta )$ matches~$\piteach ( \cdot \, ; \theta )$ when the task~$\theta$ is unseen in training, \ie, when $\theta \in \Thetate := \Theta \setminus \Thetatr$.
In practical settings, $\pi_\phi ( \cdot )$~is trained on a finite sample, \ie, on finitely many input-output examples extracted from observation-action trajectories that are generated by applying~$\piteach ( \cdot ; \theta )$ to the environment, where the task $\theta$ ranges over~$\Thetatr$.
In theoretical settings, to decouple finite sample effects from the study of generalization across tasks, one may consider training~$\pi_\phi ( \cdot )$ in the infinite sample limit, meaning $\pi_\phi ( \cdot \, ; \theta ) = \piteach ( \cdot \, ; \theta )$ for every $\theta \in \Thetatr$.

Our interest lies in comparing different scenarios in terms of generalization across tasks.
Specifically, we compare scenarios where the teacher policy function~$\piteach ( \cdot )$ is safe (in terms of risk avoidance and/or risk handling), against ones where~$\piteach ( \cdot )$ is unsafe (\ie, it executes tasks while ignoring safety).
Since safety requirements can significantly alter attainable rewards, a reward-based comparison may be misleading.
Instead, we base the comparison on \emph{imitation error}, \ie, on the distance between the outputs of $\pi_\phi ( \cdot \, ; \theta )$ and~$\piteach ( \cdot \, ; \theta )$ for identical inputs, where $\theta \in \Thetate$.

\vspace{-2mm}
\subsection{Special Case: Linear-Quadratic Control with $\Hinf$-Robustness}
\label{sec:prelim:setup_theory}
\vspace{-1mm}

Below we describe what is perhaps the simplest special case of the formulation in \cref{sec:prelim:setup}: learning to control a \emph{Linear Dynamical System} (\emph{LDS}) to maximize a task-dependent quadratic reward, while ensuring \emph{$\Hinf$-robustness}~\citep{rynaski1965theory,molinari1977time,zhou1996robust,anderson2007optimal,bacsar2008h,green2012linear}.

For some $\dstate \in \N$, let the state, observation and action spaces, \ie, $\X$, $\OO$ and~$\U$, respectively, all equal~$\R^\dstate$.
Let the state transition function~$\fst ( \cdot )$ be given by:
\begin{align}
   \label{eq:lin_dyn}
   \fst(\xbf_{t}, \ubf_{t}, t) = A \xbf_{t} + B \ubf_{t} + D \wbf_{t}
   \text{\,,}
\end{align}
where:
\emph{(i)}~$A , B , D \in \BR^{\dstate,\dstate}$ are referred to as the \emph{state matrix}, the \emph{input matrix} and the \emph{noise matrix}, respectively;
\emph{(ii)}~$( \wbf_t )_{t = 0}^\infty$~are independent random variables over~$\R^\dstate$ whose means are zero and whose variances are summable, meaning \smash{$\EE \big[ \sum_{t = 0}^{\infty} \norm{\wbf_t}_{2}^{2} \big] \leq c$} for some constant $c \in \R_{> 0}$;\note{%
Summability of variances is required in order to ensure that the expected cumulative reward~$\RR( \cdot )$ is finite; see \cref{app:background:lqr} for details.
}
and
\emph{(iii)}~the initial state~$\xbf_0$ is zero.
Assume stability and non-degeneracy, in the sense that $\Anorm \in ( 0 , 1 )$ and $\Bnorm \neq 0$.
Consider full observability, meaning that observations are equal to their concurrent states, \ie, the observation function~$\fobs ( \cdot )$ is given by $\fobs ( \xbf_{t} , t ) = \xbf_{t}$.
Let the task space~$\Theta$ equal $\{ Q \in \R^{\dstate, \dstate} : Q \succeq 0 \text{ and } \norm{Q}_{F} \leq 1\}$, and the reward function~$\frwd ( \cdot )$ be given by $\frwd( \xbf_{t}, \ubf_{t}, t ; Q ) = - \xbf_{t}^\top Q \xbf_{t} - \ubf_{t}^\top R \ubf_{t}$, where $Q$ is referred to as the \emph{state reward matrix}, and $R \in \R^{\dstate ,  \dstate}$ satisfies $R \succ 0$ and is referred to as the \emph{action reward matrix}.

Consider the risk handling notion of safety, where $\Fdist$ consists of all deterministic analogues of~$\fst ( \cdot )$, \ie, of all functions that adhere to \cref{eq:lin_dyn} and the subsequent text, except that they assign $( \wbf_t )_{t = 0}^\infty$ deterministically, to values which may not be zero but uphold \smash{$\sum_{t=0}^{\infty} \norm{\wbf_t}_2^2 \leq c$}.
The requirement from the agent to mitigate degradation in the expected cumulative reward~$\RR( \cdot )$ is formalized through a worst-case lens.
Namely, for a given task $Q \in \Theta$, the agent is required to maximize $\inf \RR ( \cdot \, ; Q )$, where the infimum is taken over all cases where $\fst ( \cdot )$ is replaced by a member of~$\Fdist$.
This is known in the control theory literature as $\Hinf$-robustness~\citep{zhou1996robust,bacsar2008h,green2012linear}.

A fundamental result in control theory states that in the absence of safety requirements, the optimal policy function is the \emph{Linear-Quadratic Regulator} (\emph{LQR})~\citep{zhou1996robust,bacsar2008h,green2012linear}.
Specifically, with $\fst ( \cdot )$ maintaining its original form (\cref{eq:lin_dyn} and subsequent text), for a given task $Q \in \Theta$, the expected cumulative reward~$\RR( \phi ; Q )$ attains its maximal possible value if the agent's parameters~$\phi$ are such that actions are assigned as $\ubf_t = \Kunsafe ( Q ) \obf_t$, where $\Kunsafe ( Q ) \in \BR^{\dstate , \dstate}$ is referred to as the \emph{LQR whose state cost matrix is~$Q$} (see \cref{app:background:lqr} for further details).
Control theory also provides, for any $Q \in \Theta$, a canonical construction for a linear policy function that is optimal under safety requirements~\citep{zhou1996robust,bacsar2008h,green2012linear}.
More precisely, the theory of $\Hinf$-robustness provides a canonical mapping $Q \mapsto \Ksafe ( Q ) \in \BR^{\dstate , \dstate}$ such that the following holds: for a given task $Q \in \Theta$, when $\fst ( \cdot )$ can be replaced by any $\fdist ( \cdot ) \in \Fdist$, if $\phi$ are such that actions are assigned as $\ubf_t = \Ksafe ( Q ) \obf_t$, then $\inf_{\fdist ( \cdot ) \in \Fdist} \RR( \phi ; Q )$ attains its maximal possible value (see \cref{app:background:hinf} for further details).

In accordance with the optimal unsafe and safe policy functions, consider the following form for the agent's policy function: $\pi_\phi ( \obf_t , ( \obf_\tau )_{\tau = 0}^{t - 1} ,  ( \ubf_\tau )_{\tau = 0}^{t - 1} ; Q ) = K_\phi ( Q ) \obf_t$, where $K_\phi : \Theta \to \BR^{\dstate , \dstate}$ is a mapping parameterized by~$\phi$, which realizes $\Kunsafe ( \cdot )$ and~$\Ksafe ( \cdot )$ for some parameter values $\phiunsafe$ and~$\phisafe$, respectively.
We compare a scenario of imitation learning where the teacher policy function is~$\pi_{\phiunsafe} ( \cdot )$, against one where it is~$\pi_{\phisafe} ( \cdot )$.
In the former scenario, training in the infinite sample limit means that $K_\phi ( Q )$ equals~$\Kunsafe ( Q )$ for every $Q \in \Thetatr$, and generalization across tasks is measured by the discrepancy between $K_\phi ( Q )$ and~$\Kunsafe ( Q )$ for $Q \in \Thetate$.
The same holds in the latter scenario, but with $\Ksafe ( Q )$ in place of~$\Kunsafe ( Q )$.

	\vspace{-3mm}
\section{Theoretical Analysis}
\label{sec:analysis}
\vspace{-2mm}

\cref{sec:prelim:setup_theory} presents what is perhaps the simplest embodiment of learning a multi-task agent with safety requirements: learning to control an LDS to maximize a task-dependent quadratic reward, while ensuring $\Hinf$-robustness.
In the setting of \cref{sec:prelim:setup_theory}, the task space~$\Theta$ is $\{ Q \in \R^{\dstate, \dstate} : Q \succeq 0 \text{ and } \norm{Q}_{F} \leq 1\}$, and generalization across tasks with and without safety requirements reduces to generalization in learning certain target mappings $\Ksafe : \Theta \to \R^{\dstate, \dstate}$ and $\Kunsafe : \Theta \to \R^{\dstate, \dstate}$, respectively.
In this section we deliver a theoretical analysis which proves that, under technical conditions, the Lipschitz constant of~$\Ksafe ( \cdot )$ is greater than that of~$\Kunsafe ( \cdot )$.
In light of the literature~\citep{bousquet2002stability,barron2002universal,gyorfi2002distribution,tsybakov2003introduction,xu2012robustness,yarotsky2017error,poggio2017theory,schmidt2020nonparametric} tying the Lipschitz constant of a target mapping to the difficulty of generalizing when learning it, our theory establishes a formal sense in which \emph{generalization across tasks is fundamentally more difficult with safety requirements than without}.
Given that the analyzed setting (\cref{sec:prelim:setup_theory}) considers training in the infinite sample limit, the established difficulty of safety to generalize across tasks holds \emph{despite safety being learned perfectly on training tasks}.

An intermediate step in our analysis is a derivation of an upper bound for the Lipschitz constant of~$\Kunsafe ( \cdot )$.
In control theory, $\Kunsafe ( \cdot )$~represents the mapping from a state cost matrix to its corresponding LQR.
Given the importance of this mapping, the bound we derive for its Lipschitz constant is viewed as a technical contribution of independent interest.

\vspace{-3mm}
\subsection{Stability Margin}
\label{sec:analysis:stability}
\vspace{-2mm}

Our analysis relies on a \emph{stability margin} condition for the LDS, which requires the spectral norm of the state matrix~$A$ to be bounded away from unity by a certain quantity---see \cref{ass:stable}.
While this condition is a technical requirement of our current proof techniques, we demonstrate empirically in \cref{sec:exper:lqr} that our theoretical conclusions persist even when it is violated. 
Relaxing the condition is an important direction for future work (see \cref{sec:limit}).

\begin{assumption}[Stability Margin]
\label{ass:stable}
    The spectral norm of the state matrix~$A$ satisfies:
    \begin{align}
        \Anorm < 1 - \frac{1}{1 + \big( \sqrt{2} + \sqrt{2} \Bnorm^{2} \Rinvnorm \big)^{-1}}
        \text{\,.}
        \label{eq:stability}
    \end{align}
\end{assumption}

\subsection{Upper Bound for the Lipschitz Constant of~$\Kunsafe(\cdot)$}
\label{sec:analysis:ub}
\vspace{-1mm}

\cref{res:ub} establishes an upper bound for the Lipschitz constant of~$\Kunsafe ( \cdot )$.
In control theory, $\Kunsafe ( \cdot )$~represents the mapping from a state cost matrix to its corresponding LQR.
To our knowledge, \cref{res:ub} constitutes the first explicit Lipschitz characterization for~$\Kunsafe ( \cdot )$.
Given the importance of~$\Kunsafe ( \cdot )$, we view \cref{res:ub} as a technical contribution of independent interest.

\begin{lemma}\label{res:ub}
    Under \cref{ass:stable}, the Lipschitz constant of the mapping~$\Kunsafe(\cdot)$ satisfies:
    \begin{align}
        \lip(\Klqr) \leq 2 \Anorm \Bnorm \Rinvnorm \big( 1 + 2 \Bnorm^{2} \Rinvnorm \big)
        \text{\,.}
        \label{eq:ub}
    \end{align}
\end{lemma}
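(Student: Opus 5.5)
Throughout, I work with the discrete-time Riccati characterization of the LQR; the background is in \cref{app:background:lqr}. For $Q \in \Theta$, let $P(Q) \succeq 0$ be the stabilizing solution of
\[
P = Q + A^\top P A - A^\top P B (R + B^\top P B)^{-1} B^\top P A ,
\]
so that $\Kunsafe(Q) = -(R + B^\top P B)^{-1} B^\top P A$. The plan splits into two pieces:
\begin{itemize}
\item[(a)] bound how $K$ moves when $P$ moves;
\item[(b)] bound how $P$ moves when $Q$ moves.
\end{itemize}
Chaining them gives the Lipschitz bound. The target form $2\Anorm\Bnorm\Rinvnorm(1+2\Bnorm^2\Rinvnorm)$ suggests three things: a factor of $2$ is $\mathrm{Lip}(P)$, the term $\Anorm\Bnorm\Rinvnorm(1+2\Bnorm^2\Rinvnorm)$ is the sensitivity of $K$ to $P$, and the ``$2$'' inside arises from $\|P\|_2 \le 2$ or similar.

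\textbf{Step (a).} For $Q_1, Q_2$ with $P_i = P(Q_i)$, write $M_i = R + B^\top P_i B \succeq R$, so $\|M_i^{-1}\|_2 \le \Rinvnorm$. Decompose
\[
K_1 - K_2 = -M_1^{-1} B^\top (P_1 - P_2) A + M_1^{-1} B^\top (P_2 - P_1) B M_2^{-1} B^\top P_2 A ,
\]
using $M_1^{-1} - M_2^{-1} = M_1^{-1}(M_2 - M_1)M_2^{-1}$. Taking Frobenius norms and using submultiplicativity with spectral norms yields
\[
\|K_1 - K_2\|_F \le \Anorm\Bnorm\Rinvnorm\bigl(1 + \Bnorm^2\Rinvnorm\|P_2\|_2\bigr)\,\|P_1 - P_2\|_F .
\]
So I need $\|P(Q)\|_2 \le 2$ on $\Theta$. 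Comparing with the zero-control cost gives $P \preceq \sum_k (A^\top)^k Q A^k$, hence $\|P\|_2 \le 1/(1-\Anorm^2)$. This is at most $2$ whenever $\Anorm^2 \le 1/2$, which \cref{ass:stable} guarantees, since its right-hand side is below $1 - 1/(1+1/\sqrt2) < 0.42$.

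\textbf{Step (b), the main obstacle.} I need $\|P(Q_1) - P(Q_2)\|_F \le 2\|Q_1 - Q_2\|_F$. Subtracting the two Riccati equations and using the closed-loop form gives
\[
\Delta P = \Delta Q + (A + BK_1)^\top \Delta P (A + BK_2)
\]
or a symmetric variant. One can check this identity exactly, via the standard manipulation where cross terms cancel using the optimality of each $K_i$. Hence $\Delta P = \sum_k (L_1^\top)^k \Delta Q\, L_2^k$ with $L_i = A + BK_i$, and
\[
\|\Delta P\|_F \le \frac{\|\Delta Q\|_F}{1 - \|L_1\|_2\|L_2\|_2}.
\]
The difficulty is bounding $\|L_i\|_2$, because closed-loop matrices need not be contractive in spectral norm. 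I will use
\[
\|L_i\|_2 \le \Anorm\bigl(1 + \Bnorm^2\Rinvnorm\|P_i\|_2\bigr) \le \Anorm\bigl(1 + 2\Bnorm^2\Rinvnorm\bigr),
\]
and invoke \cref{ass:stable} to make $\|L_1\|_2\|L_2\|_2 \le 1/2$. The assumption's peculiar form, with its $\sqrt2$ factors and expression $1 - 1/(1 + c^{-1})$, is presumably tuned exactly so that this product is at most $1/2$. This yields the factor $2$.

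If the constants do not line up exactly, a fallback is a mean-value argument. Differentiate the Riccati map implicitly: $dP$ solves the Stein equation $dP = dQ + L^\top dP\, L$ at a single $Q$, which gives $\|dP\|_F \le \|dQ\|_F/(1 - \|L\|_2^2)$. Then integrate along the segment between $Q_1$ and $Q_2$, which lies in the convex set $\Theta$. The same bound then follows, and combining it with Step (a) gives \cref{eq:ub}.
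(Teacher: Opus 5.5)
\textbf{Step (b) fails as written.} Step (a) matches the paper's estimate for $\|K_1-K_2\|_F$ in terms of $\|P_1-P_2\|_F$. There is a harmless sign slip: the second term of your decomposition should carry $P_1-P_2$, not $P_2-P_1$. The real problem is the constant check in Step (b). Write $\beta:=\|B\|_2^2\|R^{-1}\|_2$. \cref{ass:stable} is equivalent to $\sqrt{2}\,\|A\|_2(1+\beta)<1-\|A\|_2$. It does \emph{not} imply $\|A\|_2(1+2\beta)\le 1/\sqrt{2}$. For example, $\beta=10$ and $\|A\|_2=0.05$ satisfy the assumption, since the threshold is $1/(1+11\sqrt{2})\approx 0.060$. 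Yet $\|A\|_2(1+2\beta)=1.05$, so your bound on $\|L_1\|_2\|L_2\|_2$ exceeds $1$ and the Neumann-series estimate is vacuous.

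The mean-value fallback does not help. It needs $\|L\|_2^2\le 1/2$ from the same bound on $\|L\|_2$, so it fails for the same reason. The difference identity was never the issue; it is exact. From $P_i-Q_i=A^\top P_iL_i$ and $B^\top P_iL_i=-RK_i$ you get $L_1^\top P_1BK_2=-K_1^\top RK_2=K_1^\top B^\top P_2L_2$, which is exactly what it needs.

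\textbf{The repair.} Use the sharper bound $\|P_i\|_2\le 1/(1-\|A\|_2^2)$, which you already derived, inside $\|L_i\|_2$. Then $\|L_i\|_2\le \|A\|_2\bigl(1+\beta/(1-\|A\|_2^2)\bigr)$. The assumption gives $\|A\|_2(1-\|A\|_2+\beta)<(1-\|A\|_2)/\sqrt{2}$. Divide by $1-\|A\|_2$ and use $1-\|A\|_2^2>1-\|A\|_2$ to get $\|L_i\|_2<1/\sqrt{2}$. Hence $\|L_1\|_2\|L_2\|_2<1/2$ and $\|\Delta P\|_F\le 2\|\Delta Q\|_F$. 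Keep the crude bound $\|P_2\|_2\le 2$ only for Step (a), where the target does contain $1+2\beta$.

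\textbf{Comparison with the paper.} With this repair, Step (b) is a valid route that differs from the paper's. The paper never uses closed-loop matrices. It shows the Riccati operator maps $\{P\succeq 0:\|P\|_2\le 1/(1-\|A\|_2^2)\}$ into itself. It then shows the operator is a Frobenius contraction there with constant $\gamma=\|A\|_2^2\bigl(1+\beta/(1-\|A\|_2^2)\bigr)^2<1/2$, via a three-term telescoping of the quadratic term. Finally it reads off $\|\Delta P\|_F\le\gamma\|\Delta P\|_F+\|\Delta Q\|_F$.

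Your exact Stein identity gives $\|L_1\|_2\|L_2\|_2\le\gamma$, the same constant, with less perturbation algebra. Once $\|L_1\|_2\|L_2\|_2<1$, uniqueness of the Stein solution is automatic. The cost is that you rely on standard LQR facts, namely that the stabilizing solution is the optimal cost matrix, for the zero-control comparison. The paper's Banach fixed-point argument obtains the location of $P(Q)$ in that ball, and its uniqueness there, from the contraction itself.
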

\begin{proof}[Proof sketch (full proof in \cref{app:lqr})]
    By a fundamental result in control theory~\citep{zhou1996robust,bacsar2008h,green2012linear,bertsekas2019reinforcement}, for any $Q \in \Theta$, the matrix $\Kunsafe ( Q ) \in \BR^{\dstate , \dstate}$ (LQR whose state cost matrix is~$Q$) is given by a rational function of $A$, $B$ and~$R$, as well as a matrix $P ( Q ) \in \BR^{\dstate , \dstate}$ that is the unique fixed point of a \emph{Riccati operator} depending on~$Q$.
    Under \cref{ass:stable}, there exists a bounded region in which~$P ( Q )$ must lie for any $Q \in \Theta$, and within which the Riccati operator is a contraction for any $Q \in \Theta$.
    This implies that~$P ( Q )$ varies in a controlled manner as $Q$ changes, yielding an upper bound for the Lipschitz constant of the mapping $Q \mapsto P ( Q )$.
    \cref{eq:ub} then follows from propagating this upper bound through the rational function that gives~$\Kunsafe ( Q )$. 
\end{proof}

\vspace{-3mm}
\subsection{Separation between the Lipschitz Constants of $\Ksafe(\cdot)$ and~$\Kunsafe(\cdot)$}
\label{sec:analysis:sep}
\vspace{-1mm}

\cref{res:sep} proves that the Lipschitz constant of~$\Ksafe ( \cdot )$ is greater than or equal to that of~$\Kunsafe ( \cdot )$ times a particular coefficient which can be arbitrarily large.
This establishes a formal sense in which \emph{generalization across tasks is fundamentally more difficult with safety requirements than without, even when safety requirements are met perfectly on training tasks} (see opening of \cref{sec:analysis}).

\cref{res:sep} relies on two conditions in addition to stability margin (\cref{ass:stable}).
While these conditions, formulated in \cref{ass:commute,ass:regular}, are technical requirements of our current proof technique, we demonstrate empirically in \cref{sec:exper:lqr} that the conclusions of \cref{res:sep} persist even when they are violated.
Relaxing the conditions is an important direction for future work (see \cref{sec:limit}).

The first of the two conditions is that all fixed matrices are symmetric and commuting.
\begin{assumption}[Symmetric Commutativity]
    \label{ass:commute}
    The state, input and noise matrices $A$, $B$ and~$D$, respectively, are symmetric and commute with one another and with the action reward matrix~$R$.
\end{assumption}

\cref{ass:commute} implies that the matrices $A$, $B$, $D$ and~$R$ are simultaneously orthogonally diagonalizable, \ie, there exists an orthogonal matrix $V \in \BR^{\dstate , \dstate}$ such that:
\begin{equation}
    \begin{aligned}
        A = V \diag( \lambda_{1} ( A ) , \ldots ,\lambda_{\dstate} ( A ) ) \, V^{\top} 
        \quad , \quad
        B = V \diag( \lambda_{1} ( B ) , \ldots ,\lambda_{\dstate} ( B ) ) \, V^{\top} 
        \,, \\[0.5mm]
        D = V \diag( \lambda_{1} ( D ) , \ldots ,\lambda_{\dstate} ( D ) ) \, V^{\top} 
        \quad , \quad
        R = V \diag( \lambda_{1} ( R ) , \ldots ,\lambda_{\dstate} ( R ) ) \, V^{\top}
        \,,
    \end{aligned}
    \label{eq:sim_diag}
\end{equation}
where, for $M \in \{ A , B , D , R \}$, $\lambda_{1} ( M ) , \ldots ,\lambda_{\dstate} ( M )$ are the eigenvalues of~$M$ ordered according to the shared eigenbasis.
By the definition of spectral norm:\note{%
Recall that $R \succ 0$, implying $\lambda_{1} ( R ) , \ldots ,\lambda_{\dstate} ( R ) > 0$.
}
\begin{align}
    \max\nolimits_{j \in [ \dstate ]} \abs{\lambda_j ( A )} = \Anorm
    ~~ , ~~
    \max\nolimits_{j \in [ \dstate ]} \abs{\lambda_j ( B )} = \Bnorm
    ~~ , ~~
    \min\nolimits_{j \in [ \dstate ]} \lambda_j ( R ) = \Rinvnorm^{-1}
    \text{\,.}
    \label{eq:extrema}
\end{align}
\cref{def:align} defines the \emph{alignment factor}: a quantity that measures how close $A$, $B$ and~$R$ are to attaining all extrema in \cref{eq:extrema} at a common index $j \in [ \dstate]$.
\begin{definition}[Alignment Factor]
    \label{def:align}
    Under \cref{ass:commute} and the notation of \cref{eq:sim_diag}, define the \emph{alignment factor}~$\alignfactor$ to be the largest $\alpha \in ( 0 , 1 ]$ for which there exists an index $j \in [ \dstate]$ satisfying:\note{%
    $\alignfactor$ is well-defined since \cref{eq:align} is trivially satisfied for some $j \in [ \dstate]$ when~$\alpha$ is sufficiently small.
    }
    \begin{equation}
        \begin{aligned}
            | \lambda_{j} ( A ) | \geq \alpha^{-1} \big( \Anorm + \alpha - 1 \big)
            \quad , \quad
            | \lambda_{j} ( B ) | \geq \alpha \Bnorm
            \quad , \quad
            \lambda_{j} ( R ) \leq \alpha^{-1} \Rinvnorm^{-1}
            \text{\,.}
        \end{aligned}
        \label{eq:align}
    \end{equation}
\end{definition}

The final condition required by \cref{res:sep} is a mild regularity of the alignment factor (\cref{def:align}).
\begin{assumption}[Alignment Regularity]
    \label{ass:regular}
    Under \cref{ass:commute} and the notation of \cref{eq:sim_diag}, among the indices $j \in [ \dstate ]$ that attain \cref{eq:align} with $\alpha = \alignfactor$, at least one satisfies $\lambda_{j} ( A ) \neq 0$ and $\lambda_{j} ( D ) \neq 0$.
\end{assumption}

With \cref{ass:commute,ass:regular} in place (in addition to \cref{ass:stable}), \cref{res:sep} proves that the Lipschitz constant of~$\Ksafe ( \cdot )$ is greater than or equal to that of~$\Kunsafe ( \cdot )$ times a particular coefficient (\cref{eq:sep}).
While this coefficient can be made arbitrarily large (via appropriate choices of the state, input, noise and action reward matrices $A$, $B$, $D$ and~$R$, respectively), there are cases where it is smaller than one.
Empirically, however, we find the estimated Lipschitz constant of~$\Ksafe ( \cdot )$ to consistently be greater than that of~$\Kunsafe ( \cdot )$ (see \cref{sec:exper:lqr}).
Developing a stronger version of \cref{res:sep} in which the coefficient is always greater than one is an important direction for future work (see \cref{sec:limit}).
\begin{theorem}
    \label{res:sep}
    Under \cref{ass:stable,ass:commute,ass:regular}, the Lipschitz constants of the mappings $\Ksafe(\cdot)$ and $\Kunsafe(\cdot)$ satisfy:
    \begin{align}
        \lip ( \Ksafe ) \geq \frac{\alignfactor^{3}}{\Anorm \big( 2 + 4 \Bnorm^{2} \Rinvnorm \big)} \cdot \lip(\Kunsafe)
        \text{\,.}
        \label{eq:sep}
    \end{align}
\end{theorem}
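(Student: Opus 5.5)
By \cref{res:ub} we have $\lip(\Kunsafe) \leq 2 \Anorm \Bnorm \Rinvnorm ( 1 + 2 \Bnorm^{2} \Rinvnorm )$. Multiplying this bound by the coefficient in \cref{eq:sep} gives
\[
\frac{\alignfactor^{3}}{\Anorm ( 2 + 4 \Bnorm^{2} \Rinvnorm )} \cdot 2 \Anorm \Bnorm \Rinvnorm ( 1 + 2 \Bnorm^{2} \Rinvnorm ) = \alignfactor^{3} \Bnorm \Rinvnorm .
\]
So the plan is to prove the \emph{absolute} lower bound $\lip(\Ksafe) \geq \alignfactor^{3} \Bnorm \Rinvnorm$. \cref{eq:sep} then follows at once by chaining this with \cref{res:ub}. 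No comparison between the two Riccati equations is needed: the whole theorem reduces to exhibiting two tasks $Q_1 \neq Q_2$ in $\Theta$ whose safe controllers differ by at least $\alignfactor^{3} \Bnorm \Rinvnorm \norm{Q_1 - Q_2}_F$.

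To get such tasks, I use \cref{ass:commute}. Take $V$ from \cref{eq:sim_diag}, fix an index $j$ attaining \cref{eq:align} with $\alpha = \alignfactor$ and satisfying \cref{ass:regular}, and restrict to tasks of the form $Q = V \diag(q_1,\ldots,q_\dstate) V^\top$. For these tasks the $\Hinf$ Riccati equation (and the LQR one) decouples in the shared eigenbasis into $\dstate$ scalar equations. Hence $\Ksafe(Q) = V \diag(k_1(q_1),\ldots,k_\dstate(q_\dstate)) V^\top$, where $k_i$ is the scalar $\Hinf$ gain built from $(\lambda_i(A),\lambda_i(B),\lambda_i(D),\lambda_i(R))$. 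This requires checking that the canonical construction of \cref{app:background:hinf} (including the attenuation level, if it is chosen as a function of $Q$) respects this diagonal structure; I would verify this first.

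I would then perturb only the $j$-th coordinate, comparing $Q_1 = q\, V\ebf_j\ebf_j^\top V^\top$ with $Q_2$ differing only in that entry (or simply $Q_2 = 0$, for which $\Ksafe = 0$). This gives $\lip(\Ksafe) \geq \sup_q |k_j'(q)|$ over the admissible range. Writing $a = \lambda_j(A)$, $b = \lambda_j(B)$, $d = \lambda_j(D)$, $r = \lambda_j(R)$, the scalar $\Hinf$ Riccati equation reads
\[
p = q + a^2 p \big/ \big(1 + (b^2/r - d^2/\gamma^2) p\big),
\]
with gain $k = -(b/r)\, a\, p \big/ \big(1 + (b^2/r - d^2/\gamma^2) p\big)$. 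I would solve this explicitly (it is a quadratic in $p$) and differentiate in $q$. The factor $b/r$ already contributes $\geq \alignfactor \Bnorm \cdot \alignfactor \Rinvnorm$, using $|b| \geq \alignfactor \Bnorm$ and $r \leq \alignfactor^{-1}\Rinvnorm^{-1}$. The remaining factor $|a|\, \partial_q\big[p/(1+\beta p)\big]$, with $\beta = b^2/r - d^2/\gamma^2$, must be shown to be $\geq \alignfactor$. Here the first condition in \cref{eq:align} and \cref{ass:stable} keep $|a|$ and the denominator controlled. The disturbance term $-d^2/\gamma^2$, which is nonzero by \cref{ass:regular}, only decreases the effective $\beta$ and so enlarges the sensitivity; $a \neq 0$ ensures the gain is not identically zero.

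The main obstacle is this last scalar estimate: obtaining a clean lower bound on $|a|\, \partial_q[p/(1+\beta p)]$ of order $\alignfactor$, uniformly over the task range $q \in [0,1]$. This is delicate if the canonical attenuation level $\gamma$ depends on $q$, since $\gamma$ then contributes its own derivative term.

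My first attempt would be to evaluate at $q \to 0$. There $p \approx q/(1-a^2)$, so the derivative is about $1/(1-a^2) \geq 1$, and hence $|a|\,\partial_q[\cdot] \gtrsim |a| \geq \alignfactor^{-1}(\Anorm + \alignfactor - 1)$. If this falls short of $\alignfactor$ when $\Anorm$ is small, I would instead use the $d^2/\gamma^2$ term near the critical attenuation level, where $1 + \beta p$ becomes small and the derivative blows up. I would pick $q$ so that this gain amplification dominates, which is exactly where safety creates the large sensitivity that the theorem describes.
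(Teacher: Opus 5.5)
\textbf{Where the proposal agrees with the paper.} Your reduction is correct and is how the paper finishes. The coefficient in \cref{eq:sep} times the bound of \cref{res:ub} equals $\alignfactor^{3}\Bnorm\Rinvnorm$, so it suffices to prove $\lip(\Ksafe)\geq\alignfactor^{3}\Bnorm\Rinvnorm$; the paper in fact gets $\alignfactor^{3}\Bnorm\Rinvnorm/(1-\Anorm)$. Restricting to $V$-diagonal tasks and perturbing one coordinate is also the paper's strategy. Taking tasks supported only on coordinate $j$ is a legitimate simplification, since $j$ is then automatically the unique coordinate that determines $\gammainf(Q)$.

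\textbf{The gap.} It is the step you flag yourself: you never compute the $j$-th entry of $\Ksafe(Q)=\Ksafe(Q;\gammainf(Q))$, and neither of your routes does so.
\begin{itemize}
\item Your $q\to0$ expansion $p\approx q/(1-a^{2})$ is valid only at fixed $\gamma$. For the canonical map, $\gammainf(Q)\to0$ as $q\to0$, so $d^{2}/\gamma^{2}\to\infty$ and the expansion fails.
\item The slope that expansion gives, $|a||b|/(r(1-a^{2}))$, has a factor $|a|$ you cannot bound below. When $\Anorm\leq1-\alignfactor$ the first condition in \cref{eq:align} is vacuous, and \cref{ass:regular} only gives $a\neq0$.
\item The fallback fails too, because nothing blows up. At $\gamma=\gammainf(Q)$ the discriminant of $p=q+a^{2}p/(1+\beta p)$ vanishes on the stabilizing branch, $\beta q=-(1-|a|)^{2}$. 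The equation becomes $((1-|a|)p-q)^{2}=0$, so $p=q/(1-|a|)$, $1+\beta p=|a|$, and
\[
\ksafe_j=-\mathrm{sgn}(a)\,\frac{b}{r\,(1-|a|)}\,q \, .
\]
This is exactly linear in $q$, for $q<|a|r(1-|a|)/b^{2}$. That range is precisely where $d^{2}p<\gamma^{2}$; for larger $q$ the scalar optimum is the constant deadbeat gain $-a/b$.
\end{itemize}

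\textbf{The missing idea.} You need a way to pin down $\gammainf(Q)$ and the gain at that level, with justification. The paper never analyses the critical-level DGARE directly:
\begin{itemize}
\item Via Plancherel it computes the robust cost of a scalar gain, $\Gamma_j(q,k)=d^{2}(q+rk^{2})/(1-|a+bk|)^{2}$, with worst-case frequency $\omega\in\{0,\pi\}$.
\item It shows that a $V$-diagonal gain has matrix objective $\max_i\Gamma_i(q_i,k_i)$.
\item It minimizes $\Gamma_j$ in closed form, obtaining the unique minimizer displayed above, and deduces $\gammainf(Q)^{2}=\max_i\Gamma_i^{*}(q_i)$.
\item Since $\Ksafe(Q)$ attains this value and the scalar minimizer on the active coordinate is unique, $\ksafe_j$ must equal that minimizer.
\end{itemize}
The resulting slope $|b|/(r(1-|a|))$ has no factor $|a|$, which is why the bound survives small $|a_j|$. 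With $1-|a|\leq(1-\Anorm)/\alignfactor$, $|b|\geq\alignfactor\Bnorm$ and $r\leq\alignfactor^{-1}\Rinvnorm^{-1}$, it yields the result.

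You could stay in your Riccati framework and locate $\gammainf$ by the discriminant condition. You would then have to prove that this level is the optimal robustness level, and that proof is the substance of the theorem; your proposal does not supply it.
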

\begin{proof}[Proof sketch (full proof in \cref{app:separation})]
    The proof establishes \cref{eq:sep} by deriving a lower bound on~$\lip ( \Ksafe )$ and combining that with the upper bound on~$\lip ( \Kunsafe )$ from \cref{res:ub} (\cref{eq:ub}).
    For the lower bound on~$\lip ( \Ksafe )$, attention is restricted to the subset $\Theta_V \subset \Theta$ comprising the matrices $Q \in \Theta$ that are diagonalized by the orthogonal matrix~$V$ from \cref{eq:sim_diag}.
    Since $V$ diagonalizes $A$, $B$, $D$ and~$R$, when $Q \in \Theta_V$, applying a change of basis using~$V$ decomposes our multi-dimensional $\Hinf$-robustness problem into a collection of one-dimensional problems that admit closed-form solutions.
    While these closed-form solutions do not necessarily agree with~$\Ksafe ( \cdot )$, alignment regularity (\cref{ass:regular}) ensures that there exists an open subset of~$\Theta_V$ along which at least one of the closed-form solutions does agree with~$\Ksafe ( \cdot )$.
    By explicitly computing the derivative of an agreeing closed-form solution, we obtain a lower bound on $\lip ( \Ksafe )$, as required.
\end{proof}

	\vspace{-3mm}
\section{Experiments}
\label{sec:exper}
\vspace{-2mm}

This section corroborates our theory (\cref{sec:analysis}) via experiments in three settings that span different types of agents and different notions of safety:
\emph{(i)}~a setting that mirrors our theoretical analysis: learning to control an LDS to maximize a task-dependent quadratic reward while handling a risk of potential disturbances (\cref{sec:exper:lqr});
\emph{(ii)}~a setting in which a neural network agent is trained to navigate a simulated quadcopter to a task-dependent location, while avoiding regions marked as high-risk (\cref{sec:exper:quadcopter});
and
\emph{(iii)}~a setting in which an LLM agent (LLaMA-3.2 model~\citep{meta2024llama32}) is fine-tuned to perform tasks in a realistic Customer Relationship Management (CRM) benchmark~\citep{levy2024st}, while avoiding situations marked as high-risk and handling events defined as a risk materializing (\cref{sec:exper:LLM}).
Our experiments compare learning to imitate a safe teacher to learning to imitate an unsafe teacher (\ie, a teacher that executes tasks while ignoring safety).
In all three settings, we find that although imitating safe and unsafe teachers on tasks seen in training is similarly straightforward, generalizing across tasks is considerably more difficult with the safe teacher.
\ifdefined\CAMREADY
    Code for reproducing our experiments will be made available at \url{https://github.com/Tomerslortau/agentic-safety-generalization}. 
\fi

\vspace{-2mm}
\subsection{Linear-Quadratic Control with $\Hinf$-Robustness}
\label{sec:exper:lqr}
\vspace{-1mm}

Our theoretical analysis (\cref{sec:analysis}) pertains to what is perhaps the simplest embodiment of learning a multi-task agent with safety requirements: learning to control an LDS to maximize a task-dependent quadratic reward, while handling a risk of potential disturbances, \ie, while ensuring $\Hinf$-robustness.
In the analyzed setting (\cref{sec:prelim:setup_theory}), generalization across tasks with and without safety requirements reduces to generalization in learning certain target mappings $\Ksafe ( \cdot )$ and~$\Kunsafe ( \cdot )$, respectively.
Our main theoretical result (\cref{res:sep}) establishes that, under technical conditions (\cref{ass:stable,ass:commute,ass:regular}), the Lipschitz constant of~$\Ksafe ( \cdot )$ is greater than or equal to that of~$\Kunsafe ( \cdot )$ times a particular coefficient which can be arbitrarily large.
\cref{fig:lip_dim=4} corroborates this result by demonstrating empirically that the estimated Lipschitz constant of~$\Ksafe ( \cdot )$ is consistently greater than that of~$\Kunsafe ( \cdot )$, even when the coefficient is smaller than one or the technical conditions are violated.
\cref{tab:imitate} provides further corroboration by demonstrating empirically that---in line with the literature~\citep{bousquet2002stability,barron2002universal,gyorfi2002distribution,tsybakov2003introduction,xu2012robustness,yarotsky2017error,poggio2017theory,schmidt2020nonparametric} tying the Lipschitz constant of a target mapping to the difficulty of generalizing when learning it---the separation between Lipschitz constants translates to a separation between cross-task generalization errors.

\vspace{-2mm}
\subsection{Quadcopter Navigation}
\label{sec:exper:quadcopter}
\vspace{-1mm}

We demonstrate empirically that the conclusions of our theory (\cref{sec:analysis}) extend to a setting in which a neural network agent is trained to navigate a simulated quadcopter to a task-dependent location, while avoiding regions marked as high-risk.
This setting, implemented using a differentiable rigid-body simulator~\citep{panerati2021learning}, is a special case of the formulation in \cref{sec:prelim:setup}, characterized as follows.
A state $\xbf_{t} \in \X = \BR^{12}$ represents the quadcopter's position and tilt angles, along with their respective velocities.
The initial state~$\xbf_0 \in \X$ is set to zero with the quadcopter hovering.
There is full observability, meaning that observations are equal to their concurrent states (\ie, $\OO = \X$ and $\fobs ( \xbf_{t} , t ) = \xbf_{t}$).
An action $\ubf_{t} \in \U = \BR^{4}$ represents the thrusts applied to the quadcopter's rotors.
The task space~$\Theta$ consists of target states $\theta \in \X$ in which the components corresponding to tilt angles and velocities are zero (\ie, only the component corresponding to position varies).
The reward is a negative squared Euclidean distance between the concurrent state~$\xbf_{t}$ and the target state~$\theta$ (\ie, \smash{$\frwd ( \xbf_t , \obf_t , \ubf_t , t ; \theta ) = - \norm{\xbf_t - \theta}^2$}).
Safety is captured via risk avoidance, where the set of banned states $\Xban \subseteq \X$ consists of all those whose position component lies in a fixed, randomly selected union of cubes.
The agent's policy function~$\pi_\phi ( \cdot )$ is implemented by a three-layer fully connected neural network.
The teacher policy function is either $\pi_{\phiunsafe}(\cdot)$ or~$\pi_{\phisafe}(\cdot)$, where the former steers toward the target state while ignoring~$\Xban$ (\ie, it may lead the state to enter~$\Xban$), whereas the latter steers toward the target state while making sure to avoid~$\Xban$.
Further details concerning the setting can be found in \cref{app:details:quadcopter}.

\cref{tab:imitate} reports results of the experiment.
In line with our theory, it shows that although imitating $\pi_{\phisafe}(\cdot)$ and~$\pi_{\phiunsafe}(\cdot)$ for target states seen in training is similarly straightforward, generalizing across target states is considerably more difficult with~$\pi_{\phisafe}(\cdot)$.

\begin{figure}[t]
    \begin{center}
        \vspace{-4mm}
        \begin{center}
            \includegraphics[width=1\textwidth]{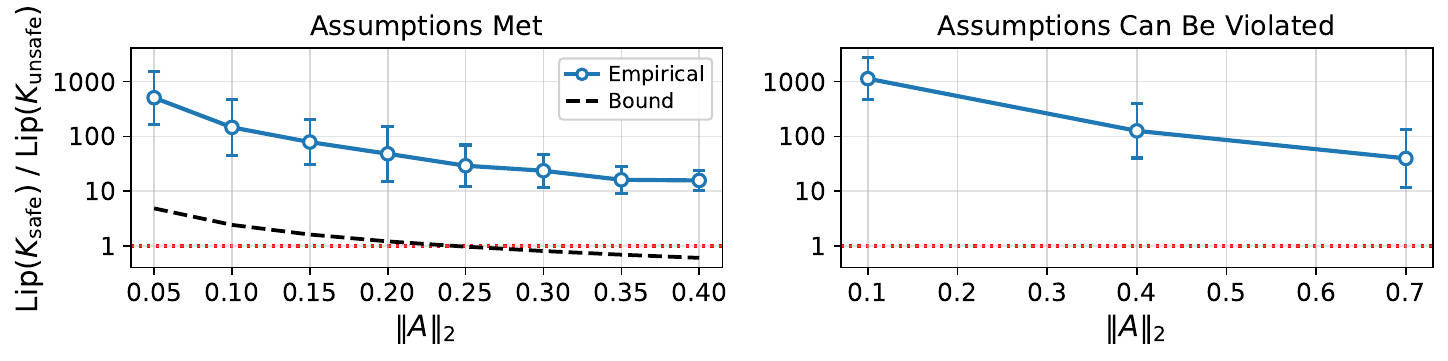}
        \end{center}
        \vspace{-2mm}
    \end{center}
    \caption{
    In the theoretically analyzed setting (linear-quadratic control with $\Hinf$-robustness; \cref{sec:prelim:setup_theory}), the mapping from task specification to an optimal controller has higher Lipschitz constant with safety requirements than without, \ie, the Lipschitz constant of~$\Ksafe ( \cdot )$ is greater than that of~$\Kunsafe ( \cdot )$.
    Left plot compares the estimated value of the Lipschitz ratio $\lip ( \Ksafe ) / \lip ( \Kunsafe )$ to its lower bound established by \cref{res:sep}, when \cref{ass:stable,ass:commute,ass:regular} are met.
    For this plot the state, input, noise and action reward matrices $A$, $B$, $D$ and~$R$, respectively, are drawn from a random distribution that ensures:
    \emph{(i)}~\cref{ass:stable,ass:commute,ass:regular} are met;
    and
    \emph{(ii)}~the common term in the denominator of the lower bound from \cref{res:sep}, \ie, $\| A \|_{2}$, varies, while the remaining terms ($\| B \|_{2}$, $\| R^{-1} \|_{2}$ and~$\alignfactor$) are fixed.
    Right plot differs from left in that the random distribution it uses does not ensure \emph{(i)} and~\emph{(ii)} above, and accordingly, the estimated value of the Lipschitz ratio is shown without the lower bound (the latter is not applicable without \cref{ass:stable,ass:commute,ass:regular}).
    Both plots use dimension $\dstate = 4$ (low enough for Lipschitz constants to be estimated efficiently), and report standard deviations via error bars.
    Notice that the Lipschitz ratio is consistently greater than one, and that its lower bound, although not tight, captures the correct dependence on~$\| A \|_{2}$.
    For further results and details see \cref{app:exper,app:details:lip}, respectively.
    }
    \label{fig:lip_dim=4}
    \vspace{-4mm}
\end{figure}

\vspace{-2mm}
\subsection{CRM via LLM Agent}
\label{sec:exper:LLM}
\vspace{-1mm}

We demonstrate empirically that the conclusions of our theory (\cref{sec:analysis}) extend to a modern agentic setting in which an LLM agent (LLaMA-3.2 model~\citep{meta2024llama32}) is fine-tuned to perform data-handling and record-creation tasks in a realistic CRM benchmark (adaptation of \texttt{ST-WebAgentBench}~\citep{levy2024st}), while avoiding situations marked as high-risk and handling events defined as a risk materializing.
This setting, implemented using a text-based simulated web browser, is a special case of the formulation in \cref{sec:prelim:setup}, characterized as follows.
An observation~$\obf_{t}$ represents the current browser page (in AXTree format).
An action~$\ubf_{t}$ is a token sequence representing a user command (\eg, a click of a button or a text input). 
The task space~$\Theta$ consists of natural language descriptions of different CRM task templates (\eg, ``create an account under the name [NAME] with the email [EMAIL]'').
The reward is a binary score which is returned when the action is a terminal command, and indicates whether the specified task was completed successfully.
The agent's policy function is implemented by a pretrained LLaMA-3.2-1B-Instruct model~\citep{meta2024llama32}, which receives in its context window the CRM task description, the current browser page, the past browser pages, and the past user commands.
Safety is captured via standard CRM requirements that cover both risk avoidance (\eg, avoiding unmonitored high-risk actions such as saving personal information without notifying the user) and risk handling (\eg, detecting materialized input risks such as invalid emails and warning the user before proceeding).
The teacher policy function is implemented by GPT-5.2~\citep{openai2025gpt52}, either with safety requirements in its context window (safe teacher) or without (unsafe teacher).\note{%
We chose GPT-5.2 for the teacher policy function due to its high quality, and LLaMA-3.2-1B-Instruct for the agent's policy function since it is open source.
These choices introduce a slight deviation from the formulation in \cref{sec:prelim:setup}, where both policy functions are implemented by the same parametric mapping.
}
Further details concerning the setting can be found in \cref{app:details:LLM}.

\cref{tab:imitate} reports results of the experiment.
In line with our theory, it shows that although imitating safe and unsafe teachers on CRM tasks seen in training is similarly straightforward, generalizing across CRM tasks is considerably more difficult with the safe teacher.

\begin{table}[t]
    \vspace{-4mm}
    \caption{
In line with our theory, even when imitating safe and unsafe teachers on tasks seen in training is similarly straightforward, generalizing across tasks is considerably more difficult with the safe teacher.
Each row in the table reports imitation errors on tasks seen in training~($\Thetatr$) and tasks unseen in training~($\Thetate$), for an agent trained to imitate a safe or unsafe teacher.
First four rows correspond to the theoretically analyzed setting (linear-quadratic control with $\Hinf$-robustness; \cref{sec:prelim:setup_theory}), in the infinite sample limit (where imitation errors on~$\Thetatr$ are zero by definition) and in the finite sample regime.
The dimension used for these rows is $\dstate = 10$; for further details see \cref{app:details:lqr}.
Fifth and sixth rows correspond to a setting in which a neural network agent is trained to navigate a simulated quadcopter to a task-dependent location, while avoiding regions marked as high-risk.
This setting is described in \cref{sec:exper:quadcopter}, with further details in \cref{app:details:quadcopter}.
Last two rows correspond to a setting in which an LLM agent (LLaMA-3.2 model~\citep{meta2024llama32}) is fine-tuned to perform tasks in a realistic CRM benchmark~\citep{levy2024st}, while avoiding situations marked as high-risk and handling events defined as a risk materializing.
This setting is described in \cref{sec:exper:LLM}, with further details in \cref{app:details:LLM}.
Throughout the table, imitation errors are reported as mean $\pm$ standard deviation over independent random trials.
We highlight in boldface an imitation error if it is lower than its counterpart by a statistically significant margin (\ie, if it is below its counterpart minus the sum of their standard deviations).
For further results see \cref{app:exper}.
    }
    \centering
    \small
    \setlength{\tabcolsep}{10pt}
    \begin{tabular}{llcc}
    \toprule
    \textbf{Setting} & \textbf{Teacher} & \textbf{$\Thetatr$ Error} & \textbf{$\Thetate$ Error} \\
    \midrule
    \multirow{2}{*}{Linear-Quadratic, Infinite Sample} & Safe   & $-$ & $1265.53\pm 310.44$ \\
                                 & Unsafe & $-$ & $\mathbf{14.16 \pm 2.43}$ \\
    \midrule
    \multirow{2}{*}{Linear-Quadratic, Finite Sample} & Safe   & $2.16 \pm 0.83$ & $4018.10 \pm 604.55$ \\
                         & Unsafe & $1.36 \pm 0.99$ & $\mathbf{33.76 \pm 9.61}$ \\
    \midrule
    \multirow{2}{*}{Quadcopter Navigation} & Safe   & $0.61 \pm 0.26$ & $141.74 \pm 15.79$ \\
                                 & Unsafe & $0.35 \pm 0.14$ & $\mathbf{22.90 \pm 4.46}$ \\
    \midrule
    \multirow{2}{*}{CRM via LLM Agent} & Safe   & $16.48 \pm 3.46$ & $25.94 \pm 4.10$ \\
                         & Unsafe & $15.77 \pm 2.61$ & $\mathbf{16.93 \pm 2.14}$ \\
    \bottomrule
    \end{tabular}
    \label{tab:imitate}
    \vspace{-3mm}
\end{table}

	\vspace{-3mm}
\section{Limitations}
\label{sec:limit}
\vspace{-2mm}

It is important to acknowledge several limitations of our work.
First, we consider incorporating safety into a learned agent only via imitation learning, and while this approach is common, there are other approaches that we do not account for, \eg, those based on direct interaction with an environment.
Second, our theoretical analysis is restricted to linear-quadratic control, and as a notion of safety it considers only risk handling, not risk avoidance.
Third, the theoretically analyzed linear-quadratic control setting includes as its task parameter~$\theta$ only the state reward matrix~$Q$, not the action reward matrix~$R$ (the latter is regarded as fixed).
Fourth, our theory considers training in the infinite sample limit, and while this ensures that the established difficulty of safety to generalize across tasks is due to inherent properties of safety, it leaves finite sample effects theoretically unexplored.
Fifth, our theoretical results rely on a stability margin condition (\cref{ass:stable}), and, in the case of \cref{res:sep}, on a symmetric commutativity condition (\cref{ass:commute}) and a mild alignment regularity condition (\cref{ass:regular}).
Sixth, rather than directly characterizing generalization across tasks, our theoretical results use the Lipschitz constant as a surrogate, relying on existing literature to bridge the gap.
Seventh, there are cases where the inequality established by \cref{res:sep} (\cref{eq:sep}) does not guarantee that the Lipschitz constant of~$\Ksafe ( \cdot )$ is greater than that of~$\Kunsafe ( \cdot )$.

Our experiments go beyond most of the theoretical limitations above.
Indeed, \cref{sec:exper:lqr} demonstrates that in the theoretically analyzed linear-quadratic control setting, the estimated Lipschitz~constant of~$\Ksafe ( \cdot )$ is consistently greater than that of~$\Kunsafe ( \cdot )$, even when the conditions of \cref{res:sep} (\cref{ass:commute,ass:stable,ass:regular}) are violated.
\cref{sec:exper:lqr} also demonstrates that the separation between Lipschitz constants translates to a separation between cross-task generalization errors, whether training occurs in the infinite sample limit or in the finite sample regime. 
Finally, \cref{sec:exper:quadcopter,sec:exper:LLM} demonstrate that the separation between generalization errors extends to settings beyond linear-quadratic control, with risk avoidance in addition to risk handling.
Extending our theoretical analysis to match the generality of these empirical demonstrations is an important direction for future work.

	\vspace{-3mm}
\section{Related Work}
\label{sec:related}
\vspace{-2mm}

Multi-task agentic settings have been studied in RL and control theory, from early works on hierarchical and task-conditioned decision-making~\citep{dayan1993feudal,thrun1998learning,sutton1999between,dietterich2000hierarchical,baxter2000model}, to deep learning-era works on meta-RL~\citep{duan2016rl2,wang2016learning,finn2017model,rakelly2019efficient,zintgraf2019varibad,yu2020meta}.
Recently, multi-task agentic settings have been studied with transformers and LLMs, where an agent receives task specifications and environment observations in its context window, and produces actions via token sequences~\citep{melo2022transformers,laskin2022context,schick2023toolformer,jimenez2023swe,shen2023hugginggpt,valmeekam2023planbench,valmeekam2023planning,wang2023voyager,deng2023mind2web,zhou2023webarena,liu2023agentbench,mialon2023gaia,yang2024swe,qian2024chatdev,levy2024st,liu2026paying,ran2026outcome}.

Safety is an important pillar of RL and control theory~\citep{zhou1996robust,morimoto2005robust,nilim2005robust,bacsar2008h,lim2013reinforcement,garcia2015comprehensive,achiam2017constrained,dalal2018safe,tessler2018reward,ray2019benchmarking,altman2021constrained,luo2021mesa,zhang2021derivative,liu2022robustness,xu2022trustworthy,greenberg2023train,kim2023learning,cho2024constrained,toso2024meta,khattar2024cmdp,guan2024costaware,zubia2025robust,xu2025efficient,ni2026constrained}, where it is often studied through the notions of risk avoidance and risk handling: risk avoidance is typically formulated as constraint satisfaction~\citep{achiam2017constrained,dalal2018safe,tessler2018reward,ray2019benchmarking,altman2021constrained,luo2021mesa,kim2023learning,cho2024constrained,khattar2024cmdp,xu2025efficient,ni2026constrained}, whereas risk handling is typically formulated as robustness to model mismatch and adversarial attacks~\citep{zhou1996robust,morimoto2005robust,nilim2005robust,bacsar2008h,lim2013reinforcement,zhang2021derivative,liu2022robustness,greenberg2023train,toso2024meta,guan2024costaware,zubia2025robust}.
Safety is also central to LLM agents~\citep{gehman2020realtotoxicityprompts,dai2023safe,levy2024st,debenedetti2024agentdojo,zhang2024agent,yin2024safeagentbench,guo2024redcode}, where risk avoidance is often formulated via prohibited actions~\citep{gehman2020realtotoxicityprompts,bai2022constitutional,bai2022training,dai2023safe,zhang2024safetybench,yin2024safeagentbench}, and risk handling is often formulated via requirements to appropriately respond to risky interactions with humans or tools~\citep{levy2024st,debenedetti2024agentdojo,zhang2024agent,guo2024redcode}.
Incorporating safety into a learned agent is often approached via imitation learning~\citep{bai2022training,ouyang2022training,dai2023safe,swamy2024minimaximalist}, but there are other approaches as well, \eg, ones based on direct interaction with an environment~\citep{achiam2017constrained,dalal2018safe,yang2021safe,cho2024constrained}.

Generalization across tasks is a prominent area of study in multi-task agentic AI~\citep{packer2018assessing,cobbe2019quantifying,cobbe2020leveraging,shah2022goal,liu2026paying}.
Mounting empirical evidence suggests that, in RL and control theory as well as with LLM agents, safe behavior often fails to generalize across tasks, even in cases where the agent’s ability to execute does generalize~\citep{kenton2019generalizing,ren2024codeattack,mou2024sg,andriushchenko2024agentharm,zinjad2025can,kwon2026safety}.
To our knowledge, this paper is the first to provide theoretical support for such empirical evidence: it establishes a formal sense in which \emph{generalization across tasks is fundamentally more difficult with safety requirements than without, regardless of how well safety requirements are met on tasks seen in training}.

	\vspace{-3mm}
\section{Conclusion}
\label{sec:conclusion}
\vspace{-2mm}

AI agents are increasingly deployed in multi-task settings, where the task to be performed is specified only at test time, and successful deployment hinges on generalization to previously unseen tasks.
Empirical evidence suggests that in such settings, even when the ability to execute generalizes to unseen tasks, the ability to do so safely frequently does not.
This paper provides theory and experiments indicating that failures of agentic safety to generalize across tasks are not merely due to limitations of current training techniques, but reflect a fundamental property of safety itself: the relationship between a task and its safe execution is inherently more complex than the relationship between a task and its execution alone.

Our findings carry two important practical implications: 
\emph{(i)}~generalization to unseen tasks of an agent's ability to execute does not imply the same for its ability to do so safely;
and
\emph{(ii)}~the ability to safely execute tasks seen in training, even when it extends to unseen trajectories, does not imply the same for unseen tasks.
Cross-task generalization of safety should therefore be treated as a distinct problem in its own right, separate from cross-task generalization of execution ability and from in-task generalization of safety.

Future work should address the limitations discussed in \cref{sec:limit}.
More broadly, we believe that current efforts to enhance agentic safety through more training on a subset of tasks are insufficient, and it is essential to develop fundamentally different approaches.
A promising possibility suggested by our findings is to learn representations for task specifications and safety characteristics which simplify the relationship between them.
We hope that our work will spur progress along this line.

	\ifdefined\NEURIPS
		\begin{ack}
			This work was supported by the European Research Council (ERC) grant NN4C 101164614, a Google Research Scholar Award, a Google Research Gift, Meta, the Yandex Initiative in Machine Learning, the Israel Science Foundation (ISF) grant 1780/21, the Tel Aviv University Center for AI and Data Science, the Adelis Research Fund for Artificial Intelligence, Len Blavatnik and the Blavatnik Family Foundation, and Amnon and Anat Shashua.
Part of this research was performed while YS was visiting the Institute for Mathematical and  Statistical Innovation (IMSI), which is supported by the National Science Foundation (Grant No. DMS 2425650).
YS was supported in part by the Prof. R. Rahamimoff Travel Grant for Young Scientists of the U.S.-Israel Binational Science Foundation (BSF).
		\end{ack}
	\else
		\newcommand{\ack}{}
	\fi
	\ifdefined\ARXIV
		\section*{Acknowledgements}
		\ack
	\else
		\ifdefined\COLT
			\acks{\ack}
		\else
			\ifdefined\CAMREADY
				\ifdefined\ICLR
					\newcommand*{\subsuback}{}
				\fi
				\ifdefined\NEURIPS
				\else
					\section*{Acknowledgements}
					\ack
				\fi
			\fi
		\fi
	\fi
	

	\section*{References}
	{\small
		\ifdefined\ICML
			\bibliographystyle{icml2025}
		\else
			\bibliographystyle{plainnat}
		\fi
		\bibliography{refs}
	}

	\clearpage
	\appendix
	\crefalias{section}{appendix}
	\crefalias{subsection}{appendix}
	\crefalias{subsubsection}{appendix}
	\onecolumn
	
	\ifdefined\ENABLEENDNOTES
		\theendnotes
	\fi
	


	\section{Theoretical Background}
\label{app:background}

This appendix presents classical background on the unsafe/LQR and safe/$\Hinf$-robust mappings used throughout the paper. 
We refer to \cref{sec:prelim:setup_theory} for the corresponding special case of imitation learning, where the unsafe teacher ignores the risk-handling requirement and the safe teacher enforces it through $\Hinf$-robustness. 
We adopt the more familiar and equivalent control-theoretic objective formulations corresponding to the reward-based objectives. 
This leaves these mappings unchanged while rewriting the problems in the familiar minimization form. 
Accordingly, although \cref{sec:prelim:setup_theory} introduces $Q$ and $R$ as reward matrices, in this appendix we use the standard control-theoretic terminology and refer to them as cost matrices.

\subsection{Linear-Quadratic Control}
\label{app:background:lqr}

Consider the linear-quadratic setting from~\cref{sec:prelim:setup_theory}, with dynamics dictated by~\cref{eq:lin_dyn} and initial state $\xbf_0=\0$, where the risk-handling requirement is ignored. 
An objective formulation equivalent to the maximization problem presented there is the minimization of the expected quadratic cost given by
\begin{align}
    \label{eq:quad_cost}
    \EE\left[
    \sum_{t=0}^{\infty} \xbf_t^\top Q \xbf_t + \ubf_t^\top R \ubf_t
    \right]
    \text{\,,}
\end{align}
where the initial state is fixed at $\xbf_0=\0$, the states $\xbf_t$ evolve according to the dynamics defined in~\cref{eq:lin_dyn}, and the expectation is taken over the disturbance sequence $\{\wbf_t\}_{t=0}^{\infty}$. 
Since the disturbances are independent and have zero means and finite covariances, certainty-equivalence results for finite-horizon linear-quadratic problems imply that these disturbances do not change the minimizing policy when the system is noise-free: they affect the attained value, but not the minimizer (see \eg~\citep{zhou1996robust,bacsar2008h,green2012linear,bertsekas2019reinforcement}).
In order to utilize these results for the infinite-horizon case, we assume the summability of disturbance second moments, \ie, that $\EE[\sum_{t=0}^{\infty}\norm{\wbf_t}_2^2] \leq c$ for some constant $c\in\BR_{>0}$. 
At a high level, this assumption is used to exclude cases where the disturbances alone force the infinite-horizon cost to be infinite, regardless of the policy under consideration.
Without an integrability assumption of this kind, the infinite-horizon stochastic cost may become infinite in many nondegenerate systems of interest, making the comparison between policies vacuous.
Combined with the certainty-equivalence note above, the infinite-horizon stochastic problem is well posed and has the same minimizer as the corresponding noise-free system; see \eg~\citep{kwakernaak1972linear,anderson2007optimal,bertsekas2019reinforcement} for exact statements and formal justifications.

Classical results in control theory state that the optimal policy for the infinite-horizon cost in a noise-free system is given by the linear policy $\pi_{\phiunsafe}(\xbf;Q)=\Klqr(Q)\xbf$, where the LQR $\Klqr(Q)$ is characterized by the discrete-time algebraic Riccati equation (DARE) (see \eg~\citep{zhou1996robust,bacsar2008h,green2012linear,bertsekas2019reinforcement}).
Concretely, define for any symmetric matrix $P\in\R^{\dstate,\dstate}$ the matrix $X(P)$ as
\begin{align}
    \label{eq:def_X}
    X(P) := R + B^\top P B
    \text{\,.}
\end{align}
Then the PSD matrix $\Plqr(Q)$ is characterized as the (unique, stabilizing) solution to the discrete-time algebraic Riccati equation (DARE) given by
\begin{align}
    \label{eq:dare}
    P = A^\top P A + Q - A^\top P B  X(P)^{-1} B^\top P A
    \text{\,.}
\end{align}
Given $\Plqr(Q)$, the associated LQR $\Klqr(Q)$ is given by
\begin{align}
    \label{eq:klqr}
    \Klqr(Q) := -X\left(\Plqr(Q)\right)^{-1} B^\top \Plqr(Q) A
    \text{\,.}
\end{align}
Under our standing assumptions in \cref{sec:prelim:setup_theory}, classical theory guarantees existence of a unique stabilizing positive semidefinite solution $\Plqr(Q)$ to~\cref{eq:dare}, and so the induced linear policy $\pi_{\phiunsafe}(\xbf;Q)=\Klqr(Q)\xbf$ achieves the unique optimum of the unsafe problem.

\subsection{$\Hinf$-Robust Control}
\label{app:background:hinf}

Consider now the same linear-quadratic setting from \cref{sec:prelim:setup_theory}, but where the risk handling requirement is enforced. 
For a fixed state cost matrix $Q\in\Theta$, an objective formulation equivalent to the reward-based objective is the minimization of the classical cost-based formulation given by
\begin{align}
    \label{eq:robust_cost}
    \sup_{\substack{\wbf_{t}\\ \sum_{t=0}^{\infty} \norm{\wbf_t}_2^2\leq c}} \sum_{t=0}^{\infty} \xbf_t^\top Q \xbf_t + \ubf_t^\top R \ubf_t
    \text{\,,}
\end{align}
where the initial state is fixed at $\xbf_0=\0$, and the states $\xbf_t$ evolve according to the disturbed dynamics defined by assigning the deterministic disturbances to the dynamics in~\cref{eq:lin_dyn}. 
An equivalent unconstrained yet normalized formulation is
\begin{align}
    \label{eq:robust_cost_norm}
    \sup_{\wbf_{t}} \frac{\sum_{t=0}^{\infty} \xbf_t^\top Q \xbf_t + \ubf_t^\top R \ubf_t}{\sum_{t=0}^{\infty} \norm{\wbf_t}_2^2}
    \text{\,,}
\end{align}
where the initial state $\xbf_{0}$ and the state evolution are as defined above. 
This will be the formulation we use throughout the remainder of the paper.
Note that in all ratio-based formulations, the supremum is taken over disturbance sequences that are not identically zero.
We next define the feasibility of a robustness level directly in terms of the objective in~\cref{eq:robust_cost_norm}.

\paragraph{Feasibility at level $\gamma$ and the optimal robustness level $\gammainf(Q)$.}
For $\gamma\ge 0$, we say that $\gamma$ is \emph{feasible} for task $Q$ if there exists a sequence of control inputs $(\ubf_{t})_{t=0}^{\infty}$ such that
\begin{align}
    \label{eq:gamma_feas}
    \sup_{\wbf_{t}} \frac{\sum_{t=0}^{\infty} \xbf_t^\top Q \xbf_t + \ubf_t^\top R \ubf_t}{\sum_{t=0}^{\infty} \norm{\wbf_t}_2^2}
    \le
    \gamma^2
    \text{\,,}
\end{align}
where the state evolution is as in \cref{eq:robust_cost_norm}.
The \emph{optimal} (minimal achievable) robustness level for task $Q$ is therefore defined as
\begin{align}
    \label{eq:gamma_inf}
    \gammainf(Q)^2
    :=
    \min_{(\ubf_{t})_{t=0}^{\infty}}
    \sup_{\wbf_{t}} \frac{\sum_{t=0}^{\infty} \xbf_t^\top Q \xbf_t + \ubf_t^\top R \ubf_t}{\sum_{t=0}^{\infty} \norm{\wbf_t}_2^2}
    \text{\,.}
\end{align}
Equivalently, $\gamma$ is feasible for $Q$ if and only if $\gamma \ge \gammainf(Q)$.

\paragraph{Auxiliary level-$\gamma$ game objective.}
A standard route to $\Hinf$ synthesis replaces the ratio formulation by an equivalent \emph{level-$\gamma$} objective. 
Concretely, for a fixed $\gamma\ge 0$, consider the control problem obtained by comparing performance against disturbance energy through the quadratic functional
\begin{align}
    \label{eq:level_gamma_game}
    \sup_{\wbf_{t}}
    \sum_{t=0}^{\infty}
    \xbf_t^\top Q \xbf_t + \ubf_t^\top R \ubf_t - \gamma^2 \norm{\wbf_t}_2^2
    \text{\,,}
\end{align}
subject to the disturbed dynamics defined above. 
Classical robust control theory shows that the feasibility condition in~\cref{eq:gamma_feas} is equivalent to existence of an optimal solution of the level-$\gamma$ synthesis problem associated with~\cref{eq:level_gamma_game}. 
Thus, finding a minimal $\gamma$ with a non-trivial solution corresponds to finding $\gammainf(Q)$~\citep{zhou1996robust,bacsar2008h,green2012linear}.

\paragraph{DGARE and the Riccati-based $\Hinf$ mapping at feasible $\gamma$.}
Let $P\in\R^{\dstate,\dstate}$ be a PSD matrix. For the level-$\gamma$ dynamic game, define the block matrix $H(P;\gamma)$ and the matrix $F(P)$ as
\begin{align}
    \label{eq:def_HF}
    H(P;\gamma)
    &:=
    \begin{pmatrix}
        R+B^\top P B & B^\top P D\\
        D^\top P B & D^\top P D-\gamma^2 I
    \end{pmatrix},
    \qquad
    F(P)
    :=
    \begin{pmatrix}
        B^\top P A\\
        D^\top P A
    \end{pmatrix}
    \text{\,.}
\end{align}
The discrete-time game algebraic Riccati equation (DGARE) of the level-$\gamma$ problem is given by
\begin{align}
    \label{eq:dgare}
    P = Q + A^\top P A - F(P)^\top H(P;\gamma)^{-1}F(P)
    \text{\,.}
\end{align}
The required definiteness conditions are
\begin{align*}
    R+B^\top P B \succ 0,
    \qquad
    D^\top P D-\gamma^2 I \prec 0,
\end{align*}
so that $H(P;\gamma)$ is nonsingular and the gains are well defined.
Classical $\Hinf$ synthesis results imply that whenever $\gamma$ is feasible for $Q$ (equivalently, $\gamma\ge \gammainf(Q)$), the DGARE in~\cref{eq:dgare} admits a unique stabilizing PSD solution, denoted $\Pinf(Q;\gamma)$, satisfying these definiteness conditions, and yields corresponding optimal level-$\gamma$ control and worst-case disturbance matrices via
\begin{align}
    \label{eq:kinf_level_gamma}
    \begin{pmatrix}
        K_u(Q;\gamma)\\
        K_w(Q;\gamma)
    \end{pmatrix}
    :=
    -H\left(\Pinf(Q;\gamma);\gamma\right)^{-1}F\left(\Pinf(Q;\gamma)\right),
    \qquad
    \Kinf(Q;\gamma):=K_u(Q;\gamma)
    \text{\,.}
\end{align}
The induced control policy $\pi_{\phisafe}(\xbf;Q) = \Kinf(Q;\gamma)\xbf$ achieves robustness level $\gamma$ for task $Q$ in the sense of~\cref{eq:gamma_feas}, while $K_w(Q;\gamma)$ gives the associated worst-case disturbance response in the auxiliary level-$\gamma$ game of~\cref{eq:level_gamma_game}~\citep{zhou1996robust,bacsar2008h,green2012linear}.
In the language of the main text, $\Kinf(Q;\gamma)$ is the safe teacher at robustness level $\gamma$.

\paragraph{Defining $\Kinf(Q)$ via DGARE at $\gammainf(Q)$.}
The ratio problem in~\cref{eq:gamma_inf} may admit multiple optimal matrices attaining $\gammainf(Q)$. 
In our paper, we fix a canonical choice for the safe teacher by selecting the DGARE-synthesized mapping given for the minimal robustness level $\gammainf(Q)$:
\begin{align}
    \label{eq:kinf}
    \Pinf(Q) := \Pinf(Q;\gammainf(Q))\,,
    \qquad
    \Kinf(Q) := \Kinf\left(Q;\gammainf(Q)\right)
    \text{\,,}
\end{align}
and we use the corresponding safe/$\Hinf$-robust mapping $Q\mapsto \Kinf(Q)$ as introduced in \cref{sec:prelim:setup_theory}.
This definition is well-posed whenever $\gammainf(Q)$ is feasible and the DGARE stabilizing solution is unique, as guaranteed by the classical synthesis theorem cited above.

	\section{Proof of \cref{res:ub}}
\label{app:lqr}

In this appendix, we prove \cref{res:ub}, namely the Lipschitz upper bound for the unsafe/LQR mapping $Q\mapsto \Klqr(Q)$. The outline of the proof is as follows. \cref{app:lqr:fixed_point} introduces the operator induced by the DARE (\cref{eq:dare}) and shows that its unique fixed point $\Plqr(Q)$ is of bounded norm for any $Q\in\Theta$. \cref{app:lqr:riccati_bound} proceeds to give a bound on the Lipschitz constant of the mapping $Q\mapsto \Plqr(Q)$ induced by the Riccati operator. \cref{app:lqr:controller_bound} concludes by transitioning from the Lipschitz bound on the Riccati solution mapping to one on the unsafe mapping $\Klqr(Q)$.

\subsection{Riccati Operator Fixed Point}
\label{app:lqr:fixed_point}

We begin by defining the Riccati operator induced by the DARE (\cref{eq:dare}).
\begin{definition}
    \label{def:ric_op}
    For any $Q\in\Theta$, we denote the \emph{Riccati operator} by $\T_{Q}:\BR^{\dstate,\dstate}\to\BR^{\dstate,\dstate}$ and define it for any $P\in\BR^{\dstate,\dstate}$ by
    \begin{align*}
        \T_{Q}(P) := A^\top P A + Q - A^{\top} P B X(P)^{-1} B^\top P A
        \text{\,,}
    \end{align*}
    where $X(P)$ is as defined in \cref{eq:def_X}.
\end{definition}

Our focus will be on the unique fixed point of the Riccati operator. For completeness, we give the formal notion of fixed points.
\begin{definition}
    \label{def:fixed_point}
    Let $\T:\BR^{\dstate,\dstate}\to\BR^{\dstate,\dstate}$. We say that $P\in\BR^{\dstate,\dstate}$ is a \emph{fixed point} of $\T$ when
    \begin{align*}
        \T(P) = P
        \text{\,.}
    \end{align*}
\end{definition}

Importantly, any solution to the DARE (\cref{eq:dare}) is a fixed point of the respective Riccati operator. A classical result in control theory~\citep{zhou1996robust,bacsar2008h,green2012linear} states that under our standing assumptions from \cref{sec:prelim:setup_theory} (specifically, that $\Anorm<1$), the DARE has a unique stabilizing solution for any $Q\in\Theta$, which is used to define the unsafe/LQR teacher.
To obtain a Lipschitz bound on the mapping to these solutions, we'll characterize these unique fixed points in a form that is more amenable to analysis.
We begin by showing that for any $Q\in\Theta$, the unique fixed point of the Riccati operator $\T_{Q}(\cdot)$ lies within the set $\PP$ defined as
\begin{align}
    \label{eq:def_PP}
    \PP := \left\lbrace P\in\BR^{\dstate,\dstate} : P\succeq 0 \text{ and } \norm{P}_{2} \leq \frac{1}{1-\Anorm^{2}}\right \rbrace
    \text{\,.}
\end{align}

The following two lemmas establish that the Riccati operators self-map on $\PP$. First, we show that the Riccati operator yields PSD matrices on $\PP$.
\begin{lemma}
    \label{lemma:ric_op_psd}
    For any $Q\in\Theta$ it holds that the Riccati operator $\T_{Q}(\cdot)$ (\cref{def:ric_op}) satisfies
    \begin{align*}
        \T_{Q}(P)\succeq 0
    \end{align*}
    for any $P\in\PP$.
\end{lemma}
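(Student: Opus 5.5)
The plan is to rewrite the subtracted term so that $\T_{Q}(P)$ becomes a sum of manifestly positive semidefinite pieces. Fix $Q\in\Theta$ and $P\in\PP$. Then $P\succeq 0$ and $R\succ 0$, so $X(P)=R+B^\top P B\succ 0$ is invertible and $\T_{Q}(P)$ is well defined. Because $Q\succeq 0$ by definition of $\Theta$, it suffices to show that
\begin{align*}
A^\top \big( P - P B X(P)^{-1} B^\top P \big) A \succeq 0 \text{\,.}
\end{align*}
This matrix is a congruence of the bracketed middle matrix by $A$, so it is enough to show that $P - PBX(P)^{-1}B^\top P\succeq 0$.

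I will establish this through a completion of squares, or equivalently a Schur complement. For any vector $\vbf$, consider $\ybf:=P^{1/2}\vbf$, and write
\begin{align*}
P - PBX(P)^{-1}B^\top P = P^{1/2}\big( I - P^{1/2}B (R + B^\top P B)^{-1} B^\top P^{1/2} \big) P^{1/2} \text{\,.}
\end{align*}
Setting $M:=P^{1/2}B$, the inner matrix is $I - M(R+M^\top M)^{-1}M^\top$. By the Woodbury identity this equals $(I + M R^{-1} M^\top)^{-1}$, which is positive definite. Congruence by $P^{1/2}$ then preserves positive semidefiniteness.

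As an alternative check that avoids Woodbury, I can use the block matrix
\begin{align*}
\begin{pmatrix} P & PB \\ B^\top P & R + B^\top P B\end{pmatrix} = \begin{pmatrix} I \\ B^\top \end{pmatrix} P \begin{pmatrix} I & B\end{pmatrix} + \begin{pmatrix} 0 & 0\\ 0 & R\end{pmatrix} \succeq 0 \text{\,.}
\end{align*}
Its lower-right block is positive definite, so its Schur complement $P - PBX(P)^{-1}B^\top P$ is positive semidefinite.

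There is no genuine obstacle here. The only care needed is to confirm invertibility of $X(P)$, which requires $P\succeq 0$, and to note that the norm bound defining $\PP$ plays no role in this lemma; it will matter only for the companion self-mapping bound. Combining the pieces gives $\T_{Q}(P) = Q + A^\top(\cdot)A \succeq 0$.
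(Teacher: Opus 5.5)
Your proof is correct. Your fallback Schur-complement argument is essentially the paper's. The paper forms the full block matrix with blocks $A^\top P A + Q$, $A^\top P B$, $B^\top P A$, $X(P)$. It writes this matrix as $\begin{pmatrix} A^\top \\ B^\top \end{pmatrix} P \begin{pmatrix} A & B \end{pmatrix}$ plus the $Q^{1/2}$ and $R^{1/2}$ Gram terms. It then reads off $\T_Q(P)$ directly as the Schur complement of $X(P)$. So the paper needs no separate step to peel off $Q$ and apply the congruence by $A$; your version is the special case $A=I$, $Q=0$, followed by that congruence.

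Your primary Woodbury route is a genuinely different argument. It gives the explicit identity $P - PBX(P)^{-1}B^\top P = P^{1/2}\big(I + P^{1/2}BR^{-1}B^\top P^{1/2}\big)^{-1}P^{1/2}$, a positive definite matrix sandwiched between $P^{1/2}$ factors. This makes $0 \preceq P - PBX(P)^{-1}B^\top P \preceq P$ visible at a glance, and also shows strict definiteness when $P\succ 0$. The cost is invoking Woodbury and a matrix square root, where the paper needs only a single Schur-complement fact (\cref{lemma:Schur_comp}).

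You are right that only $P\succeq 0$ is used, not the norm bound in $\PP$; the paper's proof likewise uses only positive semidefiniteness. The sentence introducing $\ybf := P^{1/2}\vbf$ is never used and can be dropped.
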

\begin{proof}
    Consider the block matrix $M$ defined as
    \begin{align*}
        M=
        \begin{pmatrix}
            A^{\top}PA+Q\quad &A^{\top}PB\\
            B^{\top}PA\quad &X(P)
        \end{pmatrix}
        =
        \begin{pmatrix}
            A^{\top}PA+Q\quad &A^{\top}PB\\
            B^{\top}PA\quad & R+B^{\top}PB
        \end{pmatrix}
        \text{\,.}
    \end{align*}
    It holds that
    \begin{align*}
        M=
        \begin{pmatrix}
            A^{\top}\\
            B^{\top}
        \end{pmatrix}
        P
        \begin{pmatrix}
            A\quad B
        \end{pmatrix}
        \;+\;
        \begin{pmatrix}
            Q^{\frac{1}{2}}\\
            \0
        \end{pmatrix}
        \begin{pmatrix}
            Q^{\frac{1}{2}}\quad \0
        \end{pmatrix}
        \;+\;
        \begin{pmatrix}
            \0\\
            R^{\frac{1}{2}}
        \end{pmatrix}
        \begin{pmatrix}
            \0 \quad R^{\frac{1}{2}}
        \end{pmatrix}
        \text{\,,}
    \end{align*}
    where $\0$ denotes the all-zeros matrix of appropriate dimensions.
    Since $P,Q$ and $R$ are all PSD, the matrix $M$ is also PSD as a sum of PSD matrices. Additionally, the matrix $X(P)$ (as defined in \cref{eq:def_X}) is PD as a sum of a PD matrix and a PSD matrix. Therefore by \cref{lemma:Schur_comp}, the Schur complement of the block $X(P)$ of $M$, given by
    \begin{align*}
        A^{\top}PA+Q-A^{\top}PBX(P)^{-1}B^{\top}PA=\T_{Q}(P)
        \text{\,,}
    \end{align*}
    is PSD, thus completing the proof.
\end{proof}

Next, we show that the Riccati operators also yield norm-bounded matrices on $\PP$.
\begin{lemma}
    \label{lemma:ric_op_bound}
    For any $Q\in\Theta$ it holds that the Riccati operator $\T_{Q}(\cdot)$ (\cref{def:ric_op}) satisfies
    \begin{align*}
        \norm{\T_{Q}(P)}_{2} \leq \frac{1}{1-\Anorm^{2}}
        \text{\,.}
    \end{align*}
    for any $P\in\PP$.
\end{lemma}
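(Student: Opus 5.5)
The plan is to bound $\T_Q(P)$ from above in the Loewner order by dropping the subtracted term, and then to control the norm of what remains using the definition of $\PP$ together with $\norm{Q}_F \leq 1$.

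\textbf{Step 1: the subtracted term is PSD.} Since $P \in \PP$ satisfies $P \succeq 0$ and $R \succ 0$, the matrix $X(P) = R + B^\top P B$ is positive definite. Hence $X(P)^{-1} \succ 0$. Because $P$ is symmetric, the matrix $A^\top P B X(P)^{-1} B^\top P A$ can be written as $C^\top X(P)^{-1} C$ with $C := B^\top P A$. It is therefore PSD.

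\textbf{Step 2: Loewner sandwich.} Dropping this PSD term gives
\[
\T_Q(P) \preceq A^\top P A + Q .
\]
By \cref{lemma:ric_op_psd} we also have $\T_Q(P) \succeq 0$. For symmetric matrices with $0 \preceq M_1 \preceq M_2$, the spectral norm equals the largest eigenvalue and is monotone in the Loewner order. It follows that
\[
\norm{\T_Q(P)}_2 \leq \norm{A^\top P A + Q}_2 .
\]
I expect Step 2 to be the main subtlety, since monotonicity of the spectral norm requires the lower bound $\T_Q(P) \succeq 0$; this is exactly why \cref{lemma:ric_op_psd} was proved first.

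\textbf{Step 3: norm estimate.} By the triangle inequality and submultiplicativity,
\[
\norm{A^\top P A + Q}_2 \leq \Anorm^2 \norm{P}_2 + \norm{Q}_2 .
\]
The membership $P \in \PP$ gives $\norm{P}_2 \leq 1/(1 - \Anorm^2)$. The membership $Q \in \Theta$ gives $\norm{Q}_2 \leq \norm{Q}_F \leq 1$. Combining these,
\[
\norm{\T_Q(P)}_2 \leq \frac{\Anorm^2}{1 - \Anorm^2} + 1 = \frac{1}{1 - \Anorm^2},
\]
which is the claim. The last equality is routine algebra, and the standing assumption $\Anorm \in (0,1)$ ensures the denominator is positive.
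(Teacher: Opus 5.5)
Your proposal is correct and follows essentially the same route as the paper. The paper likewise notes that $A^\top P B X(P)^{-1} B^\top P A \succeq 0$ and combines this with $\T_{Q}(P)\succeq 0$ from \cref{lemma:ric_op_psd}, via Loewner monotonicity of the spectral norm (its \cref{lemma:psd_order}), to get $\norm{\T_{Q}(P)}_{2}\leq\norm{A^\top P A+Q}_{2}$; it then finishes with the triangle inequality, $\norm{P}_{2}\leq 1/(1-\Anorm^{2})$ and $\norm{Q}_{2}\leq\norm{Q}_{F}\leq 1$, exactly as you do.
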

\begin{proof}
    Since $P$ is PSD, the matrix $A^{\top}PA+Q$ is PSD as a sum of PSD matrices, and the matrix $\T_{Q}(P)$ is PSD by \cref{lemma:ric_op_psd}.
    Next, since $R$ is PD, the matrix $X(P)^{-1}=(R+B^{\top}PB)^{-1}$ is PSD and so is the matrix $A^{\top}PBX(P)^{-1}B^{\top}PA$.
    Therefore, noting that rearranging the terms in the expression for $\T_{Q}(P)$ yields
    \begin{align*}
        A^{\top}PBX(P)^{-1}B^{\top}PA=A^{\top}PA+Q-\T_{Q}(P)
        \text{\,,}
    \end{align*}
    we obtain by \cref{lemma:psd_order} and the triangle inequality that
    \begin{align*}
        \norm{\T_{Q}(P)}_{2}\leq \norm{A^{\top}PA+Q}_{2} \leq \norm{A^{\top}PA}_{2}+\norm{Q}_{2}
        \text{\,.}
    \end{align*}
    Lastly, recalling that $\norm{Q}_{2}\leq \norm{Q}_{F}\leq 1$ and $\norm{P}_{2}\leq \frac{1}{1-\Anorm^{2}}$ we obtain by the Cauchy-Schwarz inequality that
    \begin{align*}
        \norm{\T_{Q}(P)}_{2}\leq 1+\frac{\Anorm^{2}}{1-\Anorm^{2}}=\frac{1}{1-\Anorm^{2}}
    \end{align*}
    as required.
\end{proof}

\cref{lemma:ric_op_psd,lemma:ric_op_bound} establish that the Riccati operator $\T_{Q}(\cdot)$ (\cref{def:ric_op}) is a self-map on $\PP$:
\begin{corollary}
    \label{corollary:ric_op_self_map}
    For any $Q\in\Theta$ it holds that $\T_{Q}(P)\in\PP$ for any $P\in\PP$.
\end{corollary}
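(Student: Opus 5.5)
This corollary follows by directly combining the two preceding lemmas with the definition of the set~$\PP$ in \cref{eq:def_PP}. Fix an arbitrary $Q \in \Theta$ and an arbitrary $P \in \PP$. Membership of $\T_{Q}(P)$ in~$\PP$ requires two properties: positive semidefiniteness, and the spectral norm bound $\norm{\T_{Q}(P)}_{2} \leq 1/(1-\Anorm^{2})$. I will verify each in turn.

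For positive semidefiniteness, I will invoke \cref{lemma:ric_op_psd}, which gives $\T_{Q}(P) \succeq 0$ for every $P \in \PP$. The Schur complement argument there needs $X(P) = R + B^\top P B$ to be invertible, and this holds because $R \succ 0$ and $P \succeq 0$, so $\T_{Q}(P)$ is well defined on all of~$\PP$. One minor point: a PSD matrix is understood here to be symmetric. Symmetry of $\T_{Q}(P)$ follows from symmetry of $P$ and $Q$ together with the form of the operator in \cref{def:ric_op}.

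For the norm bound, I will invoke \cref{lemma:ric_op_bound}, which gives exactly $\norm{\T_{Q}(P)}_{2} \leq 1/(1-\Anorm^{2})$ for all $P \in \PP$. This uses the standing assumption $\Anorm \in (0,1)$, which makes the bound finite and positive, and the fact that $\norm{Q}_{2} \leq \norm{Q}_{F} \leq 1$ for $Q \in \Theta$.

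Both defining conditions of~$\PP$ therefore hold for $\T_{Q}(P)$. Since $Q$ and $P$ were arbitrary, $\T_{Q}$ maps $\PP$ into itself for every $Q \in \Theta$. There is no genuine obstacle; the only care needed is the symmetry and well-definedness point above, and the lemmas as stated already cover it.
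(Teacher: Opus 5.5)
Your proposal is correct and matches the paper, which states this corollary as an immediate consequence of \cref{lemma:ric_op_psd,lemma:ric_op_bound}: positive semidefiniteness and the spectral norm bound are exactly the two defining conditions of~$\PP$. Your added remarks on symmetry and on the invertibility of $X(P)$ are harmless and accurate, though they are not needed beyond what the two lemmas already provide.
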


\subsection{Lipschitz Bound on Riccati Solution}
\label{app:lqr:riccati_bound}

We continue from \cref{app:lqr:fixed_point} by showing that the Riccati operator is a contraction on $\PP$. 
This is done to establish that its unique fixed point is also within $\PP$, later allowing us to derive a Lipschitz bound on the mapping from task $Q$ to the unique fixed point of its corresponding Riccati operator. 
We begin by formally defining the notion of a contraction.
\begin{definition}
    \label{def:cont}
    Let $\left(\Y, \dist(\cdot,\cdot)\right)$ be a metric space. We say that $f:\Y\to\Y$ is a \emph{contraction} on the metric space $\left(\Y, \dist(\cdot,\cdot)\right)$ with contraction constant $\gamma\in[0,1)$ when it holds that
    \begin{align*}
        \dist\left(f(\ybf_{1}), f(\ybf_{2})\right)\leq \gamma\dist(\ybf_{1}, \ybf_{2})
        \text{\,.}
    \end{align*}
    for any $\ybf_{1}, \ybf_{2}\in\Y$ with $\ybf_{1}\neq \ybf_{2}$.
\end{definition}

Before moving on to analyze the contraction properties of the Riccati operator $\T_{Q}(\cdot)$, we translate \cref{ass:stable} into a more amenable form used later in the proof.
\begin{lemma}
    \label{lemma:norm_A_transform}
    If \cref{ass:stable} holds, then it holds that
    \begin{align*}
        \gamma:=\Anorm^{2}\left(1+\frac{\Bnorm^{2}\Rinvnorm}{1-\Anorm^{2}}\right)^{2} < \frac{1}{2}
        \text{\,.}
    \end{align*}
\end{lemma}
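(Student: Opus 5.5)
The plan is a direct algebraic manipulation: rewrite \cref{ass:stable} in a simpler equivalent form, then bound $\gamma$ by monotonicity. Throughout, write $a := \Anorm \in (0,1)$ and $s := \Bnorm^{2}\Rinvnorm > 0$. Set $c := \sqrt{2}(1+s)$.

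\textbf{Step 1: simplify the assumption.} The right-hand side of \cref{eq:stability} is $1 - \frac{1}{1 + c^{-1}} = 1 - \frac{c}{c+1} = \frac{1}{1+c}$. So \cref{ass:stable} says exactly $a(1+c) < 1$. Rearranging gives
\begin{align*}
    \sqrt{2}\, a(1+s) < 1 - a
    \text{\,.}
\end{align*}

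\textbf{Step 2: bound the inner factor.} Since $\gamma \geq 0$, it suffices to show that $\sqrt{\gamma} = a\big(1 + \frac{s}{1-a^{2}}\big) < \frac{1}{\sqrt{2}}$. Because $0 < 1-a^{2} \leq 1$, we have $1 \leq \frac{1}{1-a^{2}}$, and therefore
\begin{align*}
    a\Big(1 + \frac{s}{1-a^{2}}\Big) \leq \frac{a(1+s)}{1-a^{2}}
    \text{\,.}
\end{align*}
Applying Step 1 to the numerator and factoring $1-a^{2} = (1-a)(1+a)$ yields
\begin{align*}
    \frac{a(1+s)}{1-a^{2}} < \frac{1-a}{\sqrt{2}\,(1-a)(1+a)} = \frac{1}{\sqrt{2}(1+a)} \leq \frac{1}{\sqrt{2}}
    \text{\,.}
\end{align*}
The strict inequality comes from Step 1. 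Cancelling $1-a$ is legitimate because $a < 1$.

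\textbf{Step 3: conclude.} Squaring the nonnegative quantities on both sides gives $\gamma < \frac{1}{2}$.

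There is no real obstacle here. The only step needing care is spotting the simplification $1 - \frac{1}{1+c^{-1}} = \frac{1}{1+c}$. After that, the crude bound $1 \leq (1-a^{2})^{-1}$ makes the factor $1-a$ cancel cleanly.
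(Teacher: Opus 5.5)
Your proof is correct and follows essentially the same route as the paper: both rewrite \cref{ass:stable} as $\sqrt{2}\,a(1+s) < 1-a$, bound $a\bigl(1+\frac{s}{1-a^{2}}\bigr)$ by $\frac{1}{\sqrt{2}}$, and then square. The only cosmetic difference is in the middle step. The paper subtracts $a^{2}$ from the left side, divides by $1-a$, and then uses $1-a^{2} > 1-a$. You instead use $1 \leq \frac{1}{1-a^{2}}$ and cancel $1-a$ directly, which gives the slightly sharper bound $\frac{1}{\sqrt{2}(1+a)}$.
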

\begin{proof}
    By \cref{ass:stable} we have that
    \begin{align*}
        \Anorm &< 1 - \frac{1}{1 + \big( \sqrt{2} + \sqrt{2} \Bnorm^{2} \Rinvnorm \big)^{-1}}\\
        &=1 - \frac{\sqrt{2} + \sqrt{2} \Bnorm^{2} \Rinvnorm}{\sqrt{2} + \sqrt{2} \Bnorm^{2} \Rinvnorm+1}\\
        &=\frac{1+\sqrt{2} + \sqrt{2} \Bnorm^{2} \Rinvnorm- (\sqrt{2} + \sqrt{2} \Bnorm^{2} \Rinvnorm)}{1+\sqrt{2} + \sqrt{2} \Bnorm^{2} \Rinvnorm}\\
        &=\frac{1}{1+\sqrt{2} + \sqrt{2} \Bnorm^{2} \Rinvnorm}\\
        &=\frac{1/\sqrt{2}}{1+1/\sqrt{2}+\Bnorm^{2}\Rinvnorm}
        \text{\,.}
    \end{align*}
    Therefore,
    \begin{align*}
        \left(1+\frac{1}{\sqrt{2}}+\Bnorm^{2}\Rinvnorm\right)\Anorm<\frac{1}{\sqrt{2}}
        \text{\,,}
    \end{align*}
    which implies
    \begin{align*}
        \Anorm\left(1+\Bnorm^{2}\Rinvnorm\right)<\frac{1-\Anorm}{\sqrt{2}}
        \text{\,.}
    \end{align*}
    Subtracting $\Anorm^{2}$ from the left-hand side yields
    \begin{align*}
        \Anorm\left(1-\Anorm+\Bnorm^{2}\Rinvnorm\right)<\frac{1-\Anorm}{\sqrt{2}}
        \text{\,.}
    \end{align*}
    Dividing by $1-\Anorm>0$ gives
    \begin{align*}
        \Anorm\left(1+\frac{\Bnorm^{2}\Rinvnorm}{1-\Anorm}\right)<\frac{1}{\sqrt{2}}
        \text{\,.}
    \end{align*}
    Since $1-\Anorm^{2}=(1-\Anorm)(1+\Anorm)>1-\Anorm$, we have
    \begin{align*}
        \frac{\Bnorm^{2}\Rinvnorm}{1-\Anorm^{2}}<\frac{\Bnorm^{2}\Rinvnorm}{1-\Anorm}
        \text{\,.}
    \end{align*}
    Hence
    \begin{align*}
        \Anorm\left(1+\frac{\Bnorm^{2}\Rinvnorm}{1-\Anorm^2}\right)<\frac{1}{\sqrt{2}}
        \text{\,.}
    \end{align*}
    Finally, since both sides are positive, squaring preserves the inequality yielding that
    \begin{align*}
        \Anorm^2\left(1+\frac{\Bnorm^{2}\Rinvnorm}{1-\Anorm^2}\right)^2<\frac{1}{2}
    \end{align*}
    as required.
\end{proof}

Having established the condition implied by \cref{ass:stable}, we turn to analyzing the contraction properties of the Riccati operator itself.
\cref{lemma:ric_op_cont} below proves that it is a contraction on the metric space given by the set $\PP$ equipped with the Frobenius distance metric.
\begin{lemma}
    \label{lemma:ric_op_cont}
    Let $\dist_{\PP}:\PP\times\PP\to\R_{\geq 0}$ be the Frobenius distance defined for any $P_{1}, P_{2}\in\PP$ as
    \begin{align*}
        \dist_{\PP}(P_{1}, P_{2}) := \norm{P_{1}-P_{2}}_{F}
        \text{\,.}
    \end{align*}
    For any $Q\in\Theta$ it holds that the Riccati operator $\T_{Q}(\cdot)$ (\cref{def:ric_op}) is a contraction on $\left(\PP,\dist_{\PP}(\cdot,\cdot)\right)$ (\cref{def:cont}) with contraction constant $\gamma\in[0,1)$ given by
    \begin{align*}
        \gamma = \Anorm^{2}\left(1+\frac{\Bnorm^{2}\Rinvnorm}{1-\Anorm^{2}}\right)^{2}
        \text{\,.}
    \end{align*}
\end{lemma}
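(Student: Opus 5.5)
The plan is to write the difference $\T_{Q}(P_{1})-\T_{Q}(P_{2})$ as $A^{\top}(\cdot)A$ applied to a two-sided product around $P_{1}-P_{2}$, and then bound each factor in spectral norm using $P_{i}\in\PP$. Note first that $Q$ cancels in the difference, so the bound is uniform over $Q\in\Theta$. Write $\T_{Q}(P)=Q+A^{\top}G(P)A$ with
\[
G(P):=P-PBX(P)^{-1}B^{\top}P .
\]

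\textbf{Step 1: rewrite $G$.} Set $S:=BR^{-1}B^{\top}\succeq 0$. By the push-through / Woodbury identity,
\[
I-PBX(P)^{-1}B^{\top}=(I+PS)^{-1}.
\]
To justify this, multiply $(I+PS)$ by $I-PB(R+B^{\top}PB)^{-1}B^{\top}$ and simplify. Invertibility of $I+PS$ holds because the nonzero eigenvalues of $PS$ coincide with those of $S^{1/2}PS^{1/2}\succeq 0$. Hence $G(P)=(I+PS)^{-1}P=P(I+SP)^{-1}$, the second form following by transposing, since $P$ and $S$ are symmetric.

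\textbf{Step 2: the difference identity.} For $P_{1},P_{2}\in\PP$,
\[
G(P_{1})-G(P_{2})=(I+P_{1}S)^{-1}\big[P_{1}(I+SP_{2})-(I+P_{1}S)P_{2}\big](I+SP_{2})^{-1}.
\]
The cross terms $P_{1}SP_{2}$ cancel, so the bracket equals $P_{1}-P_{2}$. Therefore
\[
\T_{Q}(P_{1})-\T_{Q}(P_{2})=A^{\top}M_{1}(P_{1}-P_{2})M_{2}^{\top}A ,
\]
where $M_{i}:=(I+P_{i}S)^{-1}=I-P_{i}BX(P_{i})^{-1}B^{\top}$.

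\textbf{Step 3: norm bounds.} Using $\norm{UVW}_{F}\le\norm{U}_{2}\norm{V}_{F}\norm{W}_{2}$, we get
\[
\norm{\T_{Q}(P_{1})-\T_{Q}(P_{2})}_{F}\le\Anorm^{2}\,\norm{M_{1}}_{2}\norm{M_{2}}_{2}\,\norm{P_{1}-P_{2}}_{F}.
\]
For each factor,
\[
\norm{M_{i}}_{2}\le 1+\norm{P_{i}}_{2}\Bnorm^{2}\norm{X(P_{i})^{-1}}_{2}.
\]
Since $X(P_{i})=R+B^{\top}P_{i}B\succeq R\succ 0$, we have $\norm{X(P_{i})^{-1}}_{2}\le\Rinvnorm$ (this can also be cited via \cref{lemma:psd_order}). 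Combining this with $\norm{P_{i}}_{2}\le 1/(1-\Anorm^{2})$ from the definition of $\PP$ in \cref{eq:def_PP} gives
\[
\norm{M_{i}}_{2}\le 1+\frac{\Bnorm^{2}\Rinvnorm}{1-\Anorm^{2}}.
\]
Substituting, the product of the three factors is exactly $\gamma$.

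\textbf{Step 4: $\gamma<1$ and self-map.} \cref{lemma:norm_A_transform} gives $\gamma<1/2<1$ under \cref{ass:stable}, and $\gamma\ge 0$ trivially. \cref{corollary:ric_op_self_map} ensures $\T_{Q}$ maps $\PP$ into itself, as \cref{def:cont} requires.

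\textbf{Main obstacle.} The key step is Step 2: finding the factorization so that both sides carry a multiplier whose norm is controlled by $\norm{P_{i}}_{2}$ rather than by $\norm{P_{i}^{-1}}$. The formulation via $(P^{-1}+S)^{-1}$ is avoided because $P$ may be singular. Beyond that, the argument only requires checking the Woodbury manipulation carefully.
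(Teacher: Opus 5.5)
Your proof is correct and gives exactly the paper's constant, but it takes a different route.

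\textbf{The paper's route.} The paper never factors the difference. It separates the linear piece $A^{\top}(P_{1}-P_{2})A$ from the difference of the quadratic terms. It then telescopes the quadratic difference into three pieces: replacing $P_{1}$ by $P_{2}$ on the left, $X(P_{1})^{-1}$ by $X(P_{2})^{-1}$ in the middle via \cref{lemma:inv_identity}, and $P_{1}$ by $P_{2}$ on the right. Bounding the four pieces separately by the triangle inequality gives coefficients $1$, $2c$ and $c^{2}$ (times $\Anorm^{2}$), with $c:=\Bnorm^{2}\Rinvnorm/(1-\Anorm^{2})$, and these sum to $(1+c)^{2}$.

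\textbf{Your route.} Your push-through identity gives the exact factorization $\T_{Q}(P_{1})-\T_{Q}(P_{2})=A^{\top}M_{1}(P_{1}-P_{2})M_{2}^{\top}A$. The perfect square then arises structurally as $\norm{M_{1}}_{2}\norm{M_{2}}_{2}$ rather than as a sum that happens to factor. In fact $M_{i}^{\top}A=A-BX(P_{i})^{-1}B^{\top}P_{i}A=A+BK_{i}$, where $K_{i}:=-X(P_{i})^{-1}B^{\top}P_{i}A$ is the gain of \cref{eq:klqr} evaluated at $P_{i}$. So you have rederived the classical closed-loop identity $\T_{Q}(P_{1})-\T_{Q}(P_{2})=(A+BK_{1})^{\top}(P_{1}-P_{2})(A+BK_{2})$.

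\textbf{What each buys.}
\begin{itemize}
\item Your identity exposes a potentially sharper contraction factor, $\norm{A+BK_{1}}_{2}\norm{A+BK_{2}}_{2}$, which you then relax to $\gamma$.
\item Since $Q$ enters additively, your factorization would also streamline \cref{lemma:ric_op_lipschitz}.
\item The paper's argument is more elementary: it needs only the resolvent identity and submultiplicativity, with no inverse of $I+PS$ to justify.
\end{itemize}
Your invertibility argument for $I+PS$ is fine but redundant: the multiplication that verifies the push-through identity already exhibits a one-sided inverse of a square matrix. Both proofs use the same ingredient bounds, $\norm{X(P_{i})^{-1}}_{2}\le\Rinvnorm$ and $\norm{P_{i}}_{2}\le 1/(1-\Anorm^{2})$. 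Both also rely on \cref{lemma:norm_A_transform}, hence on \cref{ass:stable}, to get $\gamma<1$.
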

\begin{proof}
    Let $P_{1}, P_{2}\in\PP$.
    Observe that
    \begin{align*}
        \T_{Q}(P_{1})-\T_{Q}(P_{2}) &= A^{\top}P_{1}A+Q-A^{\top}P_{1}BX(P_{1})^{-1}B^{\top}P_{1}A\\
        &-A^{\top}P_{2}A-Q+A^{\top}P_{2}BX(P_{2})^{-1}B^{\top}P_{2}A\\
        &=A^{\top}P_{1}A-A^{\top}P_{2}A\\
        &+A^{\top}P_{1}BX(P_{1})^{-1}B^{\top}P_{1}A-A^{\top}P_{2}BX(P_{2})^{-1}B^{\top}P_{2}A
    \end{align*}

    We thus split $\norm{\T_{Q}(P_{1})-\T_{Q}(P_{2})}_{F}$ into $2$ terms and bound each of them separately. 
    First, by \cref{lemma:frob_op_norm} it holds that
    \begin{align*}
        \norm{A^{\top}P_{1}A-A^{\top}P_{2}A}_{F}\leq \Anorm^{2}\norm{P_{1}-P_{2}}_{F}
        \text{\,.}
    \end{align*}
    Moreover, by \cref{lemma:frob_op_norm} and the triangle inequality it holds that
    \begin{align*}
        &\norm{A^{\top} P_{1} B X(P_{1})^{-1} B^\top P_{1} A-A^{\top} P_{2} B X(P_{2})^{-1} B^\top P_{2} A}_{F}\\
        &\leq \Anorm^{2}\norm{P_{1}BX(P_{1})^{-1}B^{\top}P_{1}-P_{2}BX(P_{1})^{-1}B^{\top}P_{1}}_{F}\\
        &+\Anorm^{2}\norm{P_{2}BX(P_{1})^{-1}B^{\top}P_1-P_{2}BX(P_{2})^{-1}B^{\top}P_1}_{F}\\
        &+\Anorm^{2}\norm{P_{2}BX(P_{2})^{-1}B^{\top}P_{1}-P_{2}BX(P_{2})^{-1}B^{\top}P_{2}}_{F}\\
        &= \Anorm^{2}\norm{(P_{1}-P_{2})BX(P_{1})^{-1}B^{\top}P_{1}}_{F}\\
        &+\Anorm^{2}\norm{P_{2}B(X(P_{1})^{-1}-X(P_{2})^{-1})B^{\top}P_{1}}_{F}\\
        &+\Anorm^{2}\norm{P_{2}BX(P_{2})^{-1}B^{\top}(P_{1}-P_{2})}_{F}
        \text{\,.}
    \end{align*}
    Since $R$ is PD, its smallest eigenvalue is $\Rinvnorm^{-1}$, so the matrix $X(P)$ satisfies 
    \begin{align*}
        X(P)-\Rinvnorm^{-1} I=R+B^{\top}PB-\Rinvnorm^{-1} I\succeq 0
    \end{align*}
    for any PSD matrix $P$.
    Therefore, by \cref{lemma:psd_order} we obtain that $\norm{X(P_{1})^{-1}}_{2}\,,\norm{X(P_{2})^{-1}}_{2}\leq \Rinvnorm$.
    Hence, since $P_{1}, P_{2}\in\PP$ we obtain by \cref{lemma:frob_op_norm} and the Cauchy-Schwarz inequality that
    \begin{align*}
        &\Anorm^{2}\norm{(P_{1}-P_{2})BX(P_{1})^{-1}B^{\top}P_{1}}_{F}+\Anorm^{2}\norm{P_{2}BX(P_{2})^{-1}B^{\top}(P_{1}-P_{2})}_{F}\\
        &\leq \Anorm^{2}\left(\Bnorm^{2}\norm{P_{1}}_{2}\norm{X(P_{1})^{-1}}_{2}\norm{P_{1}-P_{2}}_{F}+\Bnorm^{2}\norm{P_{2}}_{2}\norm{X(P_{2})^{-1}}_{2}\norm{P_{1}-P_{2}}_{F}\right)\\
        &\leq \frac{2\Anorm^{2}\Bnorm^{2}\Rinvnorm}{1-\Anorm^{2}}\norm{P_{1}-P_{2}}_{F}
        \text{\,.}
    \end{align*}
    Additionally, by \cref{lemma:inv_identity} it holds that
    \begin{align*}
        X(P_{1})^{-1}-X(P_{2})^{-1}&=X(P_{1})^{-1}(X(P_{2})-X(P_{1}))X(P_{2})^{-1}\\
        &=X(P_{1})^{-1}(R+B^{\top}P_{2}B-R-B^{\top}P_{1}B)X(P_{2})^{-1}\\
        &=X(P_{1})^{-1}(B^{\top}P_{2}B-B^{\top}P_{1}B)X(P_{2})^{-1}
        \text{\,.}
    \end{align*}
    Thus, since $P_{1}, P_{2}\in\PP$ we obtain by \cref{lemma:frob_op_norm} and the Cauchy-Schwarz inequality that
    \begin{align*}
        &\Anorm^{2}\norm{P_{2}B(X(P_{1})^{-1}-X(P_{2})^{-1})B^{\top}P_{1}}_{F}\\
        &=\Anorm^{2}\norm{P_{2}BX(P_{1})^{-1}(B^{\top}P_{2}B-B^{\top}P_{1}B)X(P_{2})^{-1}B^{\top}P_{1}}_{F}\\
        &=\Anorm^{2}\norm{P_{2}BX(P_{1})^{-1}B^{\top}(P_{2}-P_{1})BX(P_{2})^{-1}B^{\top}P_{1}}_{F}\\
        &\leq \frac{\Anorm^{2}\Bnorm^{4}\Rinvnorm^{2}}{(1-\Anorm^{2})^{2}}\norm{P_{2}-P_{1}}_{F}
        \text{\,.}
    \end{align*}
    Overall we obtain that
    \begin{align*}
        \norm{\T_{Q}(P_{1})-\T_{Q}(P_{2})}_{F}
        &\leq \Anorm^{2}\norm{P_{1}-P_{2}}_{F}
        \text{\,.}\\
        &+\frac{2\Anorm^{2}\Bnorm^{2}\Rinvnorm}{1-\Anorm^{2}}\norm{P_{1}-P_{2}}_{F}\\
        &+\frac{\Anorm^{2}\Bnorm^{4}\Rinvnorm^{2}}{(1-\Anorm^{2})^{2}}\norm{P_{2}-P_{1}}_{F}\\
        &=\Anorm^{2}\left(1+\frac{\Bnorm^{2}\Rinvnorm}{1-\Anorm^{2}}\right)^{2}\norm{P_{1}-P_{2}}_{F}\\
        &=\gamma\norm{P_{1}-P_{2}}_{F}
    \end{align*}
    as required. 
    The fact that the contraction constant $\gamma$ is less than $1$ follows from \cref{lemma:norm_A_transform}.
\end{proof}

\cref{lemma:ric_op_cont} implies that the Riccati operator admits a unique fixed point that is within $\PP$. This is summarized in the following corollary.
\begin{corollary}
    \label{corollary:ric_op_fixed_point}
    For any $Q\in\Theta$ the Riccati operator $\T_{Q}(\cdot)$ (\cref{def:ric_op}) admits a unique fixed point $\Plqr(Q)\in\PP$.
\end{corollary}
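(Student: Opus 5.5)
The corollary follows from the Banach fixed point theorem, applied to $\T_{Q}(\cdot)$ on the metric space $(\PP,\dist_{\PP}(\cdot,\cdot))$. It remains to connect the resulting fixed point with the DARE solution $\Plqr(Q)$ that defines the unsafe teacher.

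The first step is completeness of $(\PP,\dist_{\PP})$. The set $\PP$ is the intersection of the PSD cone, which is closed, with the closed spectral-norm ball of radius $1/(1-\Anorm^{2})$. It is therefore a closed subset of the finite-dimensional space $\BR^{\dstate,\dstate}$, and so it is complete under the Frobenius metric. It is also nonempty, since $0\in\PP$.

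The second step applies the fixed point theorem. By \cref{corollary:ric_op_self_map}, $\T_{Q}(\cdot)$ maps $\PP$ into itself. By \cref{lemma:ric_op_cont}, together with \cref{lemma:norm_A_transform} which gives $\gamma<1/2<1$ under \cref{ass:stable}, it is a contraction on $\PP$. Banach's theorem then yields a unique fixed point $\bar P\in\PP$, obtained for instance as the limit of the iterates $\T_{Q}^{k}(0)$.

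The third step, which I expect to be the only delicate one, is identifying $\bar P$ with $\Plqr(Q)$. Banach's theorem gives uniqueness only inside $\PP$, while $\Plqr(Q)$ is defined as the unique stabilizing PSD solution of \cref{eq:dare}. I would close this gap as follows.
\begin{itemize}
\item \emph{Approach via value iteration.} The iterates $\T_{Q}^{k}(0)$ are exactly the finite-horizon LQR cost-to-go matrices. Classical theory (e.g.~\citep{bertsekas2019reinforcement}), using $\Anorm<1$, shows they converge to the stabilizing solution $\Plqr(Q)$. Since they also converge to $\bar P$, the two coincide.
\item \emph{Direct alternative.} Show that $\Plqr(Q)\in\PP$ and invoke uniqueness within $\PP$. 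The optimal infinite-horizon cost is at most that of the zero control $\ubf_t=\0$. Hence $\xbf^\top\Plqr(Q)\xbf\le\sum_t \xbf^\top (A^{t})^\top Q A^{t}\xbf\le \norm{\xbf}_2^2\sum_t\Anorm^{2t}=\norm{\xbf}_2^2/(1-\Anorm^{2})$, using $\norm{Q}_2\le 1$. Combined with $\Plqr(Q)\succeq 0$, this places $\Plqr(Q)$ in $\PP$, so $\Plqr(Q)=\bar P$.
\end{itemize}
I would present the direct argument, since it relies only on the standard fact that $\Plqr(Q)$ is the optimal cost matrix.
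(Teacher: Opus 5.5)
Your first two steps are exactly the paper's proof: $\PP$ is a closed subset of the complete space $(\BR^{\dstate,\dstate},\norm{\cdot}_F)$, $\T_Q$ is a self-map and a contraction on it, and Banach's theorem gives a unique fixed point in $\PP$. Your third step goes beyond the paper, which simply names that fixed point $\Plqr(Q)$ and never checks that the stabilizing DARE solution of \cref{app:background:lqr} lies in $\PP$. Your zero-control argument is a valid way to close this: since $\Anorm<1$, the control $\ubf_t=\0$ is stabilizing, so completing squares gives $\xbf^\top\Plqr(Q)\xbf\le\sum_t\xbf^\top(A^t)^\top QA^t\xbf$. This places $\Plqr(Q)$ in $\PP$, where the uniqueness from Banach's theorem applies.
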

\begin{proof}
    The space $\left(\BR^{\dstate,\dstate},\norm{\cdot}_{F}\right)$ is complete, and $\PP$ is closed in this space because the PSD cone is closed and the condition $\norm{P}_{2}\leq \frac{1}{1-\Anorm^{2}}$ defines a closed set. Hence $\left(\PP,\dist_{\PP}(\cdot,\cdot)\right)$ is a complete metric space. Therefore, since $\T_{Q}(\cdot)$ is a contraction on $\left(\PP,\dist_{\PP}(\cdot,\cdot)\right)$ (\cref{def:cont}) with contraction constant $\gamma\in[0,1)$ given by \cref{lemma:ric_op_cont}, the Banach fixed point Theorem~\citep{banach1922sur} guarantees that it admits a unique fixed point $\Plqr(Q)\in\PP$.
\end{proof}

We conclude this part of the proof by bounding the Lipschitz constant of the mapping to unique fixed points of the Riccati operator $\T_{Q}(\cdot)$.
\begin{lemma}
    \label{lemma:ric_op_lipschitz}
    For any distinct $Q_{1},Q_{2}\in\Theta$, the mapping to unique fixed points of the Riccati operators $\T_{Q}(\cdot)$ defined by $\Plqr(\cdot)$ (\cref{corollary:ric_op_fixed_point}) satisfies
    \begin{align*}
        \norm{\Plqr(Q_{1})-\Plqr(Q_{2})}_{F}\leq \frac{1}{1-\gamma}\norm{Q_{1}-Q_{2}}_{F}
        \text{\,,}
    \end{align*}
    where $\gamma$ is the contraction constant of the Riccati operator given by \cref{lemma:ric_op_cont}.
\end{lemma}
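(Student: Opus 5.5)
The plan is the standard perturbation argument for fixed points of a parametrized family of contractions. Write $P_{1}:=\Plqr(Q_{1})$ and $P_{2}:=\Plqr(Q_{2})$. By \cref{corollary:ric_op_fixed_point}, both lie in $\PP$ and satisfy $P_{1}=\T_{Q_{1}}(P_{1})$ and $P_{2}=\T_{Q_{2}}(P_{2})$. I will insert the intermediate point $\T_{Q_{1}}(P_{2})$ and apply the triangle inequality:
\begin{align*}
    \norm{P_{1}-P_{2}}_{F}
    =\norm{\T_{Q_{1}}(P_{1})-\T_{Q_{2}}(P_{2})}_{F}
    \leq \norm{\T_{Q_{1}}(P_{1})-\T_{Q_{1}}(P_{2})}_{F}+\norm{\T_{Q_{1}}(P_{2})-\T_{Q_{2}}(P_{2})}_{F}
    \text{\,.}
\end{align*}

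The first term is handled by \cref{lemma:ric_op_cont}. Since $P_{1},P_{2}\in\PP$, the operator $\T_{Q_{1}}(\cdot)$ contracts with constant $\gamma$, so this term is at most $\gamma\norm{P_{1}-P_{2}}_{F}$. (If $P_{1}=P_{2}$ the bound is trivial.)

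For the second term, observe from \cref{def:ric_op} that $Q$ enters $\T_{Q}(P)$ only as an additive summand. Every other part of the operator, namely $A^{\top}PA$ and $A^{\top}PBX(P)^{-1}B^{\top}PA$, depends only on $P$. Hence $\T_{Q_{1}}(P_{2})-\T_{Q_{2}}(P_{2})=Q_{1}-Q_{2}$ exactly, and the second term equals $\norm{Q_{1}-Q_{2}}_{F}$.

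Combining the two bounds gives $(1-\gamma)\norm{P_{1}-P_{2}}_{F}\leq\norm{Q_{1}-Q_{2}}_{F}$. Since $\gamma<1/2<1$ by \cref{lemma:norm_A_transform}, we have $1-\gamma>0$, and dividing by it yields the claim.

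There is no real obstacle here. The only point that needs care is that the contraction property applies only on $\PP$, so one must check that both fixed points lie in $\PP$ before invoking \cref{lemma:ric_op_cont}. This is exactly what \cref{corollary:ric_op_fixed_point} provides, uniformly for every $Q\in\Theta$.
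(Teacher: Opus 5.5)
Your proof is correct and is essentially the paper's argument. The paper also isolates the additive $Q_{1}-Q_{2}$ term, bounds the remaining $P$-dependent terms by $\gamma\norm{P_{1}-P_{2}}_{F}$ using the estimates from \cref{lemma:ric_op_cont}, and rearranges; your intermediate point $\T_{Q_{1}}(P_{2})$ simply packages this more cleanly by invoking the contraction lemma as a black box.
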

\begin{proof}
    For any $Q\in\Theta$, the matrix $\Plqr(Q)$ is a fixed point of the Riccati operator $\T_{Q}(\cdot)$ (\cref{def:ric_op}). Therefore by definition it holds that
    \begin{align*}
        \Plqr(Q_{1})-\Plqr(Q_{2})= \T_{Q_{1}}(\Plqr(Q_{1}))-\T_{Q_{2}}(\Plqr(Q_{2}))
        \text{\,.}
    \end{align*}
    Since $\Plqr(Q_{1}), \Plqr(Q_{2})\in\PP$, we have that
    \begin{align*}
        &\T_{Q_{1}}(\Plqr(Q_{1}))-\T_{Q_{2}}(\Plqr(Q_{2}))\\
        &=A^\top \Plqr(Q_{1}) A + Q_{1} - A^{\top} \Plqr(Q_{1}) B X(\Plqr(Q_{1}))^{-1} B^\top \Plqr(Q_{1}) A\\
        &-A^\top \Plqr(Q_{2}) A - Q_{2} + A^{\top} \Plqr(Q_{2}) B X(\Plqr(Q_{2}))^{-1} B^\top \Plqr(Q_{2}) A
        \text{\,.}
    \end{align*}
    By bounding each of the three terms involving the fixed points as done in \cref{lemma:ric_op_cont}, we obtain that
    \begin{align*}
        \norm{\T_{Q_{1}}(\Plqr(Q_{1}))-\T_{Q_{2}}(\Plqr(Q_{2}))}_{F}\leq \gamma \norm{\Plqr(Q_{1})-\Plqr(Q_{2})}_{F}+\norm{Q_{1}-Q_{2}}_{F}
        \text{\,.}
    \end{align*}
    Therefore, rearranging yields
    \begin{align*}
        \norm{\Plqr(Q_{1})-\Plqr(Q_{2})}_{F}\leq \frac{1}{1-\gamma}\norm{Q_{1}-Q_{2}}_{F}
    \end{align*}
    as required.
\end{proof}

\subsection{Lipschitz Bound on the Mapping}
\label{app:lqr:controller_bound}

We conclude the proof of \cref{res:ub} by converting the Lipschitz bound on the mapping to unique fixed points of the Riccati operator obtained in \cref{lemma:ric_op_lipschitz} to a Lipschitz bound on the mapping $Q\mapsto \Klqr(Q)$.
\cref{lemma:controller_op_lipschitz} first provides a bound sharper than the one stated in \cref{res:ub}, which we later simplify to the stated expression.
\begin{lemma}
    \label{lemma:controller_op_lipschitz}
    For any distinct $Q_{1},Q_{2}\in\Theta$ the mapping $Q\mapsto \Klqr(Q)$ satisfies
    \begin{align*}
        \norm{\Klqr(Q_{1})-\Klqr(Q_{2})}_{F}\leq \frac{1}{1-\gamma}\Anorm\Bnorm\Rinvnorm\left(1+\frac{\Bnorm^{2}\Rinvnorm}{1-\Anorm^{2}}\right)\norm{Q_{1}-Q_{2}}_{F}
    \end{align*}
    where $\gamma$ is the contraction constant of the Riccati operator given in \cref{lemma:ric_op_cont}.
\end{lemma}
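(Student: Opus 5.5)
The plan is to write $P_i := \Plqr(Q_i)$ and $X_i := X(P_i)$ for $i=1,2$, so that by \cref{eq:klqr}
\begin{align*}
    \Klqr(Q_{1})-\Klqr(Q_{2}) = -X_{1}^{-1}B^{\top}P_{1}A + X_{2}^{-1}B^{\top}P_{2}A
    \text{\,.}
\end{align*}
I will telescope this difference by inserting the mixed term $X_{1}^{-1}B^{\top}P_{2}A$. This gives two pieces: $X_{1}^{-1}B^{\top}(P_{1}-P_{2})A$ and $(X_{1}^{-1}-X_{2}^{-1})B^{\top}P_{2}A$.

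\textbf{First piece.} I will apply \cref{lemma:frob_op_norm}, using $\norm{X_{1}^{-1}}_{2}\leq\Rinvnorm$. This bound was established in the proof of \cref{lemma:ric_op_cont} via $X(P)\succeq\Rinvnorm^{-1}I$ for PSD $P$. The result is
\begin{align*}
    \norm{X_{1}^{-1}B^{\top}(P_{1}-P_{2})A}_{F}\leq\Anorm\Bnorm\Rinvnorm\norm{P_{1}-P_{2}}_{F}
    \text{\,.}
\end{align*}

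\textbf{Second piece.} I will use \cref{lemma:inv_identity} as in \cref{lemma:ric_op_cont}:
\begin{align*}
    X_{1}^{-1}-X_{2}^{-1}=X_{1}^{-1}B^{\top}(P_{2}-P_{1})BX_{2}^{-1}
    \text{\,.}
\end{align*}
The second piece therefore equals $X_{1}^{-1}B^{\top}(P_{2}-P_{1})BX_{2}^{-1}B^{\top}P_{2}A$. Since $P_{2}\in\PP$ by \cref{corollary:ric_op_fixed_point}, we have $\norm{P_{2}}_{2}\leq 1/(1-\Anorm^{2})$. \cref{lemma:frob_op_norm} then bounds its Frobenius norm by
\begin{align*}
    \Anorm\Bnorm\Rinvnorm\cdot\frac{\Bnorm^{2}\Rinvnorm}{1-\Anorm^{2}}\norm{P_{1}-P_{2}}_{F}
    \text{\,.}
\end{align*}

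\textbf{Combining.} By the triangle inequality the two bounds add to
\begin{align*}
    \Anorm\Bnorm\Rinvnorm\Big(1+\frac{\Bnorm^{2}\Rinvnorm}{1-\Anorm^{2}}\Big)\norm{P_{1}-P_{2}}_{F}
    \text{\,.}
\end{align*}
Invoking \cref{lemma:ric_op_lipschitz} to replace $\norm{P_{1}-P_{2}}_{F}$ by $\frac{1}{1-\gamma}\norm{Q_{1}-Q_{2}}_{F}$ yields exactly the claimed inequality.

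The main subtlety is ensuring all the operator-norm bounds are legitimate. Specifically, $P_{1},P_{2}$ must be PSD and lie in $\PP$, which is precisely what \cref{corollary:ric_op_fixed_point} provides. We also need the fixed point from that corollary to coincide with the stabilizing DARE solution defining $\Klqr$. This holds by uniqueness of fixed points in $\PP$ together with classical uniqueness of the stabilizing PSD solution. Beyond that, the argument is bookkeeping of norms. Keeping the factor $\Bnorm^{2}\Rinvnorm/(1-\Anorm^{2})$ unsimplified matches the stated sharper bound, so the later simplification to \cref{eq:ub} is deferred.
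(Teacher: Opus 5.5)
Your proposal is correct and follows the paper's proof essentially step for step. It uses the same telescoping through the mixed term $X_{1}^{-1}B^{\top}P_{2}A$, the same bounds $\norm{X_{i}^{-1}}_{2}\leq\Rinvnorm$ and $\norm{P_{2}}_{2}\leq 1/(1-\Anorm^{2})$ together with \cref{lemma:inv_identity,lemma:frob_op_norm}, and the same final appeal to \cref{lemma:ric_op_lipschitz}; the only cosmetic difference is that the paper pulls out the factor $\Anorm$ at the start rather than the end.

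Your side remark on identifying the fixed point in $\PP$ with the stabilizing DARE solution addresses a point the paper takes for granted. As you phrase it, uniqueness within $\PP$ does not by itself give this. It does follow because $\Anorm<1$ makes $(Q^{1/2},A)$ detectable, so the DARE has a unique PSD solution, and the PSD fixed point from \cref{corollary:ric_op_fixed_point} must therefore be that solution.
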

\begin{proof}
    By \cref{eq:klqr} and \cref{lemma:frob_op_norm} it holds that
    \begin{align*}
        &\norm{\Klqr(Q_{1})-\Klqr(Q_{2})}_{F}\\
        &=\norm{-X(\Plqr(Q_{1}))^{-1}B^{\top}\Plqr(Q_{1})A+X(\Plqr(Q_{2}))^{-1}B^{\top}\Plqr(Q_{2})A}_{F}\\
        &\leq \Anorm\norm{X(\Plqr(Q_{1}))^{-1}B^{\top}\Plqr(Q_{1})-X(\Plqr(Q_{2}))^{-1}B^{\top}\Plqr(Q_{2})}_{F}
    \end{align*}
    where $X(\cdot)$ is as defined in \cref{eq:def_X}.
    By the triangle inequality we obtain that
    \begin{align*}
        &\Anorm\norm{X(\Plqr(Q_{1}))^{-1}B^{\top}\Plqr(Q_{1})-X(\Plqr(Q_{2}))^{-1}B^{\top}\Plqr(Q_{2})}_{F}\\
        &\leq \Anorm\norm{X(\Plqr(Q_{1}))^{-1}B^{\top}\Plqr(Q_{1})-X(\Plqr(Q_{1}))^{-1}B^{\top}\Plqr(Q_{2})}_{F}\\
        &+\Anorm\norm{X(\Plqr(Q_{1}))^{-1}B^{\top}\Plqr(Q_{2})-X(\Plqr(Q_{2}))^{-1}B^{\top}\Plqr(Q_{2})}_{F}\\
        &\leq \Anorm\Bnorm\Rinvnorm\norm{\Plqr(Q_{1})-\Plqr(Q_{2})}_{F}\\
        &+\frac{\Anorm\Bnorm}{1-\Anorm^{2}}\norm{X(\Plqr(Q_{1}))^{-1}-X(\Plqr(Q_{2}))^{-1}}_{F}
        \text{\,,}
    \end{align*}
    where we utilized the fact that $\norm{X(\Plqr(Q_{1}))^{-1}}_{2},\norm{X(\Plqr(Q_{2}))^{-1}}_{2}\leq \Rinvnorm$ established in \cref{lemma:ric_op_cont} and the fact that $\Plqr(Q_{1}), \Plqr(Q_{2})\in\PP$. 
    Lastly, by \cref{lemma:frob_op_norm,lemma:inv_identity} we have
    \begin{align*}
        &\norm{X(\Plqr(Q_{1}))^{-1}-X(\Plqr(Q_{2}))^{-1}}_{F}\\
        &=\norm{X(\Plqr(Q_{1}))^{-1}(X(\Plqr(Q_{2}))-X(\Plqr(Q_{1})))X(\Plqr(Q_{2}))^{-1}}_{F}\\
        &=\norm{X(\Plqr(Q_{1}))^{-1}(R+B^{\top}\Plqr(Q_{2})B-R-B^{\top}\Plqr(Q_{1})B)X(\Plqr(Q_{2}))^{-1}}_{F}\\
        &=\norm{X(\Plqr(Q_{1}))^{-1}(B^{\top}\Plqr(Q_{2})B-B^{\top}\Plqr(Q_{1})B)X(\Plqr(Q_{2}))^{-1}}_{F}\\
        &\leq \Bnorm^{2}\Rinvnorm^{2}\norm{\Plqr(Q_{2})-\Plqr(Q_{1})}_{F}
        \text{\,.}
    \end{align*}
    Therefore, we obtain by \cref{lemma:ric_op_lipschitz} that
    \begin{align*}
        &\norm{\Klqr(Q_{1})-\Klqr(Q_{2})}_{F}\\
        &\leq \left(\Anorm\Bnorm\Rinvnorm+\frac{\Anorm\Bnorm^{3}\Rinvnorm^{2}}{1-\Anorm^{2}}\right)\norm{\Plqr(Q_{1})-\Plqr(Q_{2})}_{F}\\
        &=\Anorm\Bnorm\Rinvnorm\left(1+\frac{\Bnorm^{2}\Rinvnorm}{1-\Anorm^{2}}\right)\norm{\Plqr(Q_{1})-\Plqr(Q_{2})}_{F}\\
        &\leq \frac{1}{1-\gamma}\Anorm\Bnorm\Rinvnorm\left(1+\frac{\Bnorm^{2}\Rinvnorm}{1-\Anorm^{2}}\right)\norm{Q_{1}-Q_{2}}_{F}
        \text{\,,}
    \end{align*}
    thereby concluding the proof.
\end{proof}

We complete the proof of \cref{res:ub} by simplifying the sharper upper bound for the Lipschitz constant provided in \cref{lemma:controller_op_lipschitz} to the stated expression.
Note that by \cref{lemma:norm_A_transform} and by the definition of $\gamma$ from \cref{lemma:ric_op_cont} it holds that
\begin{align*}
    \gamma = \Anorm^{2}\left(1+\frac{\Bnorm^{2}\Rinvnorm}{1-\Anorm^{2}}\right)^{2}<\frac{1}{2}
    \text{\,.}
\end{align*}
Therefore, the above upper bound simplifies to
\begin{align*}
    \frac{1}{1-\gamma}\Anorm\Bnorm\Rinvnorm\left(1+\frac{\Bnorm^{2}\Rinvnorm}{1-\Anorm^{2}}\right)\leq 2\Anorm\Bnorm\Rinvnorm\left(1+\frac{\Bnorm^{2}\Rinvnorm}{1-\Anorm^{2}}\right)
    \text{\,.}
\end{align*}
\cref{ass:stable} implies that $\Anorm^{2}<\frac{1}{2}$, therefore we obtain that
\begin{align*}
    2\Anorm\Bnorm\Rinvnorm\left(1+\frac{\Bnorm^{2}\Rinvnorm}{1-\Anorm^{2}}\right)\leq 2\Anorm\Bnorm\Rinvnorm\left(1+2\Bnorm^{2}\Rinvnorm\right)
    \text{\,,}
\end{align*}
which is precisely the upper bound stated in \cref{res:ub}.

	\section{Proof of \cref{res:sep}}
\label{app:separation}

In this appendix, we prove \cref{res:sep}, namely the comparison between the unsafe/LQR map and the safe/$\Hinf$-robust map in the risk-handling setting of \cref{sec:prelim:setup_theory}. Throughout our proof, we focus on state cost matrices that are simultaneously diagonalizable in the basis $V$, \ie, the set of cost matrices given by
\begin{align*}
    \Theta_{V} := \left\{ Q \in \Theta : Q = V\Lambda_{Q}V^{\top}\,,\,  \Lambda_{Q}:=\diag(q_{1}, \ldots, q_{\dstate})\right\}
    \text{\,.}
\end{align*}
Since $\Theta$ is made up of PSD matrices of Frobenius norm at most 1, and since $V$ is orthogonal, the set $\Theta_{V}$ may be written as
\begin{align*}
    \Theta_{V} = \left\{ V\diag(q_{1},\dots,q_{\dstate})V^\top:q_{j}\in\BR_{\ge 0}\,,\,j\in[\dstate]\,,\,\norm{\qbf}_2\leq 1\right\}
    \text{\,.}
\end{align*}

We will provide a lower bound on the Lipschitz constant within this set, which is sufficient to conclude the bound holds for all cost matrices in $\Theta$. For readability, throughout this appendix we write
\begin{align*}
    A = V\Lambda_{A}V^\top,\qquad
    B = V\Lambda_{B}V^\top,\qquad
    D = V\Lambda_{D}V^\top,\qquad
    R = V\Lambda_{R}V^\top
    \text{\,,}
\end{align*}
where $\Lambda_{A}=\diag(a_1,\dots,a_{\dstate})$, $\Lambda_{B}=\diag(b_1,\dots,b_{\dstate})$, $\Lambda_{D}=\diag(d_1,\dots,d_{\dstate})$, and $\Lambda_{R}=\diag(r_1,\dots,r_{\dstate})$ are diagonal matrices holding the eigenvalues of $A$, $B$, $D$, and $R$ in the shared eigenbasis $V$, respectively.

The outline of the proof is as follows. 
\cref{app:separation:diag} shows that under simultaneous orthogonal diagonalizability of \cref{ass:commute}, the safe/$\Hinf$-robust control problem can be solved through individual scalar $\Hinf$-robust control problems (one for each coordinate).
\cref{app:separation:stack} proceeds to solve these problems and shows that stacking the scalar solutions yields an explicit diagonal mapping that is optimal for the original problem.
\cref{app:separation:coincide} then compares this stacked mapping with the canonical DGARE-synthesized safe mapping we study (defined in \cref{app:background:hinf}), and shows that they could partially coincide in a neighborhood of a state cost matrix.
Based on this local coincidence, \cref{app:separation:lip} provides a lower bound on the Lipschitz constant of the safe mapping $\Kinf(\cdot)$.
\cref{app:separation:strict} concludes by translating the bounds introduced in \cref{res:ub,app:separation:lip} to the inequality of \cref{res:sep}.

\subsection{Simultaneously Diagonalizable $\Hinf$-Robust Control}
\label{app:separation:diag}
We begin by showing that in the simultaneously diagonalizable setting, state cost matrices in $\Theta_{V}$ admit unique values of the safe mapping synthesized via the DGARE (\cref{eq:kinf}) that are also simultaneously diagonalizable in the basis $V$.

\begin{proposition}\label{prop:diag_hinf}
    For any $Q\in \Theta_{V}$ the DGARE-synthesized matrix $\Kinf(Q)$ (\cref{eq:kinf}) is simultaneously diagonalizable in the basis $V$, \ie, $\Kinf(Q) = V\Lambda_{K}(Q)V^{\top}$ where $\Lambda_{K}(Q)\in\BR^{\dstate,\dstate}$ is a diagonal matrix denoted as
    \begin{align*}
        \Lambda_{K}(Q)=:\diag(\ksafe_1(Q),\dots,\ksafe_{\dstate}(Q))
        \text{\,.}
    \end{align*}
\end{proposition}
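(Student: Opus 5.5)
The approach is a symmetry/uniqueness argument: conjugating by $V$ preserves the DGARE, so the unique stabilizing solution at $\gammainf(Q)$ must be diagonal in the basis $V$.

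\textbf{Step 1: reduce to the rotated problem.} Fix $Q\in\Theta_V$, $Q=V\Lambda_QV^\top$. For any PSD $P$, write $\tilde P := V^\top P V$. Since $A,B,D,R$ all equal $V\Lambda V^\top$ with diagonal $\Lambda$, conjugating the DGARE (\cref{eq:dgare}) by $V$ gives the same equation with $(A,B,D,R,Q)$ replaced by $(\Lambda_A,\Lambda_B,\Lambda_D,\Lambda_R,\Lambda_Q)$. Concretely, $H(P;\gamma)$ and $F(P)$ transform via the block-orthogonal matrix $\diag(V,V)$. The definiteness conditions $R+B^\top PB\succ0$ and $D^\top PD-\gamma^2I\prec0$ are invariant under this transformation. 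Stabilizability is also invariant, since the closed-loop matrix is conjugated by $V$. The value $\gammainf(Q)$ is unchanged as well, because the change of variables $\xbf\mapsto V^\top\xbf$, $\ubf\mapsto V^\top\ubf$, $\wbf\mapsto V^\top\wbf$ is norm preserving in \cref{eq:gamma_inf}.

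\textbf{Step 2: sign-flip symmetry.} For each diagonal sign matrix $S=\diag(\pm1,\dots,\pm1)$, $S$ commutes with all the diagonal data. Hence if $\tilde P$ is the unique stabilizing solution of the rotated DGARE satisfying the definiteness conditions, then so is $S\tilde PS$, by the same invariance argument as in Step 1. The uniqueness asserted by the classical synthesis theorem (\cref{app:background:hinf}) then gives $S\tilde PS=\tilde P$ for all such $S$. Taking $S$ with a single $-1$ in position $i$ shows that the off-diagonal entries $\tilde P_{ij}$, $j\neq i$, vanish. So $\tilde P$ is diagonal, and therefore $\Pinf(Q)=V\tilde P V^\top$.

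\textbf{Step 3: the gain is diagonal.} Substitute into \cref{eq:kinf_level_gamma}. In the rotated coordinates, $H$ is a $2\times2$ block matrix with diagonal blocks and $F$ has diagonal blocks. Its inverse therefore has diagonal blocks as well, being a permuted direct sum of $2\times2$ matrices, one per coordinate. Consequently $K_u$ is diagonal in the rotated coordinates, which gives $\Kinf(Q)=V\Lambda_K(Q)V^\top$ as claimed.

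The main obstacle is justifying the uniqueness used in Step 2 at the boundary level $\gamma=\gammainf(Q)$. There, classical theory may only guarantee existence rather than uniqueness, or $H$ may become singular. I would rely on the paper's stated well-posedness assumption for \cref{eq:kinf}, under which the stabilizing solution is unique. If that assumption is too weak, the fallback is to obtain the result as a limit $\gamma\downarrow\gammainf(Q)$: for $\gamma>\gammainf(Q)$ the solution is diagonal by Steps 1 and 2, and diagonal matrices form a closed set, so the limit is diagonal as well.
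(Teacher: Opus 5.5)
Your proof is correct, and it reaches the key step by a different route than the paper.

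\textbf{What is shared.} Your Steps 1 and 3 match the paper. It also conjugates by $V$ to obtain the rotated DGARE. It also reads off the gain from the block inverse of $\wt H$, which it writes via the commutative block-inverse formula rather than your permuted direct sum of $2\times 2$ blocks.

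\textbf{Where you differ.} The difference is in showing that $V^\top \Pinf(Q) V$ is diagonal.
\begin{itemize}
\item The paper shows that the rotated DGARE map sends diagonal matrices to diagonal matrices. It then cites convergence results for DGARE iterations started at $P=\0$, which converge to the unique stabilizing solution. Invariance of the closed set of diagonal matrices then gives diagonality of the limit.
\item You use a symmetry argument instead. Each sign matrix $S$ commutes with the diagonal data, so $S\wt P S$ is again a stabilizing PSD solution satisfying the definiteness conditions. Uniqueness forces $S\wt P S=\wt P$, and taking $S$ with a single $-1$ kills the off-diagonal entries.
\end{itemize}

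\textbf{What each approach buys.} Your route needs only uniqueness of the stabilizing solution. The paper already assumes this in order to define $\Kinf(Q)$, and its own proof invokes it too. You therefore do not need the external convergence theorem. That theorem's applicability at the boundary level $\gamma=\gammainf(Q)$ is itself delicate, as is the well-definedness of every iterate, i.e., nonsingularity of $\wt H$ and the definiteness conditions along the iteration. Your argument also extends verbatim to any group of orthogonal matrices commuting with $A,B,D,R,Q$. The paper's route is constructive: it exhibits the solution as a limit of diagonal iterates, at the price of relying on that convergence result.

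\textbf{On your fallback.} The limit $\gamma\downarrow\gammainf(Q)$ is unnecessary given the uniqueness you already use. As stated, it would also need continuity of $\Pinf(Q;\gamma)$ in $\gamma$ at the optimal level, which is not automatic.
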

\begin{proof}
    Let $\gamma\geq \gammainf(Q)$. 
    Recall that the matrices $\Lambda_{A}=V^\top A V$, $\Lambda_{B}=V^\top B V$, $\Lambda_{D}=V^\top D V$ and $\Lambda_{R}=V^\top R V$ are diagonal, and for any symmetric matrix $P\in\BR^{\dstate,\dstate}$, define $\wt{P}:=V^\top P V$.
    For any symmetric matrix $P\in\BR^{\dstate,\dstate}$ denote
    \begin{align*}
        \wt H(\wt P;\gamma)
        :=
        \begin{pmatrix}
            \Lambda_R+\Lambda_B^\top \wt P\Lambda_B & \Lambda_B^\top \wt P\Lambda_D\\~\\
            \Lambda_D^\top \wt P\Lambda_B & \Lambda_D^\top \wt P\Lambda_D-\gamma^2 I
        \end{pmatrix}
    \end{align*}
    and similarly define $\wt F(\wt P)$ as
    \begin{align*}
        \wt F(\wt P)
        :=
        \begin{pmatrix}
            \Lambda_B^\top \wt P\Lambda_A\\~\\
            \Lambda_D^\top \wt P\Lambda_A
        \end{pmatrix}
        \text{\,.}
    \end{align*}
    We have that
    \begin{align*}
        H(P;\gamma)
        &=
        \begin{pmatrix}
            R+B^\top P B & B^\top P D\\
            D^\top P B & D^\top P D-\gamma^2 I
        \end{pmatrix}\\
        &=
        \begin{pmatrix}
            V\Lambda_RV^\top+V\Lambda_B \wt P\Lambda_B V^\top & V\Lambda_B \wt P\Lambda_D V^\top\\~\\
            V\Lambda_D \wt P\Lambda_B V^\top & V\Lambda_D \wt P\Lambda_D V^\top-\gamma^2 VIV^\top
        \end{pmatrix}\\
        &=
        \begin{pmatrix}
            V & \0\\
            \0 & V
        \end{pmatrix}
        \begin{pmatrix}
            \Lambda_R+\Lambda_B^\top \wt P\Lambda_B & \Lambda_B^\top \wt P\Lambda_D\\~\\
            \Lambda_D^\top \wt P\Lambda_B & \Lambda_D^\top \wt P\Lambda_D-\gamma^2 I
        \end{pmatrix}
        \begin{pmatrix}
            V^\top & \0\\
            \0 & V^\top
        \end{pmatrix}\\
        &=
        \begin{pmatrix}
            V & \0\\
            \0 & V
        \end{pmatrix}
        \wt H(\wt P;\gamma)
        \begin{pmatrix}
            V^\top & \0\\
            \0 & V^\top
        \end{pmatrix}
    \end{align*}
    where $\0$ denotes the all-zeros matrix of appropriate dimensions.
    Similarly, we have that
    \begin{align*}
        F(P)
        =
        \begin{pmatrix}
            B^\top P A\\
            D^\top P A
        \end{pmatrix}
        =
        \begin{pmatrix}
            V \Lambda_B \wt P \Lambda_A V^\top\\
            V \Lambda_D \wt P \Lambda_A V^\top
        \end{pmatrix}
        =
        \begin{pmatrix}
            V & \0\\
            \0 & V
        \end{pmatrix}
        \begin{pmatrix}
            \Lambda_B^\top \wt P\Lambda_A\\~\\
            \Lambda_D^\top \wt P\Lambda_A
        \end{pmatrix}V^{\top}
        =
        \begin{pmatrix}
            V & \0\\
            \0 & V
        \end{pmatrix}
        \wt F(\wt P)V^{\top}
        \text{\,.}
    \end{align*}
    Therefore, whenever $H(P;\gamma)$ is nonsingular we have by orthogonality of $V$ that
    \begin{align*}
        H(P;\gamma)^{-1}
        =
        \begin{pmatrix}
            V & \0\\
            \0 & V
        \end{pmatrix}
        \wt H(\wt P;\gamma)^{-1}
        \begin{pmatrix}
            V^{\top} & \0\\
            \0 & V^{\top}
        \end{pmatrix}
        \text{\,.}
    \end{align*}
    Hence, multiplying both sides of the quadratic Riccati term by $V^\top$ from the left and $V$ from the right respectively yields
    \begin{align*}
        V^\top F(P)^\top H(P;\gamma)^{-1}F(P)V
        &=
        V^{\top}F(P)^{\top}\begin{pmatrix}
            V & \0\\
            \0 & V
        \end{pmatrix}
        \wt H(\wt P;\gamma)^{-1}
        \begin{pmatrix}
            V^{\top} & \0\\
            \0 & V^{\top}
        \end{pmatrix}
        F(P)V\\
        &=\wt F(\wt P)^\top \wt H(\wt P;\gamma)^{-1}\wt F(\wt P)
        \text{\,.}
    \end{align*}
    Because $V$ is orthogonal, a matrix $P\in\BR^{\dstate,\dstate}$ satisfies the DGARE (\cref{eq:dgare}) if and only if $\wt P$ satisfies the DGARE in the transformed coordinates, obtained by multiplying both sides of \cref{eq:dgare} by $V^\top$ from the left and $V$ from the right respectively:
    \begin{align}
        \label{eq:dgare_tilde}
        \wt P
        = \Lambda_{A} \wt P \Lambda_{A} + \Lambda_{Q}
        - \wt F(\wt P)^\top \wt H(\wt P;\gamma)^{-1}\wt F(\wt P)
        \text{\,.}
    \end{align}
    Consider the set of diagonal matrices $\D:=\{\wt P\in\BR^{\dstate,\dstate}: \wt P \text{ is diagonal}\}$.
    We claim that the DGARE operator in \cref{eq:dgare_tilde} preserves $\D$:
    if $\wt P\in\D$, then the right-hand side of \cref{eq:dgare_tilde} is in $\D$.
    Indeed, since $\Lambda_{A},\Lambda_{B},\Lambda_{D},\Lambda_{Q},\Lambda_{R}$ are diagonal, the matrix $\Lambda_A\wt P\Lambda_A+\Lambda_Q$ is diagonal.
    Write the diagonal blocks of $\wt H(\wt P;\gamma)$ as
    \begin{align*}
        M:=\Lambda_R+\Lambda_B^\top\wt P\Lambda_B,
        \qquad
        N:=\Lambda_B^\top\wt P\Lambda_D,
        \qquad
        L:=\Lambda_D^\top\wt P\Lambda_D-\gamma^2 I
        \text{\,.}
    \end{align*}
    These matrices are diagonal, hence they commute.
    The commutative block-inverse formula gives
    \begin{align*}
        \wt H(\wt P;\gamma)^{-1}
        =
        \begin{pmatrix}
            L\Delta^{-1} & -N\Delta^{-1}\\
            -N\Delta^{-1} & M\Delta^{-1}
        \end{pmatrix}
        \text{\,,}
    \end{align*}
    where $\Delta:=ML-N^2$.
    Define the diagonal matrices
    \begin{align*}
        G:=\Lambda_B^\top\wt P\Lambda_A,
        \qquad
        J:=\Lambda_D^\top\wt P\Lambda_A,
        \text{\,.}
    \end{align*}
    Since $\wt F(\wt P)=\begin{pmatrix}G\\ J\end{pmatrix}$, unfolding the multiplication gives
    \begin{align*}
        \wt F(\wt P)^\top \wt H(\wt P;\gamma)^{-1}\wt F(\wt P)
        &=
        \begin{pmatrix}
            G^\top & J^\top
        \end{pmatrix}
        \begin{pmatrix}
            L\Delta^{-1} & -N\Delta^{-1}\\
            -N\Delta^{-1} & M\Delta^{-1}
        \end{pmatrix}
        \begin{pmatrix}
            G\\
            J
        \end{pmatrix}\\
        &=
        G^\top L\Delta^{-1}G
        -
        G^\top N\Delta^{-1}J
        -
        J^\top N\Delta^{-1}G
        +
        J^\top M\Delta^{-1}J
        \text{\,.}
    \end{align*}
    Each term in the final expression is a product of diagonal matrices, hence $\wt F(\wt P)^\top \wt H(\wt P;\gamma)^{-1}\wt F(\wt P)$ is diagonal.
    This proves invariance: the DGARE map of~\cref{eq:dgare_tilde} maps $\D$ to itself. 
    Now, by assumption there exists a unique stabilizing PSD solution $P_\infty(Q;\gamma)$ to the DGARE of~\cref{eq:dgare}.
    Equivalently, there exists a unique stabilizing PSD solution $\wt P_\infty:=V^\top P_\infty(Q;\gamma)V$ to \cref{eq:dgare_tilde}.
    Therefore by standard results on the convergence of DGAREs~\citep{dragan2008iterative,aberkane2023deterministic}, repeated application of the DGARE in \cref{eq:dgare_tilde} starting from $P=\0$ converges to $\wt P_\infty$.
    This together with the invariance of $\D$ under the DGARE map implies that $\wt P_\infty\in\D$, \ie~the matrix $V^\top P_\infty(Q;\gamma)V$ is diagonal.
    It remains to show that the corresponding $\wt K_u(Q;\gamma)$ (\cref{eq:kinf_level_gamma}) is diagonal in the same basis.
    Evaluate $-\wt H(\wt P_\infty;\gamma)^{-1}\wt F(\wt P_\infty)$ using the same block inverse:
    \begin{align*}
        -\wt H(\wt P_\infty;\gamma)^{-1}\wt F(\wt P_\infty)
        =
        -
        \begin{pmatrix}
            L^\infty(\Delta^\infty)^{-1}G^\infty-N^\infty(\Delta^\infty)^{-1}J^\infty\\
            -N^\infty(\Delta^\infty)^{-1}G^\infty+M^\infty(\Delta^\infty)^{-1}J^\infty
        \end{pmatrix}
        \text{\,,}
    \end{align*}
    where the superscript $\infty$ denotes evaluation at $\wt P_\infty=\diag(p_1^\infty,\dots,p_{\dstate}^\infty)$.
    The matrix $\wt K_u(Q;\gamma)$ is the upper block of this stacked matrix.
    Therefore,
    \begin{align*}
        \wt K_u(Q;\gamma)
        =
        -\left(L^\infty(\Delta^\infty)^{-1}G^\infty-N^\infty(\Delta^\infty)^{-1}J^\infty\right)
        \text{\,.}
    \end{align*}
    Since $L^\infty,\Delta^\infty,G^\infty,N^\infty,$ and $J^\infty$ are diagonal, the above is a sum of products of diagonal matrices hence it is diagonal.
    Transforming back gives $\Kinf(Q;\gamma)=V\wt K_u(Q;\gamma)V^\top$, hence $V^\top \Kinf(Q;\gamma)V$ is diagonal.
    Namely, this holds for $\Kinf(Q)=\Kinf(Q;\gammainf(Q))$ as required.
\end{proof}

\cref{prop:diag_hinf} implies that when restricting to the simultaneously diagonalizable set $\Theta_{V}$, the mapping $Q\mapsto \Kinf(Q)$ can be viewed in the diagonal basis as the mapping
\begin{align}
    \label{eq:diag_hinf_map}
    \qbf \mapsto \kbfsafe(\qbf)
    \text{\,,}
\end{align}
where for any $Q=V\diag(\qbf)V^\top\in\Theta_V$ we define $\kbfsafe(\qbf)\in\BR^{\dstate}$ through
\begin{align*}
    \Kinf(Q)=V\diag(\kbfsafe(\qbf))V^\top
    \text{\,.}
\end{align*}
We proceed to showing that the Lipschitz behavior in the original coordinates is bounded by the Lipschitz behavior of this diagonal-basis mapping. For completion, we first define the notion of Lipschitz constant for a mapping between vectors.

\begin{definition}
    \label{def:lip_vec}
    Let $\V\subseteq\BR^{\dstate}$ and let  $f:\V\to\BR^{\dstate}$ be some function. We denote the \emph{Lipschitz constant} of $f$ as $\lip(f)$, given by
    \begin{align*}
        \lip(f) := \sup_{\substack{\vbf_{1}, \vbf_{2} \in \V\,,\, \vbf_{1}\neq \vbf_{2}}} \frac{\norm{f(\vbf_{1})-f(\vbf_{2})}_{2}}{\norm{\vbf_{1}-\vbf_{2}}_2}
        \text{\,.}
    \end{align*}
\end{definition}

\begin{lemma}\label{lemma:diag_hinf_lip}
    Consider the set $\D_Q\subseteq\BR^{\dstate}_{\ge 0}$ given by
    \begin{align*}
        \D_{Q}:=\left\{\qbf\in\BR^{\dstate}_{\ge 0}:\ \norm{\qbf}_2\leq 1\right\}
        \text{\,.}
    \end{align*}
    Let $\kbfsafe:\D_Q\to\BR^{\dstate}$ be the mapping
    \begin{align*}
        \kbfsafe(\qbf)=\left(\ksafe_1(\qbf),\dots,\ksafe_{\dstate}(\qbf)\right)
        \text{\,,}
    \end{align*}
    where $\ksafe_j(\qbf)$ is the $j$-th diagonal entry of the matrix $\Kinf(\cdot)$ in the diagonal basis $V$ as in \cref{eq:diag_hinf_map}, so that for any $Q=V\diag(\qbf)V^\top\in\Theta_V$,
    \begin{align*}
        \Kinf(Q)=V\diag(\kbfsafe(\qbf))V^\top
        \text{\,.}
    \end{align*}
    Let $Q^{(0)}\in\Theta_V$ and write $Q^{(0)}=V\diag(\qbf^{(0)})V^\top$ with $\qbf^{(0)}\in\D_Q$.
    For any $j\in[\dstate]$, let $g_j:\BR\to\BR$ be the function defined by
    \begin{align*}
        g_j(h)=\ksafe_j(\qbf^{(0)}+h\ebf_j)
        \text{\,.}
    \end{align*}
    For any $j\in[\dstate]$, if $q_{j}^{(0)}\in(0,1)$ and $g_j(\cdot)$ is differentiable at $h=0$, then the Lipschitz constant of $\Kinf:\Theta_{V}\to\BR^{\dstate,\dstate}$ satisfies
    \begin{align*}
        \lip(\Kinf)
        \geq 
        |g_j'(0)|
        \text{\,.}
    \end{align*}
\end{lemma}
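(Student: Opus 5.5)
The plan is to exhibit explicit pairs of matrices in $\Theta_V$ whose difference quotients approach $|g_j'(0)|$, so that the supremum defining $\lip(\Kinf)$ is at least that value. Fix $j$ with $q_j^{(0)}\in(0,1)$ and $g_j$ differentiable at $0$.

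The first step is to make sure the perturbation stays inside the domain. For small $|h|$, set $\qbf^{(h)}:=\qbf^{(0)}+h\ebf_j$. Nonnegativity holds since $q_j^{(0)}>0$, so $q_j^{(0)}+h\ge 0$ whenever $|h|<q_j^{(0)}$. The norm constraint $\norm{\qbf^{(h)}}_2\le 1$ is the delicate point: $q_j^{(0)}<1$ alone does not guarantee it if $\norm{\qbf^{(0)}}_2=1$. I would therefore use one-sided perturbations. Since $\norm{\qbf^{(h)}}_2^2=\norm{\qbf^{(0)}}_2^2+2hq_j^{(0)}+h^2$, taking $h<0$ with $|h|<2q_j^{(0)}$ strictly decreases the norm, so $\qbf^{(h)}\in\D_Q$ for all small negative $h$. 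Differentiability of $g_j$ at $0$ means the one-sided limit $h\to0^-$ of $(g_j(h)-g_j(0))/h$ still equals $g_j'(0)$, so one side suffices. Hence $Q^{(h)}:=V\diag(\qbf^{(h)})V^\top\in\Theta_V$ for all $h\in(-\epsilon,0)$.

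Next, I would compute both Frobenius norms using orthogonal invariance. The input difference is
\begin{align*}
\norm{Q^{(h)}-Q^{(0)}}_F=\norm{V\,h\ebf_j\ebf_j^\top V^\top}_F=|h|\text{\,.}
\end{align*}
By \cref{prop:diag_hinf} and the definition of $\kbfsafe$, the output difference is
\begin{align*}
\norm{\Kinf(Q^{(h)})-\Kinf(Q^{(0)})}_F=\norm{\kbfsafe(\qbf^{(h)})-\kbfsafe(\qbf^{(0)})}_2\ge|\ksafe_j(\qbf^{(h)})-\ksafe_j(\qbf^{(0)})|=|g_j(h)-g_j(0)|\text{\,.}
\end{align*}

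Combining the two, $\lip(\Kinf)\ge |g_j(h)-g_j(0)|/|h|$ for every $h\in(-\epsilon,0)$. Letting $h\to0^-$ gives $\lip(\Kinf)\ge|g_j'(0)|$. The Lipschitz constant over $\Theta_V$ lower-bounds the one over $\Theta$, as needed later, because $\Theta_V\subseteq\Theta$.

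The only real obstacle is the domain boundary issue handled above by the one-sided perturbation. Everything else is direct.
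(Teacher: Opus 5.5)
Your proof is correct and follows essentially the paper's argument: orthogonal invariance reduces to the vector map $\kbfsafe$, then you restrict the supremum to the pair $\qbf^{(0)}+h\ebf_j,\ \qbf^{(0)}$ and let $h\to 0$. Your one-sided choice $h\in(-q_j^{(0)},0)$ is more careful than the paper, which asserts $\qbf^{(0)}+h\ebf_j\in\D_Q$ for all small $h$ of either sign; that claim fails when $\norm{\qbf^{(0)}}_2=1$, though it is harmless in the later application, where $\qbf^{(0)}$ is interior.
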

\begin{proof}
    For any $Q^{(1)},Q^{(2)}\in\Theta_V$ with $Q^{(i)}=V\diag(\qbf^{(i)})V^\top$ and $\qbf^{(i)}\in\D_Q$, orthogonality of $V$ implies that
    \begin{align*}
        \norm{\Kinf(Q^{(1)})-\Kinf(Q^{(2)})}_F
        &=\norm{\diag(\kbfsafe(\qbf^{(1)}))-\diag(\kbfsafe(\qbf^{(2)}))}_F
        \\
        &=\norm{\kbfsafe(\qbf^{(1)})-\kbfsafe(\qbf^{(2)})}_2
        \text{\,,}
    \end{align*}
    and similarly $\norm{Q^{(1)}-Q^{(2)}}_F=\norm{\qbf^{(1)}-\qbf^{(2)}}_2$.
    Hence by \cref{def:lip_vec}
    \begin{align*}
        \lip(\Kinf)&\geq \sup_{Q^{(1)},Q^{(2)}\in\Theta_{V}\,,\, Q^{(1)}\neq Q^{(2)}}
        \frac{\norm{\Kinf(Q^{(1)})-\Kinf(Q^{(2)})}_F}{\norm{Q^{(1)}-Q^{(2)}}_F}\\
        &=\sup_{\qbf^{(1)},\qbf^{(2)}\in\D_Q\,,\,\qbf^{(1)}\neq \qbf^{(2)}}
        \frac{\|\kbfsafe(\qbf^{(1)})-\kbfsafe(\qbf^{(2)})\|_2}{\|\qbf^{(1)}-\qbf^{(2)}\|_2}\\
        &=\lip(\kbfsafe)
        \text{\,.}
    \end{align*}
    The fact that $q_j^{(0)}\in(0,1)$ implies that $\qbf^{(0)}+h \ebf_j\in \D_Q$ for all sufficiently small $h\in\BR$.
    Using the definition of $\lip(\kbfsafe)$ and restricting the supremum to the pair $\qbf^{(1)}=\qbf^{(0)}+h \ebf_j$ and $\qbf^{(2)}=\qbf^{(0)}$, we obtain for all sufficiently small $h\neq 0$ that
    \begin{align*}
        \lip(\kbfsafe)\geq
        \frac{\norm{\kbfsafe(\qbf^{(0)}+h \ebf_j)-\kbfsafe(\qbf^{(0)})}_2}{\norm{(\qbf^{(0)}+h \ebf_j)-\qbf^{(0)}}_2}
        \geq
        \frac{\left|\ksafe_j(\qbf^{(0)}+h\ebf_j)-\ksafe_j(\qbf^{(0)})\right|}{|h|}
        \text{\,.}
    \end{align*}
    Taking $h\to 0$ and using differentiability $g_j(\cdot)$ at $0$ yields
    \begin{align*}
        \lip(\kbfsafe)&\geq
        \lim_{h\to 0}\frac{\left|\ksafe_j(\qbf^{(0)}+h\ebf_j)-\ksafe_j(\qbf^{(0)})\right|}{|h|}
        \\
        &=\lim_{h\to 0}\frac{|g_{j}(h)-g_{j}(0)|}{|h|}\\
        &=|g_{j}'(0)|
        \text{\,.}
    \end{align*}
    as required.
\end{proof}

\subsection{Scalar Solutions and Stacked Mapping}
\label{app:separation:stack}

We now turn to the individual scalar $\Hinf$-robust control problems. 
In this subsection, we characterize the scalar solution in each coordinate $k_{j}^{*}(\cdot)$ and the corresponding scalar optimal value $\Gamma_j^*(\cdot)$. 
We then show that the mapping obtained by stacking these scalar minimizers, denoted $K^{*}(\cdot)$, solves the $\Hinf$-robust control problem.
Although potentially different from the mapping $\Kinf(\cdot)$ synthesized by the coupled block DGARE, the former admits a closed form expression allowing for concrete analysis.
We will later relate it to the coupled block DGARE mapping we are interested in.

We begin by formulating the closed form expression for the scalar objective. For completeness, we first provide the classical definition of the discrete-time Fourier transform which is used in the proof.
\begin{definition}\label{def:dtft}
    Let $\{z_{t}\}_{t=0}^{\infty}$ be a sequence of complex numbers such that $\sum_{t=0}^{\infty}|z_{t}|^{2}<\infty$.
    We define the \emph{discrete-time Fourier transform} of $\{z_{t}\}_{t=0}^{\infty}$ as the function $\hat{z}:[0,2\pi]\to\BC$ given by
    \begin{align*}
        \hat{z}(\omega)=\sum_{t=0}^{\infty}z_{t}e^{-i\omega t}
        \text{\,.}
    \end{align*}
\end{definition}

\begin{lemma}\label{lemma:scalar_hinf_obj}
    Fix $j\in[\dstate]$. 
    For any $q_{j}\in[0,1]$ and $k_{j}\in\BR$ such that $|a_{j}+b_{j}k_{j}|<1$ it holds that the robust scalar objective in coordinate $j$ is given by
    \begin{align*}
        \Gamma_{j}(q_{j},k_{j}):=\sup_{w_{t}}\frac{\sum_{t=0}^{\infty}q_{j}x_{t}^{2}+r_{j}(k_{j}x_{t})^{2}}{\sum_{t=0}^{\infty}w_{t}^{2}}=d_{j}^{2}\frac{q_{j}+r_{j}k_{j}^{2}}{(1-|a_{j}+b_{j}k_{j}|)^{2}}
        \text{\,,}
    \end{align*}
    where the state $x_{t}$ is given by $x_{t+1}=a_{j}x_{t}+b_{j}k_{j}x_{t}+d_{j}w_{t}$ with $x_{0}=0$.
\end{lemma}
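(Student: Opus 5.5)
The plan is to reduce the scalar problem to computing the $\ell_2$-induced gain of a first-order stable filter, and to evaluate that gain in the frequency domain via \cref{def:dtft} and Parseval's identity. Write $c_j := a_j + b_j k_j$, so that $|c_j|<1$ by hypothesis and the closed loop reads $x_{t+1} = c_j x_t + d_j w_t$ with $x_0 = 0$. The first observation is that the control cost is itself proportional to the state energy:
\[
\sum_{t} q_j x_t^2 + r_j (k_j x_t)^2 = (q_j + r_j k_j^2)\sum_t x_t^2 .
\]
Hence
\[
\Gamma_j(q_j,k_j) = (q_j + r_j k_j^2)\,\sup_{w} \frac{\sum_t x_t^2}{\sum_t w_t^2}.
\]
It therefore remains to show that this supremum equals $d_j^2/(1-|c_j|)^2$. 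If $d_j = 0$ then $x\equiv 0$ and both sides vanish, so I will assume $d_j \neq 0$.

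\textbf{Upper bound.} Unrolling the recursion gives the convolution $x_t = d_j\sum_{s=0}^{t-1} c_j^{\,t-1-s} w_s$. Since $|c_j|<1$ the kernel is absolutely summable, so $x\in\ell_2$ whenever $w\in\ell_2$. Taking transforms (\cref{def:dtft}) of the recursion yields
\[
\hat x(\omega) = \frac{d_j e^{-i\omega}}{1 - c_j e^{-i\omega}}\,\hat w(\omega).
\]
By Parseval, $\sum_t x_t^2 = \frac{1}{2\pi}\int_0^{2\pi}|\hat x|^2$, and the same holds for $w$. This gives
\[
\sum_t x_t^2 \le d_j^2 \sup_\omega |1-c_j e^{-i\omega}|^{-2}\sum_t w_t^2 .
\]
Since $|1 - c_j e^{-i\omega}| \ge 1-|c_j|$, with equality at $\omega = 0$ if $c_j\ge 0$ and at $\omega=\pi$ if $c_j<0$, the ratio is at most $d_j^2/(1-|c_j|)^2$.

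\textbf{Lower bound.} This is the step needing care, because the supremum is generally not attained in $\ell_2$. I will use a truncated resonant input: $w_t = \sigma^t$ for $t<T$ and $w_t=0$ otherwise, where $\sigma = \mathrm{sign}(c_j)$ (taking $\sigma=1$ if $c_j=0$). With this input, $\sigma^{t} x_t$ follows the positive recursion with coefficient $|c_j|$. Consequently, for $t$ between $\sqrt T$ and $T$ one has $|x_t| \ge |d_j|(1-|c_j|^{\sqrt T})/(1-|c_j|)$. Therefore
\[
\sum_t x_t^2 \ge (T-\sqrt T)\, d_j^2 (1-|c_j|^{\sqrt T})^2/(1-|c_j|)^2 ,
\]
while $\sum_t w_t^2 = T$. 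Letting $T\to\infty$ drives the ratio to $d_j^2/(1-|c_j|)^2$.

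Multiplying the two bounds by $q_j + r_j k_j^2$ gives the claimed formula. I expect the lower-bound construction to be the only nonroutine point. The remaining steps, namely the validity of Parseval for $\ell_2$ sequences and the convolution representation, are standard once $|c_j|<1$ is in hand.
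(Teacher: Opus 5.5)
Your proof is correct. The reduction $\Gamma_j=(q_j+r_jk_j^2)\sup_w \sum_t x_t^2/\sum_t w_t^2$ and the upper bound via the DTFT and Parseval are the same as in the paper; the paper writes the transfer function as $d_j/(e^{i\omega}-\xi)$, which equals your expression.

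The lower bound is where you take a different route. You use the same test input as the paper: applying \cref{lemma:dtft_approx} with $\omega^{*}\in\{0,\pi\}$ produces exactly the truncated resonant sequence $w_t=\sigma^t$ for $t<T$. The paper then shows that the energy ratio converges to $f(\omega^{*})$ by appealing to Fej\'er's theorem in the frequency domain. You instead compute the response directly in the time domain, getting $|x_t|=|d_j|(1-|c_j|^t)/(1-|c_j|)$ for $1\le t\le T$. (Strictly, it is $\sigma^{t-1}x_t$ that satisfies $z_{t+1}=|c_j|z_t+d_j$, while $\sigma^t x_t$ picks up a forcing term $\sigma d_j$; only a sign changes and your magnitudes are right.)

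Your route is self-contained and gives an explicit convergence rate. It also avoids checking continuity of $f$, realness of the approximating sequences, and the Fej\'er-kernel approximation argument. The paper's frequency-domain lemma is more general, since it works for a peak of $|\hat h|^2$ at any frequency. In this first-order scalar setting, though, the peak is always at $0$ or $\pi$, so that generality is not needed. Treating $d_j=0$ separately is harmless; the paper's argument covers it implicitly because $f\equiv 0$ in that case.
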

\begin{proof}
    Denote $\xi:=a_{j}+b_{j}k_{j}$. For $t\in\BN$ define $h_{t}:=\xi ^{t-1}d_{j}$, and additionally define $h_{0}:=0$.
    For any $t\in\BN$, unrolling of the state $x_{t}$ yields a standard convolution expression of the form
    \begin{align*}
        x_{t}=\sum_{\tau=0}^{t-1}\xi ^{t-1-\tau}d_{j}w_{\tau}=\sum_{\tau=0}^{t-1}h_{t-\tau}w_{\tau}
        \text{\,.}
    \end{align*}
    Observe that since $|\xi|<1$ it holds that 
    \begin{align*}
        \sum_{t=0}^{\infty}|h_{t}|=\sum_{t=1}^{\infty}|\xi ^{t-1}d_{j}|=\sum_{t=1}^{\infty}|\xi|^{t-1}|d_{j}|=\frac{|d_{j}|}{1-|\xi|}<\infty
    \end{align*}
    and
    \begin{align*}
        \sum_{t=0}^{\infty}h_{t}^{2}=\sum_{t=1}^{\infty}(\xi ^{t-1}d_{j})^{2}=\sum_{t=1}^{\infty}\xi ^{2(t-1)}d_{j}^{2}=\frac{d_{j}^{2}}{1-\xi ^{2}}<\infty
        \text{\,.}
    \end{align*}
    Let $\{w_{t}\}_{t=0}^{\infty}$ be a sequence of real numbers with $\sum_{t=0}^{\infty}w_{t}^{2}<\infty$.
    Viewing the real sequences $\{x_t\}_{t=0}^\infty$, $\{h_t\}_{t=0}^\infty$, and $\{w_t\}_{t=0}^\infty$ as complex-valued sequences when applying the Fourier-analytic results from \cref{app:aux}, \cref{lemma:dtft_product} gives that the discrete-time Fourier transform of $\{x_{t}\}_{t=0}^{\infty}$ is the function $\hat{x}:[0,2\pi]\to\BC$ given by
    \begin{align*}
        \hat{x}(\omega)=\hat{h}(\omega)\hat{w}(\omega)
        \text{\,.}
    \end{align*}
    by \cref{def:dtft} it holds that
    \begin{align*}
        \hat{h}(\omega)=\sum_{t=1}^{\infty}\xi ^{t-1}d_{j}e^{-i\omega t}=d_{j}e^{-i\omega}\sum_{t=1}^{\infty}\xi ^{t-1}e^{-i\omega (t-1)}=\frac{d_{j}e^{-i\omega}}{1-\xi e^{-i\omega}}=\frac{d_{j}}{e^{i\omega}-\xi}
        \text{\,.}
    \end{align*}
    Since $|\xi|<1$ it holds that the function $f:[0,2\pi]\to\BR$ given by 
    \begin{align*}
        f(\omega):=|\hat{h}(\omega)|^{2}=\frac{d_{j}^{2}}{\left|e^{i\omega}-\xi\right|^{2}}
    \end{align*}
    is continuous on $[0,2\pi]$.
    Hence, by \cref{thm:plancherel} it holds that
    \begin{align*}
        \sum_{t=0}^{\infty}x_{t}^{2}&=\frac{1}{2\pi}\int_{0}^{2\pi}|\hat{x}(\omega)|^{2}d\omega\\
        &=\frac{1}{2\pi}\int_{0}^{2\pi}|\hat{h}(\omega)|^{2}|\hat{w}(\omega)|^{2}d\omega\\
        &\leq \left(\sup_{\omega\in[0,2\pi]} f(\omega)\right)\frac{1}{2\pi}\int_{0}^{2\pi}|\hat{w}(\omega)|^{2}d\omega\\
        &=\left(\sup_{\omega\in[0,2\pi]} f(\omega)\right)\sum_{t=0}^{\infty}w_{t}^{2}
        \text{\,.}
    \end{align*}
    This implies that
    \begin{align*}
        \frac{\sum_{t=0}^{\infty}x_{t}^{2}}{\sum_{t=0}^{\infty}w_{t}^{2}}\leq \sup_{\omega\in[0,2\pi]} f(\omega)
        \text{\,.}
    \end{align*}
    On the other hand, by \cref{thm:plancherel} it holds that
    \begin{align*}
        \frac{\sum_{t=0}^{\infty}x_{t}^{2}}{\sum_{t=0}^{\infty}w_{t}^{2}}=\frac{\frac{1}{2\pi}\int_{0}^{2\pi}f(\omega)|\hat{w}(\omega)|^{2}d\omega}{\frac{1}{2\pi}\int_{0}^{2\pi}|\hat{w}(\omega)|^{2}d\omega}=\frac{\int_{0}^{2\pi}f(\omega)|\hat{w}(\omega)|^{2}d\omega}{\int_{0}^{2\pi}|\hat{w}(\omega)|^{2}d\omega}
    \end{align*}
    Let $\omega^{*}\in[0,2\pi]$ be such that $f(\omega^{*})=\sup_{\omega\in[0,2\pi]} f(\omega)$ (exists since $f(\cdot)$ is continuous on $[0,2\pi]$). 
    Since $|\xi|<1$ and $|e^{i\omega}|=1$ for all $\omega\in[0,2\pi]$ it holds by the reverse triangle inequality that
    \begin{align*}
        |e^{i\omega}-\xi|\geq ||e^{i\omega}|-|\xi||= 1-|\xi|
    \end{align*}
    with equality achieved at $\omega=0$ when $\xi\geq 0$ and at $\omega=\pi$ when $\xi<0$.
    Using the notation of \cref{lemma:dtft_approx}, we can thus take $\omega^*\in\{0,\pi\}$ so the sequences given by the Lemma are real-valued.
    It states that there exists a series of real-valued sequences $\left\{\left\{w_{t}^{(n)}\right\}_{t=0}^{\infty}\right\}_{n=1}^{\infty}$ whose discrete-time Fourier transforms $\left\{\hat{w}^{(n)}(\cdot)\right\}_{n=1}^{\infty}$ satisfy
    \begin{align*}
        \lim_{n\to\infty}\frac{\int_{0}^{2\pi}f(\omega)|\hat{w}^{(n)}(\omega)|^{2}d\omega}{\int_{0}^{2\pi}|\hat{w}^{(n)}(\omega)|^{2}d\omega}
        =
        f(\omega^{*})
        \text{\,.}
    \end{align*}
    Therefore,
    \begin{align*}
        \sup_{w_{t}}\frac{\sum_{t=0}^{\infty}x_{t}^{2}}{\sum_{t=0}^{\infty}w_{t}^{2}}
        =
        \sup_{w_{t}}\frac{\int_{0}^{2\pi}f(\omega)|\hat{w}(\omega)|^{2}d\omega}{\int_{0}^{2\pi}|\hat{w}(\omega)|^{2}d\omega}
        \geq
        \lim_{n\to\infty}\frac{\int_{0}^{2\pi}f(\omega)|\hat{w}^{(n)}(\omega)|^{2}d\omega}{\int_{0}^{2\pi}|\hat{w}^{(n)}(\omega)|^{2}d\omega}
        =
        f(\omega^{*})
        \text{\,.}
    \end{align*}
    Combining the two inequalities, we obtain
    \begin{align*}
        \sup_{w_{t}}\frac{\sum_{t=0}^{\infty}x_{t}^{2}}{\sum_{t=0}^{\infty}w_{t}^{2}}= f(\omega^{*})
        \text{\,.}
    \end{align*}
    Therefore,
    \begin{align*}
        \sup_{w_{t}}\frac{\sum_{t=0}^{\infty}x_{t}^{2}}{\sum_{t=0}^{\infty}w_{t}^{2}}=\frac{d_{j}^{2}}{\min_{\omega\in[0,2\pi]}\left|e^{i\omega}-\xi\right|^{2}}=\frac{d_{j}^{2}}{(1-|\xi|)^{2}}
        \text{\,.}
    \end{align*}
    The proof concludes by noting that
    \begin{align*}
        \Gamma_{j}(q_{j},k_{j})=(q_{j}+r_{j}k_{j}^{2})\sup_{w_{t}}\frac{\sum_{t=0}^{\infty}x_{t}^{2}}{\sum_{t=0}^{\infty}w_{t}^{2}}
    \end{align*}
    and plugging back $\xi=a_j+b_jk_j$.
\end{proof}

Having characterized the robust scalar objective, we are now ready to characterize the solution to the robust scalar control problem in coordinate $j$ for a given cost parameter $q_{j}$.
\begin{proposition}\label{prop:scalar_hinf_sol}
    Fix $j\in[\dstate]$. Assume that $a_{j}\neq 0$ and $b_{j}\neq 0$. 
    For any $q_{j}\in[0,\frac{|a_{j}|r_{j}(1-|a_{j}|)}{b_{j}^{2}})$ the robust scalar objective $\Gamma_{j}(q_{j},k_{j})$ from \cref{lemma:scalar_hinf_obj} has a unique minimizer over the interval
    \begin{align*}
        \left\{k_{j}\in\BR: |a_{j}+b_{j}k_{j}|<1\right\}
    \end{align*}
    given by
    \begin{align*}
        k_{j}^{*}(q_{j})=-\frac{a_j}{|a_j|}\frac{b_{j}}{r_{j}(1-|a_{j}|)}q_{j}
        \text{\,.}
    \end{align*}
    Additionally, the corresponding optimal scalar value
    \begin{align*}
        \Gamma_j^*(q_j)
        :=
        \inf_{\substack{k_j\in\BR\\ |a_j+b_jk_j|<1}}\Gamma_j(q_j,k_j)
        =
        \Gamma_j(q_j,k_j^*(q_j))
        \text{\,,}
    \end{align*}
    satisfies
    \begin{align*}
        \Gamma_j^*(q_j)
        =
        \frac{d_j^2}{(1-|a_j|)^2}\cdot\frac{q_j}{1+\frac{b_j^2}{r_j(1-|a_j|)^2}q_j}
        \text{\,.}
    \end{align*}
    In particular, the mapping $q_j\mapsto \Gamma_j^*(q_j)$ is continuous on the interval $[0,\frac{|a_{j}|r_{j}(1-|a_{j}|)}{b_{j}^{2}})$.
\end{proposition}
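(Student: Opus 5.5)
The plan is to reparametrize by the closed-loop coefficient and reduce everything to a one-variable calculus problem. Since $b_j \neq 0$, the map $k_j \mapsto \xi := a_j + b_j k_j$ is an affine bijection from $\{k_j : |a_j+b_jk_j|<1\}$ onto $(-1,1)$, with inverse $k_j = (\xi - a_j)/b_j$. Writing $c_j := r_j/b_j^2 > 0$, \cref{lemma:scalar_hinf_obj} gives
\begin{align*}
    \Gamma_j\big(q_j,(\xi-a_j)/b_j\big) = d_j^2\,\varphi(\xi)
    \,, \qquad
    \varphi(\xi) := \frac{q_j + c_j(\xi-a_j)^2}{(1-|\xi|)^2}
    \text{\,.}
\end{align*}
So it suffices to show that $\varphi$ has a unique minimizer on $(-1,1)$ and to compute it. (If $d_j = 0$ every $k_j$ minimizes, so $d_j \neq 0$ is implicitly needed for uniqueness; I would treat $d_j \neq 0$ as the case of interest, since \cref{ass:regular} supplies it where the result is used.)

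The first step is a symmetry reduction to handle the absolute value. The substitution $(a_j,\xi) \mapsto (-a_j,-\xi)$ leaves $\varphi$ invariant, so I treat $a_j > 0$ and recover $a_j<0$ through the factor $a_j/|a_j|$. For $a_j > 0$ and $\xi < 0$, replacing $\xi$ by $-\xi$ leaves the denominator unchanged. It strictly decreases $(\xi - a_j)^2$, because $|{-\xi} - a_j| < |\xi| + a_j = |\xi - a_j|$. Hence any minimizer lies in $[0,1)$, where $\varphi(\xi) = (q_j + c_j(\xi-a_j)^2)/(1-\xi)^2$ is smooth.

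Next I differentiate on $[0,1)$. A short computation gives
\begin{align*}
    \varphi'(\xi) = \frac{2\big(c_j(1-a_j)(\xi - a_j) + q_j\big)}{(1-\xi)^3}
    \text{\,.}
\end{align*}
The numerator is strictly increasing and affine in $\xi$, since $c_j(1-a_j)>0$ by $a_j = |a_j| \leq \Anorm < 1$. It vanishes exactly at
\begin{align*}
    \xi^* = a_j - \frac{q_j}{c_j(1-a_j)}
    \text{\,.}
\end{align*}
We have $\xi^* \leq a_j < 1$, and the hypothesis $q_j < a_j r_j(1-a_j)/b_j^2 = a_j c_j (1-a_j)$ is precisely $\xi^* > 0$. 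So $\varphi$ decreases on $[0,\xi^*]$ and increases on $[\xi^*,1)$, and $\xi^*$ is the unique minimizer. Mapping back gives
\begin{align*}
    k_j^* = \frac{\xi^* - a_j}{b_j} = -\frac{b_j q_j}{r_j(1-a_j)}
    \text{\,,}
\end{align*}
which is the claimed formula with the sign $a_j/|a_j|$ restored for $a_j<0$.

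For the optimal value, substitute $\xi^* - a_j = -q_j/(c_j(1-a_j))$ and $1-\xi^* = (1-a_j)\big(1 + q_j/(c_j(1-a_j)^2)\big)$. The numerator of $\varphi(\xi^*)$ factors as $q_j\big(1 + q_j/(c_j(1-a_j)^2)\big)$, and one factor cancels against the squared denominator. This yields the stated closed form for $\Gamma_j^*(q_j)$. That expression is a rational function of $q_j$ with positive denominator on the interval, so continuity is immediate.

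The main obstacle I expect is the non-smoothness of $|\xi|$ together with keeping the minimizer inside the admissible region. The reflection argument removes the first issue, and the upper bound on $q_j$ is exactly what guarantees $\xi^* \in (0,1)$. I would double-check the boundary case $q_j = 0$, where $\xi^* = a_j$ and $k_j^* = 0$, which is consistent with the formula.
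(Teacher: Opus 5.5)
Your proof is correct and follows essentially the paper's route: an affine reparametrization of $k_j$ (your $\xi=a_j+b_jk_j$ is the paper's $\tilde{\zeta}_j$ up to $\tilde{\zeta}_j=|a_j|-\tfrac{a_j}{|a_j|}\xi$), a split on whether the closed-loop coefficient has the same sign as $a_j$, and a derivative analysis on the same-sign branch giving the same critical point and value. The only difference is that you rule out the opposite-sign branch by a reflection comparison rather than by showing $\partial\Gamma_j/\partial\tilde{\zeta}_j>0$ there. Your caveat that uniqueness needs $d_j\neq 0$ is correct, since the paper's sign analysis of the derivative tacitly assumes it too.
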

\begin{proof}
    Define the scalar quantity $\zeta_j:=b_jk_j$. 
    Since $b_j\neq 0$, minimizing over $k_j\in\BR$ with $|a_j+b_jk_j|<1$ is equivalent to minimizing over $\zeta_j\in\BR$ with $|a_j+\zeta_j|<1$.
    Define $s_j:=\frac{a_j}{|a_j|}\in\{-1,1\}$ and $\wt{\zeta}_j:=-s_j \zeta_j$.
    By \cref{lemma:scalar_hinf_obj}, we obtain that
    \begin{align*}
        \Gamma_j(q_j,k_j)
        &=d_j^2\frac{q_j+r_jk_j^2}{(1-|a_j+b_jk_j|)^2}\\
        &=d_j^2\frac{q_j+\frac{r_j}{b_j^2}\zeta_j^2}{(1-|a_j+\zeta_j|)^2}\\
        &=d_j^2\frac{q_j+\frac{r_j}{b_j^2}s_j^2\zeta_j^2}{(1-|s_ja_j+s_j\zeta_j|)^2}\\
        &=d_j^2\frac{q_j+\frac{r_j}{b_j^2}\wt{\zeta}_j^2}{(1-||a_j|-\wt{\zeta}_j|)^2}
        \text{\,.}
    \end{align*}
    Therefore, minimizing over the interval $\zeta_j\in(-a_j-1,-a_j+1)$ is equivalent to minimizing over the interval $\wt{\zeta}_j\in(|a_j|-1,|a_j|+1)$.
    We analyze the above expression on the two branches $\wt{\zeta}_j\leq |a_j|$ and $\wt{\zeta}_j>|a_j|$.
    On the branch $\wt{\zeta}_j\leq |a_j|$, it holds that $||a_j|-\wt{\zeta}_j|=|a_j|-\wt{\zeta}_j$, hence
    \begin{align*}
        \Gamma_j(q_j,k_j)
        =
        d_j^2\frac{q_j+\frac{r_j}{b_j^2}\wt{\zeta}_j^2}{(1-|a_j|+\wt{\zeta}_j)^2}
        \text{\,.}
    \end{align*}
    Differentiating with respect to $\wt{\zeta}_j$ gives
    \begin{align*}
        \frac{\partial \Gamma_j}{\partial \wt{\zeta}_j}
        &=
        d_j^2\cdot
        \frac{2\frac{r_j}{b_j^2}\wt{\zeta}_j(1-|a_j|+\wt{\zeta}_j)^2-2(q_j+\frac{r_j}{b_j^2}\wt{\zeta}_j^2)(1-|a_j|+\wt{\zeta}_j)}
        {(1-|a_j|+\wt{\zeta}_j)^4}\\
        &=
        2d_j^2\cdot
        \frac{\frac{r_j}{b_j^2}\wt{\zeta}_j(1-|a_j|+\wt{\zeta}_j)-(q_j+\frac{r_j}{b_j^2}\wt{\zeta}_j^2)}
        {(1-|a_j|+\wt{\zeta}_j)^3}\\
        &=
        2d_j^2\cdot
        \frac{\frac{r_j}{b_j^2}\wt{\zeta}_j(1-|a_j|)-q_j}{(1-|a_j|+\wt{\zeta}_j)^3}
        \text{\,.}
    \end{align*}
    Since $1-|a_j|+\wt{\zeta}_j>0$ on this branch, the unique critical point is
    \begin{align*}
        \wt{\zeta}_j^*(q_j)=\frac{b_j^2}{r_j(1-|a_j|)}q_j
        \text{\,,}
    \end{align*}
    and moreover
    \begin{align*}
        \frac{\partial \Gamma_j}{\partial \wt{\zeta}_j}<0
        \iff
        \wt{\zeta}_j<\wt{\zeta}_j^*(q_j)
        \qquad\text{and}\qquad
        \frac{\partial \Gamma_j}{\partial \wt{\zeta}_j}>0
        \iff
        \wt{\zeta}_j>\wt{\zeta}_j^*(q_j)
        \text{\,.}
    \end{align*}
    Therefore, within the branch $\wt{\zeta}_j\leq |a_j|$, the function has a unique minimizer at $\wt{\zeta}_j^*(q_j)$.
    Since
    \begin{align*}
        \wt{\zeta}_j^*(q_j)<|a_j|
        \iff
        \frac{b_j^2}{r_j(1-|a_j|)}q_j<|a_j|
        \iff
        q_j<\frac{|a_j|r_j(1-|a_j|)}{b_j^2}
        \text{\,,}
    \end{align*}
    the assumed condition on $q_j$ implies that $\wt{\zeta}_j^*(q_j)\in(|a_j|-1,|a_j|]$.
    As for the branch $\wt{\zeta}_j>|a_j|$, it holds that $||a_j|-\wt{\zeta}_j|=\wt{\zeta}_j-|a_j|$, hence
    \begin{align*}
        \Gamma_j(q_j,k_j)
        =
        d_j^2\frac{q_j+\frac{r_j}{b_j^2}\wt{\zeta}_j^2}{(1+|a_j|-\wt{\zeta}_j)^2}
        \text{\,.}
    \end{align*}
    Differentiating with respect to $\wt{\zeta}_j$ yields
    \begin{align*}
        \frac{\partial \Gamma_j}{\partial \wt{\zeta}_j}
        &=
        d_j^2\cdot
        \frac{2\frac{r_j}{b_j^2}\wt{\zeta}_j(1+|a_j|-\wt{\zeta}_j)^2+2(q_j+\frac{r_j}{b_j^2}\wt{\zeta}_j^2)(1+|a_j|-\wt{\zeta}_j)}
        {(1+|a_j|-\wt{\zeta}_j)^4}\\
        &=
        2d_j^2\cdot
        \frac{\frac{r_j}{b_j^2}\wt{\zeta}_j(1+|a_j|)+q_j}{(1+|a_j|-\wt{\zeta}_j)^3}
        \text{\,.}
    \end{align*}
    Since $q_{j}>0$,$\wt{\zeta}_j>|a_j|>0$ and $1+|a_j|-\wt{\zeta}_j>0$ on this branch, it follows that $\frac{\partial \Gamma_j}{\partial \wt{\zeta}_j}>0$ for all $\wt{\zeta}_j\in(|a_j|,|a_j|+1)$.
    Thus the objective is strictly increasing on the branch $\wt{\zeta}_j>|a_j|$, and therefore has no minimizer there.
    Combining the two branches, the global minimizer over $\{k_j\in\BR: |a_j+b_jk_j|<1\}$ is attained at
    \begin{align*}
        k_j^*(q_j)=-\frac{s_j\wt{\zeta}_j^*(q_j)}{b_j}=-\frac{a_j}{|a_j|}\frac{b_j}{r_j(1-|a_j|)}q_j
        \text{\,.}
    \end{align*}
    Consequently,
    \begin{align*}
        \Gamma_j^*(q_j)
        &=
        \Gamma_j(q_j,k_j^*(q_j))\\
        &=
        d_j^2\frac{q_j+r_j\left(-\frac{a_j}{|a_j|}\frac{b_j}{r_j(1-|a_j|)}q_j\right)^2}
        {\left(1-\left|a_j+b_j\left(-\frac{a_j}{|a_j|}\frac{b_j}{r_j(1-|a_j|)}q_j\right)\right|\right)^2}\\
        &=
        d_j^2\frac{q_j+\frac{b_j^2}{r_j(1-|a_j|)^2}q_j^2}
        {\left((1-|a_j|)+\frac{b_j^2}{r_j(1-|a_j|)}q_j\right)^2}\\
        &=
        d_j^2\frac{q_j\left(1+\frac{b_j^2}{r_j(1-|a_j|)^2}q_j\right)}
        {(1-|a_j|)^2\left(1+\frac{b_j^2}{r_j(1-|a_j|)^2}q_j\right)^2}\\
        &=
        \frac{d_j^2}{(1-|a_j|)^2}\cdot\frac{q_j}{1+\frac{b_j^2}{r_j(1-|a_j|)^2}q_j}
        \text{\,.}
    \end{align*}
    The stated continuity of $q_j\mapsto\Gamma_j^*(q_j)$ follows immediately from this explicit formula.
\end{proof}

We now turn to showing that the mapping obtained by stacking the scalar minimizers for each coordinate, $K^{*}(\cdot)$, satisfies the robust objective.
We begin by proving that for any controller matrix $K$ that is diagonal in the basis $V$, the robust objective of \cref{eq:robust_cost_norm} is equal to the maximum of the scalar robust objectives.
\begin{proposition}\label{prop:hinf_obj_decouple}
    Suppose \cref{ass:commute} holds. Let $Q=V\diag(\qbf)V^\top\in\Theta_V$ with $\qbf=(q_1,\dots,q_{\dstate})$, and let $K=V\diag(\kbf)V^\top$ with $\kbf=(k_1,\dots,k_{\dstate})$. 
    Consider the linear policy $\ubf(\xbf):=K\xbf$, and assume that for every $j\in[\dstate]$ it holds that $|a_j+b_jk_j|<1$.
    Then the normalized robust objective of \cref{eq:robust_cost_norm} satisfies
    \begin{align*}
        \sup_{\wbf_t}
        \frac{\sum_{t=0}^{\infty}\xbf_t^\top Q\xbf_t+\ubf(\xbf_t)^\top R\ubf(\xbf_t)}{\sum_{t=0}^{\infty}\norm{\wbf_t}_2^2}
        =
        \max_{j\in[\dstate]}\Gamma_{j}(q_j,k_j)
        \text{\,,}
    \end{align*}
    where $\Gamma_{j}(q_j,k_j)$ is the scalar robust objective from \cref{lemma:scalar_hinf_obj}.
\end{proposition}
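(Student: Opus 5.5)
We prove \cref{prop:hinf_obj_decouple} by changing to the shared eigenbasis~$V$, so that both the closed-loop dynamics and the cost split into $\dstate$ independent scalar systems. We then show that the supremum of a ratio of sums equals the maximum of the scalar suprema.

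\textbf{Decoupling.} Set $\wt\xbf_t := V^\top \xbf_t$ and $\wt\wbf_t := V^\top \wbf_t$. By \cref{ass:commute} and orthogonality of~$V$, the closed-loop recursion $\xbf_{t+1} = (A + BK)\xbf_t + D\wbf_t$ becomes
\[
\wt\xbf_{t+1} = (\Lambda_A + \Lambda_B \diag(\kbf))\,\wt\xbf_t + \Lambda_D \wt\wbf_t .
\]
Coordinatewise, $\wt x_{t+1,j} = (a_j + b_j k_j)\,\wt x_{t,j} + d_j \wt w_{t,j}$ with $\wt x_{0,j} = 0$.

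The numerator also decouples:
\[
\xbf_t^\top Q \xbf_t + \xbf_t^\top K^\top R K \xbf_t = \sum_j (q_j + r_j k_j^2)\, \wt x_{t,j}^2 .
\]
The denominator satisfies $\norm{\wbf_t}_2^2 = \sum_j \wt w_{t,j}^2$. Since $V$ is a bijection on $\ell^2$ disturbance sequences, we may take the supremum over $\wt\wbf$ instead of $\wbf$.

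Define $N_j := \sum_t (q_j + r_j k_j^2)\, \wt x_{t,j}^2$ and $E_j := \sum_t \wt w_{t,j}^2$. Then $N_j$ depends only on the $j$-th disturbance coordinate. By \cref{lemma:scalar_hinf_obj}, which applies because $|a_j + b_j k_j| < 1$, we have $N_j \le \Gamma_j(q_j, k_j)\, E_j$.

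\textbf{Upper bound.} For any nonzero $\wt\wbf$,
\[
\frac{\sum_j N_j}{\sum_j E_j} \le \frac{\sum_j \Gamma_j E_j}{\sum_j E_j} \le \max_j \Gamma_j(q_j, k_j).
\]
The first inequality uses the scalar bound; the second holds because a convex combination is at most the maximum. Coordinates with $E_j = 0$ have $N_j = 0$ and contribute nothing.

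\textbf{Lower bound.} Let $j^*$ attain the maximum. Restrict to disturbances supported only on coordinate $j^*$, i.e.\ $\wt\wbf_t = w_t \ebf_{j^*}$. All other state coordinates then stay zero, and the ratio equals the scalar ratio for coordinate $j^*$. Taking the supremum and applying the equality in \cref{lemma:scalar_hinf_obj} gives exactly $\Gamma_{j^*}(q_{j^*}, k_{j^*})$. Combining with the upper bound proves the claim.

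\textbf{Main obstacle.} The only delicate point is bookkeeping rather than any genuine difficulty. First, the case $d_j = 0$: then $N_j = 0$ and $\Gamma_j = 0$, so the formula still holds. Second, the transfer of the supremum under the orthogonal change of variables must be stated explicitly. Finiteness of all sums follows from $|a_j + b_j k_j| < 1$, as already shown in the proof of \cref{lemma:scalar_hinf_obj}.
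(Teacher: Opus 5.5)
Your proposal is correct and follows essentially the same route as the paper: an orthogonal change of basis to $V$ decouples the closed-loop dynamics, the cost and the disturbance energy; the upper bound comes from the per-coordinate scalar bound of \cref{lemma:scalar_hinf_obj}, summed over coordinates and bounded by $\max_j \Gamma_j(q_j,k_j)$; and the lower bound restricts disturbances to the maximizing coordinate $j^*$. Your explicit treatment of coordinates with zero disturbance energy, of the case $d_j=0$, and of transferring the supremum under the orthogonal change of variables is bookkeeping that the paper leaves implicit.
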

\begin{proof}
    For any $t\in\BN\cup\{0\}$, define the transformed vectors $\wt{\xbf}_{t}:=V^\top \xbf_t$ and $\wt{\wbf}_{t}:=V^\top \wbf_t$, and also define the transformed policy $\wt{\ubf}(\wt{\xbf})$ through
    \begin{align*}
        \wt{\ubf}(\wt{\xbf}):=V^\top \ubf(\xbf)=V^\top K \xbf=V^{\top}V\diag(\kbf)V^\top \xbf=\diag(\kbf) \wt{\xbf}
        \text{\,.}
    \end{align*}
    Since $V$ is orthogonal and $A,B,D,Q,R,K$ are all diagonal in the basis $V$, the disturbed dynamics together with the policy $\ubf(\cdot)$ become coordinate-wise:
    \begin{align*}
        \wt{\xbf}_{t+1}&=V^{\top}\xbf_{t+1}\\
        &=V^\top A \xbf_t+V^\top B \ubf(\xbf_t)+V^\top D \wbf_t\\
        &=V^\top V\Lambda_{A}V^\top \xbf_t+V^\top V\Lambda_{B}V^\top \ubf(\xbf_t)+V^\top V\Lambda_{D}V^\top \wbf_t\\
        &=\Lambda_{A}\wt{\xbf}_t+\Lambda_{B}\wt{\ubf}(\wt{\xbf}_t)+\Lambda_{D}\wt{\wbf}_t\\
        &=\Lambda_{A}\wt{\xbf}_t+\Lambda_{B}\diag(\kbf)\wt{\xbf}_t+\Lambda_{D}\wt{\wbf}_t
        \text{\,,}
    \end{align*}
    with $\wt{\xbf}_0=0$.
    Hence each coordinate $j\in[\dstate]$ evolves independently according only to its own disturbance sequence $\{\wt{w}_{j,t}\}_{t=0}^{\infty}$.
    Moreover, orthogonality of $V$ implies that the disturbance energy is preserved and the quadratic cost separates across coordinates:
    \begin{align*}
        \sum_{t=0}^{\infty}\norm{\wbf_t}_2^2
        &=
        \sum_{t=0}^{\infty}\norm{\wt{\wbf}_t}_2^2
        =
        \sum_{j=1}^{\dstate}\sum_{t=0}^{\infty}\wt{w}_{j,t}^{2}
        \text{\,,}
    \end{align*}
    and
    \begin{align*}
        \sum_{t=0}^{\infty}\xbf_t^\top Q\xbf_t+\ubf_t(\xbf_t)^\top R\ubf_t(\xbf_t)
        &=
        \sum_{t=0}^{\infty}\wt{\xbf}_t^\top \diag(\qbf)\wt{\xbf}_t+\wt{\ubf}(\wt{\xbf}_t)^\top \Lambda_{R} \wt{\ubf}(\wt{\xbf}_t)\\
        &=
        \sum_{j=1}^{\dstate}\sum_{t=0}^{\infty}\left(q_j\wt{x}_{j,t}^2+r_j\wt{u}_{j,t}(\wt{x}_{j,t})^2\right)
        \text{\,.}
    \end{align*}
    Therefore,
    \begin{align*}
        \sup_{\wbf_t}
        \frac{\sum_{t=0}^{\infty}\xbf_t^\top Q\xbf_t+\ubf(\xbf_t)^\top R\ubf(\xbf_t)}{\sum_{t=0}^{\infty}\norm{\wbf_t}_2^2}
        =
        \sup_{\wt{\wbf}_t}
        \frac{\sum_{j=1}^{\dstate}\sum_{t=0}^{\infty}\left(q_j\wt{x}_{j,t}^2+r_j\wt{u}_{j,t}(\wt{x}_{j,t})^2\right)}
        {\sum_{j=1}^{\dstate}\sum_{t=0}^{\infty}\wt{w}_{j,t}^{2}}
        \text{\,.}
    \end{align*}
    We first prove the right-hand side upper bounds the left-hand side. Let $\{\wbf_{t}\}_{t=0}^{\infty}$ be a non-zero disturbance sequence. For each $j\in[\dstate]$, the $j$th coordinate depends only on $(q_j,k_j,a_j,b_j,d_j,r_j)$ and on the scalar disturbance sequence $\{\wt{w}_{j,t}\}_{t=0}^{\infty}$. Therefore by \cref{lemma:scalar_hinf_obj} it holds that
    \begin{align*}
        \sum_{t=0}^{\infty}q_j\wt{x}_{j,t}^2+r_j\wt{u}_{j,t}(\wt{x}_{j,t})^2\leq \Gamma_{j}(q_j,k_j)\sum_{t=0}^{\infty}\wt{w}_{j,t}^{2}
        \text{\,.}
    \end{align*}
    Summing over $j$ yields
    \begin{align*}
        \sum_{j=1}^{\dstate}\sum_{t=0}^{\infty}\left(q_j\wt{x}_{j,t}^2+r_j\wt{u}_{j,t}(\wt{x}_{j,t})^2\right)&\leq \sum_{j=1}^{\dstate}\Gamma_{j}(q_j,k_j)\sum_{t=0}^{\infty}\wt{w}_{j,t}^{2}\\
        &\leq
        \left(\max_{j\in[\dstate]}\Gamma_{j}(q_j,k_j)\right)\sum_{j=1}^{\dstate}\sum_{t=0}^{\infty}\wt{w}_{j,t}^{2}
    \end{align*}
    Hence, for every disturbance sequence,
    \begin{align*}
        \frac{\sum_{t=0}^{\infty}\xbf_t^\top Q\xbf_t+\ubf(\xbf_t)^\top R\ubf(\xbf_t)}{\sum_{t=0}^{\infty}\norm{\wbf_t}_2^2}&=\frac{\sum_{j=1}^{\dstate}\sum_{t=0}^{\infty}\left(q_j\wt{x}_{j,t}^2+r_j\wt{u}_{j,t}(\wt{x}_{j,t})^2\right)}{\sum_{j=1}^{\dstate}\sum_{t=0}^{\infty}\wt{w}_{j,t}^{2}}\\
        &\leq
        \max_{j\in[\dstate]}\Gamma_{j}(q_j,k_j)
        \text{\,,}
    \end{align*}
    and taking the supremum over $\wbf_t$ gives
    \begin{align*}
        \sup_{\wbf_t}
        \frac{\sum_{t=0}^{\infty}\xbf_t^\top Q\xbf_t+\ubf_t^\top R\ubf_t}{\sum_{t=0}^{\infty}\norm{\wbf_t}_2^2}
        \leq
        \max_{j\in[\dstate]}\Gamma_{j}(q_j,k_j)
        \text{\,.}
    \end{align*}
    We now prove the matching lower bound. Let $j^*\in[\dstate]$ be an index attaining the maximum, that is,
    \begin{align*}
        \Gamma_{j^*}(q_{j^*},k_{j^*})
        =
        \max_{j\in[\dstate]}\Gamma_{j}(q_j,k_j)
        \text{\,.}
    \end{align*}
    Restrict attention to disturbance sequences satisfying $\wt{w}_{j,t}=0$ for all $t\in\BN$ and $j\neq j^*$.
    For such disturbances all state coordinates except $j^*$ remain identically zero: Indeed, for any $j\neq j^*$ and any $t\in\BN$, the $j$th coordinate satisfies
    \begin{align*}
        \wt{x}_{j,t+1}&=a_{j}\wt{x}_{j,t}+b_{j}k_{j}\wt{x}_{j,t}+d_{j}\wt{w}_{j,t}\\
        &=(a_j+b_jk_j)\wt{x}_{j,t}\\
        &=(a_{j}+b_{j}k_{j})^{t+1}\wt{x}_{j,0}\\
        &=0
        \text{\,,}
    \end{align*}
    where the last equality follows from the initial condition $\wt{x}_{j,0}=0$.    
    Therefore the vector objective reduces exactly to the scalar objective in coordinate $j^*$:
    \begin{align*}
        \frac{\sum_{t=0}^{\infty}\xbf_t^\top Q\xbf_t+\ubf(\xbf_t)^\top R\ubf(\xbf_t)}{\sum_{t=0}^{\infty}\norm{\wbf_t}_2^2}
        =
        \frac{\sum_{t=0}^{\infty}\left(q_{j^*}\wt{x}_{j^*,t}^2+r_{j^*}\wt{u}_{j^*}(\wt{x}_{j^*,t})^2\right)}
        {\sum_{t=0}^{\infty}\wt{w}_{j^*,t}^{2}}
        \text{\,.}
    \end{align*}
    Taking the supremum over disturbances supported on the $j^*$th coordinate and using \cref{lemma:scalar_hinf_obj}, we obtain
    \begin{align*}
        \sup_{\wbf_t}
        \frac{\sum_{t=0}^{\infty}\xbf_t^\top Q\xbf_t+\ubf_t^\top R\ubf_t}{\sum_{t=0}^{\infty}\norm{\wbf_t}_2^2}
        &=\sup_{\wt{\wbf}_t}\frac{\sum_{j=1}^{\dstate}\sum_{t=0}^{\infty}\left(q_j\wt{x}_{j,t}^2+r_j\wt{u}_{j,t}(\wt{x}_{j,t})^2\right)}{\sum_{j=1}^{\dstate}\sum_{t=0}^{\infty}\wt{w}_{j,t}^{2}}\\
        &\geq \sup_{\wt{w}_{j^{*},t}}\frac{\sum_{t=0}^{\infty}\left(q_{j^*}\wt{x}_{j^*,t}^2+r_{j^*}\wt{u}_{j^*}(\wt{x}_{j^*,t})^2\right)}
        {\sum_{t=0}^{\infty}\wt{w}_{j^*,t}^{2}}
        \\
        &=\Gamma_{j^*}(q_{j^*},k_{j^*})\\
        &=\max_{j\in[\dstate]}\Gamma_{j}(q_j,k_j)
        \text{\,.}
    \end{align*}
    Combining the two bounds concludes the proof.
\end{proof}

Next, we show that the optimal value of the diagonal robust control problem is the maximum of the optimal scalar values, and that it is attained by the stacked mapping built from the scalar minimizers.
\begin{corollary}\label{cor:diag_hinf_value}
    Suppose \cref{ass:commute} holds. Let $Q=V\diag(\qbf)V^\top\in\Theta_V$ with $\qbf=(q_1,\dots,q_{\dstate})$, and for each $j\in[\dstate]$ define
    \begin{align*}
        \Gamma_j^*(q_j)
        :=
        \inf_{\substack{k_j\in\BR\\ |a_j+b_jk_j|<1}}\Gamma_{j}(q_j,k_j)
        \text{\,.}
    \end{align*}
    Then the optimal value of \cref{eq:robust_cost_norm} among policies of the form $\ubf(\xbf):=K\xbf$ with $K=V\diag(\kbf)V^\top$ is given by
    \begin{align*}
        \inf_{\substack{\kbf\in\BR^{\dstate}\\ |a_j+b_jk_j|<1\ \forall j\in[\dstate]}}
        \sup_{\wbf_t}
        \frac{\sum_{t=0}^{\infty}\xbf_t^\top Q\xbf_t+\ubf(\xbf_t)^\top R\ubf(\xbf_t)}{\sum_{t=0}^{\infty}\norm{\wbf_t}_2^2}
        =
        \max_{j\in[\dstate]}\Gamma_j^*(q_j)
        \text{\,.}
    \end{align*}
    Moreover, if for each $j$ one chooses the scalar minimizer $k_j^*(q_j)$ and defines the stacked mapping
    \begin{align*}
        K^*(Q):=V\diag\left(k_1^*(q_1),\dots,k_{\dstate}^*(q_{\dstate})\right)V^\top
        \text{\,,}
    \end{align*}
    then $K^*(Q)$ attains the above value.
\end{corollary}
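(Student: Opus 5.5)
The plan is to reduce the claim to \cref{prop:hinf_obj_decouple} and then exchange the infimum with the maximum, which is possible because the feasible set factorizes. The set of admissible vectors $\kbf$, namely $\{\kbf\in\BR^{\dstate}: |a_j+b_jk_j|<1\ \forall j\}$, is the Cartesian product $\prod_{j}\mathcal{K}_j$ with $\mathcal{K}_j:=\{k_j\in\BR:|a_j+b_jk_j|<1\}$. By \cref{prop:hinf_obj_decouple}, for every admissible $\kbf$ the robust objective equals $\max_{j}\Gamma_j(q_j,k_j)$. Hence the left-hand side of the claim is
\[
\inf_{\kbf\in\prod_j\mathcal{K}_j}\ \max_{j\in[\dstate]}\Gamma_j(q_j,k_j),
\]
and it remains to show that this equals $\max_j\Gamma_j^*(q_j)$.

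\emph{Lower bound.} For any admissible $\kbf$ and any $j$, we have $\Gamma_j(q_j,k_j)\ge\Gamma_j^*(q_j)$. Therefore $\max_j\Gamma_j(q_j,k_j)\ge\max_j\Gamma_j^*(q_j)$, and taking the infimum over $\kbf$ preserves this inequality.

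\emph{Upper bound.} Fix $\epsilon>0$. For each $j$, choose $k_j^\epsilon\in\mathcal{K}_j$ with $\Gamma_j(q_j,k_j^\epsilon)\le\Gamma_j^*(q_j)+\epsilon$; each $\mathcal{K}_j$ is nonempty, e.g.\ if $b_j\neq0$ take $k_j=-a_j/b_j$, and if $b_j=0$ then $|a_j|\le\Anorm<1$, so $\mathcal{K}_j=\BR$. Because the coordinates are chosen independently, the resulting $\kbf^\epsilon$ is admissible. Its objective is $\max_j\Gamma_j(q_j,k_j^\epsilon)\le\max_j\Gamma_j^*(q_j)+\epsilon$. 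Letting $\epsilon\to0$ gives the upper bound, and hence equality.

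\emph{Attainment by $K^*(Q)$.} Suppose each scalar infimum is attained by $k_j^*(q_j)\in\mathcal{K}_j$; \cref{prop:scalar_hinf_sol} guarantees this in its range, and in general the statement presupposes that the minimizer is chosen. Then $\kbf^*$ is admissible. Applying \cref{prop:hinf_obj_decouple} to $K^*(Q)$ gives the objective value $\max_j\Gamma_j(q_j,k_j^*(q_j))=\max_j\Gamma_j^*(q_j)$, which is exactly the optimal value.

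The only real care needed is in this last step and in the $\epsilon$-argument. The factorization of the feasible set is what allows independent coordinate choices, and attainment must be assumed coordinate-wise. Neither step is a serious obstacle once \cref{prop:hinf_obj_decouple} is available.
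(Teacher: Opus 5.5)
Your proof is correct and follows the paper's argument. You reduce to $\inf_{\kbf}\max_j\Gamma_j(q_j,k_j)$ via \cref{prop:hinf_obj_decouple}, get the lower bound coordinate-wise, and get the upper bound by choosing coordinates independently over the product feasible set. The only difference is that the paper gets the upper bound directly from exact scalar minimizers $k_j^*(q_j)$, whereas your $\epsilon$-near-minimizer argument, together with your check that each $\{k_j : |a_j+b_jk_j|<1\}$ is nonempty, proves the value identity without assuming attainment and handles attainment separately---a harmless refinement.
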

\begin{proof}
    By \cref{prop:hinf_obj_decouple}, for any diagonal controller matrix $K=V\diag(\kbf)V^\top$ satisfying $|a_j+b_jk_j|<1$ for all $j\in[\dstate]$, the corresponding normalized robust objective equals
    \begin{align*}
        \max_{j\in[\dstate]}\Gamma_{j}(q_j,k_j)
        \text{\,.}
    \end{align*}
    Therefore the diagonal-controller optimization problem reduces to
    \begin{align*}
        \inf_{\substack{\kbf\in\BR^{\dstate}\\ |a_j+b_jk_j|<1\ \forall j\in[\dstate]}}
        \max_{j\in[\dstate]}\Gamma_{j}(q_j,k_j)
        \text{\,.}
    \end{align*}
    We claim this value is exactly $\max_{j\in[\dstate]}\Gamma_j^*(q_j)$.
    For the lower bound, fix any $\kbf$ with $|a_j+b_jk_j|<1$ for all $j$. Then for every $j\in[\dstate]$,
    \begin{align*}
        \Gamma_{j}(q_j,k_j)\geq \Gamma_j^*(q_j)
        \text{\,,}
    \end{align*}
    and hence
    \begin{align*}
        \max_{j\in[\dstate]}\Gamma_{j}(q_j,k_j)
        \geq
        \max_{j\in[\dstate]}\Gamma_j^*(q_j)
        \text{\,.}
    \end{align*}
    Taking the infimum over $\kbf$ gives
    \begin{align*}
        \inf_{\substack{\kbf\in\BR^{\dstate}\\ |a_j+b_jk_j|<1\ \forall j}}
        \max_{j\in[\dstate]}\Gamma_{j}(q_j,k_j)
        \geq
        \max_{j\in[\dstate]}\Gamma_j^*(q_j)
        \text{\,.}
    \end{align*}
    For the reverse inequality, let $k_j^*(q_j)$ be a scalar minimizer of $\Gamma_{j}(q_j,\cdot)$ for each $j\in[\dstate]$, so that
    \begin{align*}
        \Gamma_{j}(q_j,k_j^*(q_j))=\Gamma_j^*(q_j)
        \text{\,.}
    \end{align*}
    Then the stacked mapping value $K^*(Q)=V\diag(k_1^*(q_1),\dots,k_{\dstate}^*(q_{\dstate}))V^\top$ satisfies
    \begin{align*}
        \max_{j\in[\dstate]}\Gamma_{j}(q_j,k_j^*(q_j))
        =
        \max_{j\in[\dstate]}\Gamma_j^*(q_j)
        \text{\,.}
    \end{align*}
    Therefore,
    \begin{align*}
        \inf_{\substack{\kbf\in\BR^{\dstate}\\ |a_j+b_jk_j|<1\ \forall j}}
        \max_{j\in[\dstate]}\Gamma_{j}(q_j,k_j)
        \leq
        \max_{j\in[\dstate]}\Gamma_j^*(q_j)
        \text{\,.}
    \end{align*}
    Combining the two inequalities proves the claimed identity, and the same argument shows that the stacked mapping value $K^*(Q)$ attains the optimum.
\end{proof}

\subsection{DGARE Mapping and Active Index Coincidence}
\label{app:separation:coincide}

We now compare the stacked mapping $K^{*}(\cdot)$ with the mapping $\Kinf(\cdot)$ synthesized by the coupled block DGARE. 
In this subsection, we first show that both mappings attain the same optimal value. 
We then show that when the active scalar index, \ie, the index attaining the maximum of the optimal scalar values, is unique, the coupled block DGARE mapping and the stacked mapping coincide on that active coordinate.

\begin{corollary}\label{cor:hinf_val_coincide}
    Assume \cref{ass:commute}. Let $Q=V\diag(\qbf)V^\top\in\Theta_V$ with $\qbf=(q_1,\dots,q_{\dstate})$, and define $\Gamma_j^*(q_j)$ and the stacked mapping value $K^*(Q)$ as in \cref{cor:diag_hinf_value}. 
    Write
    \begin{align*}
        \Kinf(Q)=V\diag\left(\ksafe_1(\qbf),\dots,\ksafe_\dstate(\qbf)\right)V^\top
        \text{\,,}
    \end{align*}
    as in \cref{prop:diag_hinf}.
    Then
    \begin{align*}
        \gammainf(Q)^2
        =
        \max_{j\in[\dstate]}\Gamma_j^*(q_j)
        \text{\,.}
    \end{align*}
    Moreover, both the stacked mapping $K^*(Q)$ and the DGARE-synthesized mapping $\Kinf(Q)$ attain this value, and in particular
    \begin{align*}
        \max_{j\in[\dstate]}\Gamma_j\left(q_j,\ksafe_j(\qbf)\right)
        =
        \max_{j\in[\dstate]}\Gamma_j^*(q_j)
        =
        \gammainf(Q)^2
        \text{\,.}
    \end{align*}
\end{corollary}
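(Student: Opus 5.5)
The plan is a sandwich argument. We show that $\gammainf(Q)^2$ is bounded above by the value attained by the stacked mapping, and below by the value attained by the DGARE-synthesized mapping. Both of these values are then identified with $\max_{j}\Gamma_j^*(q_j)$.

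\textbf{Upper bound.} By \cref{eq:gamma_inf}, $\gammainf(Q)^2$ is a minimum over all control sequences, which include the linear policy $\ubf(\xbf)=K^*(Q)\xbf$. By \cref{cor:diag_hinf_value}, this policy is stabilizing coordinatewise, and its normalized robust objective is $\max_j\Gamma_j^*(q_j)$. Hence
\[
\gammainf(Q)^2\le\max_{j\in[\dstate]}\Gamma_j^*(q_j).
\]

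\textbf{Lower bound.} We use the DGARE controller. By the classical synthesis result recalled in \cref{app:background:hinf}, the policy $\ubf(\xbf)=\Kinf(Q)\xbf$ achieves robustness level $\gammainf(Q)$ in the sense of \cref{eq:gamma_feas}, and it is stabilizing. By \cref{prop:diag_hinf}, $\Kinf(Q)=V\diag(\kbfsafe(\qbf))V^\top$ is diagonal in the basis $V$.

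We next need $|a_j+b_j\ksafe_j(\qbf)|<1$ for every $j$, so that \cref{prop:hinf_obj_decouple} applies. This follows from stabilization. The closed-loop matrix $A+B\Kinf(Q)$ is diagonal in $V$, with eigenvalues $a_j+b_j\ksafe_j(\qbf)$, and a stabilizing solution means that $A+B\Kinf(Q)$ is Schur stable. (Should stability in the game sense refer to $A+BK_u+DK_w$, we would instead argue directly: if some $|a_j+b_j\ksafe_j|\ge1$ and $d_j\neq0$, a disturbance supported on coordinate $j$ makes the ratio unbounded, contradicting finiteness. If $d_j=0$, coordinate $j$ never moves and can be handled separately, since $\Gamma_j\equiv0$ there.)

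\cref{prop:hinf_obj_decouple} then gives that the robust objective of $\Kinf(Q)$ equals $\max_j\Gamma_j(q_j,\ksafe_j(\qbf))$. Each term satisfies $\Gamma_j(q_j,\ksafe_j(\qbf))\ge\Gamma_j^*(q_j)$ by definition of $\Gamma_j^*$. Combining this with the feasibility of $\Kinf(Q)$ at level $\gammainf(Q)$ yields
\[
\max_j\Gamma_j^*(q_j)\le\max_j\Gamma_j(q_j,\ksafe_j(\qbf))\le\gammainf(Q)^2.
\]

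\textbf{Conclusion.} Together with the upper bound, all of these quantities are equal. This gives $\gammainf(Q)^2=\max_j\Gamma_j^*(q_j)$ and shows that both controllers attain this value.

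The main obstacle is justifying rigorously that the DGARE controller is admissible for \cref{prop:hinf_obj_decouple}, that is, that it is closed-loop stable coordinatewise, and that the classical guarantee of \cref{app:background:hinf} applies at the boundary level $\gamma=\gammainf(Q)$ rather than only for $\gamma>\gammainf(Q)$. For the latter, I would rely on the paper's stated convention that the DGARE solution exists at $\gammainf(Q)$ and that $\Kinf(Q)$ achieves that level. If needed, a limiting argument over $\gamma\downarrow\gammainf(Q)$ would be the fallback, using the continuity of $\Gamma_j(q_j,\cdot)$ in $k_j$ on the stable region.
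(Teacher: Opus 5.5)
Your proposal is correct and follows the paper's proof essentially step for step. The upper bound $\gammainf(Q)^2\le\max_j\Gamma_j^*(q_j)$ comes from the stacked controller of \cref{cor:diag_hinf_value}, and the lower bound from coordinatewise closed-loop stability of the $V$-diagonal controller $\Kinf(Q)$, then \cref{prop:hinf_obj_decouple}, then $\Gamma_j\ge\Gamma_j^*$. Your small refinements do not change the route: you use only feasibility of $\Kinf(Q)$ at level $\gammainf(Q)$ rather than exact attainment, and you add a fallback for the game-sense notion of stability. Like the paper, you ultimately rely on the stated convention that the DGARE solution exists at the boundary level $\gammainf(Q)$.
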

\begin{proof}
    By \cref{cor:diag_hinf_value}, the stacked mapping value $K^*(Q)$ attains the value
    \begin{align*}
        \max_{j\in[\dstate]}\Gamma_j^*(q_j)
        \text{\,.}
    \end{align*}
    Since $\gammainf(Q)^2$ is the minimum of the normalized robust objective over all policies by \cref{eq:gamma_inf}, it follows that
    \begin{align*}
        \gammainf(Q)^2
        \leq
        \max_{j\in[\dstate]}\Gamma_j^*(q_j)
        \text{\,.}
    \end{align*}
    On the other hand, by \cref{eq:kinf} and the discussion following \cref{eq:kinf_level_gamma}, the DGARE-synthesized mapping value $\Kinf(Q)$ attains the optimal value $\gammainf(Q)^2$ and is stabilizing. 
    By \cref{prop:diag_hinf}, $\Kinf(Q)$ is diagonal in the basis $V$, hence the closed-loop matrix $A+B\Kinf(Q)$ is diagonal in this basis with diagonal entries $a_j+b_j\ksafe_j(\qbf)$.
    Since $\Kinf(Q)$ is stabilizing, all eigenvalues of $A+B\Kinf(Q)$ have modulus strictly smaller than $1$, and therefore
    \begin{align*}
        |a_j+b_j\ksafe_j(\qbf)|<1
    \end{align*}
    for all $j\in[\dstate]$.
    Hence \cref{prop:hinf_obj_decouple} implies that the value attained by $\Kinf(Q)$ is
    \begin{align*}
        \max_{j\in[\dstate]}\Gamma_j\left(q_j,\ksafe_j(\qbf)\right)
        \text{\,.}
    \end{align*}
    Moreover, \cref{cor:diag_hinf_value} yields
    \begin{align*}
        \gammainf(Q)^2
        =
        \max_{j\in[\dstate]}\Gamma_j\left(q_j,\ksafe_j(\qbf)\right)
        \geq
        \max_{j\in[\dstate]}\Gamma_j^*(q_j)
        \text{\,.}
    \end{align*}
    Combining this with the previous inequality gives
    \begin{align*}
        \gammainf(Q)^2
        =
        \max_{j\in[\dstate]}\Gamma_j^*(q_j)
        \text{\,.}
    \end{align*}
    Substituting this equality into the previous display yields
    \begin{align*}
        \max_{j\in[\dstate]}\Gamma_j\left(q_j,\ksafe_j(\qbf)\right)
        =
        \max_{j\in[\dstate]}\Gamma_j^*(q_j)
        =
        \gammainf(Q)^2
        \text{\,.}
    \end{align*}
    This proves the claim, and shows that both $K^*(Q)$ and $\Kinf(Q)$ attain the same optimal value.
\end{proof}

\begin{lemma}\label{lemma:unique_active_index}
    Let $Q^{(0)}=V\diag(\qbf^{(0)})V^\top\in\Theta_V$ with $\qbf^{(0)}\in\D_Q$ in the interior of $\D_Q$.
    Suppose there exists a unique index $j^*\in[\dstate]$ such that
    \begin{align*}
        \Gamma_{j^*}^*(q_{j^*}^{(0)})
        =
        \max_{j\in[\dstate]}\Gamma_j^*(q_j^{(0)})
        \text{\,,}
    \end{align*}
    which satisfies $q_{j^*}^{(0)}\in\left[0,\frac{|a_{j^*}|r_{j^*}(1-|a_{j^*}|)}{b_{j^*}^{2}}\right)$, $a_{j^{*}}\neq 0$, and $b_{j^*}\neq 0$.
    Then there exists a neighborhood $\NN\subseteq\D_Q$ of $\qbf^{(0)}$ such that for every $\qbf\in\NN$ the index $j^*$ remains the unique active index, that is,
    \begin{align*}
        \Gamma_{j^*}^*(q_{j^*})
        =
        \max_{j\in[\dstate]}\Gamma_j^*(q_j)
        \text{\,,}
    \end{align*}
    the DGARE-synthesized and stacked mapping values attain the same value on the active coordinate,
    \begin{align*}
        \Gamma_{j^*}\left(q_{j^*},\ksafe_{j^{*}}(\qbf)\right)
        =
        \Gamma_{j^*}\left(q_{j^*},k_{j^*}^{*}(q_{j^*})\right)
        =
        \Gamma_{j^*}^*(q_{j^*})
        =
        \gammainf(Q)^2
        \text{\,,}
    \end{align*}
    and the unique minimizers coincide, meaning that for $Q=V\diag(\qbf)V^\top$, the DGARE-synthesized mapping value $\Kinf(Q)$ and the stacked mapping value $K^*(Q)$ satisfy
    \begin{align*}
        \ksafe_{j^{*}}(\qbf)
        =
        k_{j^*}^*(q_{j^*})
        =
        -\frac{a_{j^*}}{|a_{j^*}|}\frac{b_{j^*}}{r_{j^*}(1-|a_{j^*}|)}q_{j^*}
        \text{\,.}
    \end{align*}
\end{lemma}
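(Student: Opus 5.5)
The argument has three parts: stability of the active index under perturbation, identification of $\gammainf(Q)^2$ with the value attained on the active coordinate, and uniqueness of the scalar minimizer.

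\textbf{Persistence of the active index.} By \cref{prop:scalar_hinf_sol}, $q_{j^*}\mapsto\Gamma_{j^*}^*(q_{j^*})$ is continuous on $[0,\tfrac{|a_{j^*}|r_{j^*}(1-|a_{j^*}|)}{b_{j^*}^2})$, and this interval is relatively open at its right end. For $j\neq j^*$ I will also need continuity of $\Gamma_j^*$ near $q_j^{(0)}$. Two cases arise:
\begin{itemize}
\item If $a_j\neq0$, $b_j\neq0$ and $q_j^{(0)}$ lies below the threshold, continuity follows from \cref{prop:scalar_hinf_sol}.
\item Otherwise I use only \emph{upper semicontinuity}, which suffices. $\Gamma_j^*(q)=\inf_k\Gamma_j(q,k)$ is an infimum of functions that are continuous (indeed affine) in $q$, hence upper semicontinuous. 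It is also nondecreasing in $q$.
\end{itemize}
Uniqueness of the maximizer at $\qbf^{(0)}$ gives a strict gap
\[
\delta:=\Gamma_{j^*}^*(q_{j^*}^{(0)})-\max_{j\neq j^*}\Gamma_j^*(q_j^{(0)})>0 .
\]
Continuity of $\Gamma_{j^*}^*$ and upper semicontinuity of the others keep this gap above $\delta/2$ on a small ball around $\qbf^{(0)}$. I choose $\NN$ to be the intersection of that ball with $\D_Q$, with the $j^*$-coordinate kept below the threshold. This is possible because $\qbf^{(0)}$ is interior and the threshold condition is open. On $\NN$, $j^*$ is the unique active index.

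\textbf{Value identity.} Fix $\qbf\in\NN$. \cref{cor:hinf_val_coincide} gives
\[
\max_j\Gamma_j(q_j,\ksafe_j(\qbf))=\max_j\Gamma_j^*(q_j)=\Gamma_{j^*}^*(q_{j^*})=\gammainf(Q)^2 .
\]
Since $\Gamma_j(q_j,\ksafe_j(\qbf))\ge\Gamma_j^*(q_j)$ for every $j$, this yields $\Gamma_{j^*}(q_{j^*},\ksafe_{j^*}(\qbf))\le\Gamma_{j^*}^*(q_{j^*})$, and the reverse inequality holds by definition of the infimum. For this I need $|a_{j^*}+b_{j^*}\ksafe_{j^*}(\qbf)|<1$, which the proof of \cref{cor:hinf_val_coincide} supplies from stabilization. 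Hence $\Gamma_{j^*}(q_{j^*},\ksafe_{j^*}(\qbf))=\Gamma_{j^*}^*(q_{j^*})$. The equality $\Gamma_{j^*}(q_{j^*},k_{j^*}^*(q_{j^*}))=\Gamma_{j^*}^*(q_{j^*})$ is \cref{prop:scalar_hinf_sol}.

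\textbf{Coincidence of the gains.} Since $\ksafe_{j^*}(\qbf)$ attains the infimum over the admissible interval, the uniqueness of the minimizer in \cref{prop:scalar_hinf_sol} forces $\ksafe_{j^*}(\qbf)=k_{j^*}^*(q_{j^*})$, which is the explicit formula claimed.

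The main obstacle is this uniqueness. The branch analysis in \cref{prop:scalar_hinf_sol} assumed $q_j>0$ on the second branch. If $q_{j^*}=0$ and $d_{j^*}\neq0$, the objective $r k^2/(\cdots)$ is still uniquely minimized at $k=0$, because the numerator is positive elsewhere. The degenerate case $d_{j^*}=0$ would make $\Gamma_{j^*}^*\equiv0$; it cannot be the unique maximum unless every other $\Gamma_j^*$ is negative, which is impossible, so it is excluded automatically. I will record both checks explicitly.
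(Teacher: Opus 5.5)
Your proposal is correct and follows the paper's proof. The steps match: a strict gap at $\qbf^{(0)}$ that persists on a neighborhood, the sandwich via \cref{cor:hinf_val_coincide}, and uniqueness of the scalar minimizer from \cref{prop:scalar_hinf_sol}. The only difference is how continuity is obtained. The paper gets continuity of every $\Gamma_j^*$ at once from concavity, since each is an infimum of functions affine in $q_j$ and $q_j^{(0)}\in(0,1)$. You instead use the closed form for $j^*$ and only upper semicontinuity for $j\neq j^*$, which is equally valid.

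One small correction to your side remark on $d_{j^*}=0$. Excluding that case because ``every other $\Gamma_j^*$ would have to be negative'' fails vacuously when $\dstate=1$. That case is still excluded, but by well-posedness rather than by your argument: there $\gammainf(Q)=0$ makes $H(P;0)$ singular, so $\Kinf(Q)$ is not defined by the DGARE at all.
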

\begin{proof}
    Fix any $j\in[\dstate]$.
    By \cref{lemma:scalar_hinf_obj}, for any $q_j\in[0,1]$, the scalar robust control problem for which $\Gamma_j^*(q_j)$ is the optimal value is given by
    \begin{align*}
        \inf_{\substack{k_j\in\BR\\ |a_j+b_jk_j|<1}}
        d_j^2\frac{q_j+r_jk_j^2}{(1-|a_j+b_jk_j|)^2}
        \text{\,.}
    \end{align*}
    Therefore, the mapping $q_j\mapsto\Gamma_j^*(q_j)$ is the infimum of affine functions of $q_j$. Hence it is concave on $[0,1]$ and therefore continuous on $[0,1]$.
    Next, since $j^*$ is the unique maximizing index of $\left\{\Gamma_j^*(q_j^{(0)})\right\}_{j\in[\dstate]}$, it follows that
    \begin{align*}
        \Delta
        :=
        \Gamma_{j^*}^*(q_{j^*}^{(0)})
        -
        \max_{j\in[\dstate]\setminus\{j^*\}}\Gamma_j^*(q_j^{(0)})>
        0
        \text{\,.}
    \end{align*}
    Since $\qbf^{(0)}$ lies in the interior of $\D_Q$ and $q_{j^*}^{(0)}<\frac{|a_{j^*}|r_{j^*}(1-|a_{j^*}|)}{b_{j^*}^{2}}$, there exists a neighborhood $\NN\subseteq\D_Q$ of $\qbf^{(0)}$ such that for every $\qbf\in\NN$ it holds that $q_{j^*}\in\left(0,\frac{|a_{j^*}|r_{j^*}(1-|a_{j^*}|)}{b_{j^*}^{2}}\right)$.
    By continuity of each map $q_j\mapsto\Gamma_j^*(q_j)$ at $q_j^{(0)}$, after possibly shrinking $\NN$ we may assume that for every $\qbf\in\NN$ and every $j\in[\dstate]$,
    \begin{align*}
        \left|\Gamma_j^*(q_j)-\Gamma_j^*(q_j^{(0)})\right|
        <
        \frac{\Delta}{3}
        \text{\,.}
    \end{align*}
    Hence for every $\qbf\in\NN$ and every $j\neq j^*$,
    \begin{align*}
        \Gamma_{j^*}^*(q_{j^*})
        &>
        \Gamma_{j^*}^*(q_{j^*}^{(0)})-\frac{\Delta}{3}\\
        &=
        \max_{\ell\in[\dstate]\setminus\{j^*\}}\Gamma_\ell^*(q_\ell^{(0)})+\frac{2\Delta}{3}\\
        &\geq
        \Gamma_j^*(q_j^{(0)})+\frac{2\Delta}{3}\\
        &>
        \Gamma_j^*(q_j)+\frac{\Delta}{3}\\
        &>
        \Gamma_j^*(q_j)
        \text{\,.}
    \end{align*}
    Therefore, for every $\qbf\in\NN$, the index $j^*$ remains the unique active index:
    \begin{align*}
        \Gamma_{j^*}^*(q_{j^*})
        =
        \max_{j\in[\dstate]}\Gamma_j^*(q_j)
        \text{\,.}
    \end{align*}
    Fix $\qbf\in\NN$, and write $Q=V\diag(\qbf)V^\top$. Then \cref{cor:hinf_val_coincide} applies to this $Q$. 
    Since $j^*$ is the unique active index, \cref{cor:hinf_val_coincide} yields
    \begin{align*}
        \max_{j\in[\dstate]}\Gamma_j\left(q_j,\ksafe_j(\qbf)\right)
        =
        \max_{j\in[\dstate]}\Gamma_j^*(q_j)
        =
        \Gamma_{j^*}^*(q_{j^*})
        =
        \gammainf(Q)^2
        \text{\,.}
    \end{align*}
    On the other hand, by definition of the scalar optimum,
    \begin{align*}
        \Gamma_{j^*}\left(q_{j^*},\ksafe_{j^{*}}(\qbf)\right)
        \geq
        \Gamma_{j^*}^*(q_{j^*})
        \text{\,.}
    \end{align*}
    Since the $j^*$th term is one of the terms in the maximum, the previous display also implies that
    \begin{align*}
        \Gamma_{j^*}\left(q_{j^*},\ksafe_{j^{*}}(\qbf)\right)
        \leq
        \max_{j\in[\dstate]}\Gamma_j\left(q_j,\ksafe_j(\qbf)\right)
        =
        \Gamma_{j^*}^*(q_{j^*})
        \text{\,.}
    \end{align*}
    Therefore,
    \begin{align*}
        \Gamma_{j^*}\left(q_{j^*},\ksafe_{j^{*}}(\qbf)\right)
        =
        \Gamma_{j^*}^*(q_{j^*})
        =
        \gammainf(Q)^2
        \text{\,.}
    \end{align*}
    By \cref{prop:scalar_hinf_sol}, the scalar objective $\Gamma_{j^*}(q_{j^*},\cdot)$ has the unique minimizer
    \begin{align*}
        k_{j^*}^*(q_{j^*})
        =
        -\frac{a_{j^*}}{|a_{j^*}|}\frac{b_{j^*}}{r_{j^*}(1-|a_{j^*}|)}q_{j^*}
        \text{\,.}
    \end{align*}
    Therefore,
    \begin{align*}
        \Gamma_{j^*}\left(q_{j^*},\ksafe_{j^{*}}(\qbf)\right)
        =
        \Gamma_{j^*}\left(q_{j^*},k_{j^*}^{*}(q_{j^*})\right)
        =
        \Gamma_{j^*}^*(q_{j^*})
        =
        \gammainf(Q)^2
        \text{\,.}
    \end{align*}
    Since both $\ksafe_{j^{*}}(\qbf)$ and $k_{j^*}^{*}(q_{j^*})$ attain the minimum value of the same scalar objective, uniqueness of the minimizer implies
    \begin{align*}
        \ksafe_{j^{*}}(\qbf)
        =
        k_{j^*}^*(q_{j^*})
        =
        -\frac{a_{j^*}}{|a_{j^*}|}\frac{b_{j^*}}{r_{j^*}(1-|a_{j^*}|)}q_{j^*}
        \text{\,.}
    \end{align*}
\end{proof}

\subsection{Lipschitz Constant of the $\Kinf(Q)$ Mapping}
\label{app:separation:lip}

We now turn the local coincidence from \cref{app:separation:coincide} into a lower bound on the Lipschitz constant of the coupled block DGARE mapping.
The key step is to construct an interior diagonal cost matrix $Q^{(0)}$ for which a prescribed coordinate is uniquely active.
On the corresponding neighborhood, the coupled block DGARE mapping and the stacked mapping coincide on that coordinate, so the DGARE entry inherits the explicit affine form of the stacked mapping.
This produces an explicit derivative, which \cref{lemma:diag_hinf_lip} converts into a lower bound on $\lip(\Kinf)$.
In \cref{app:separation:strict}, we will apply the next result with the index $j^*$ furnished by the alignment condition of \cref{def:align}.

\begin{proposition}\label{prop:hinf_lip_lower_bound}
    Assume \cref{ass:commute}. Fix an index $j^*\in[\dstate]$ such that $a_{j^*}\neq 0$, $b_{j^*}\neq 0$, and $d_{j^*}\neq 0$ and define
    \begin{align*}
        \bar q_{j^*}
        :=
        \frac{1}{2}\min\left\{1,\frac{|a_{j^*}|r_{j^*}(1-|a_{j^*}|)}{b_{j^*}^{2}}\right\}
        \text{\,.}
    \end{align*}
    Then there exists $\eta>0$ such that, for $\qbf^{(0)}\in\BR^{\dstate}$ defined by
    \begin{align*}
        \qbf^{(0)}
        :=
        \bar q_{j^*}\ebf_{j^*}
        +
        \eta\sum_{j\in[\dstate]\setminus\{j^*\}}\ebf_j
        \text{\,,}
    \end{align*}
    the matrix $Q^{(0)}:=V\diag(\qbf^{(0)})V^\top$ belongs to $\Theta_V$, the vector $\qbf^{(0)}$ lies in the interior of $\D_Q$, and $j^*$ is the unique active index at $Q^{(0)}$.
    Moreover, there exists a neighborhood $\NN\subseteq\D_Q$ of $\qbf^{(0)}$ such that for every $\qbf\in\NN$, the DGARE-synthesized mapping $\Kinf(Q)$ and the stacked mapping $K^*(Q)$ coincide on the coordinate $j^*$, meaning that
    \begin{align*}
        \ksafe_{j^{*}}(\qbf)
        =
        k_{j^*}^{*}(q_{j^*})
        =
        -\frac{a_{j^*}}{|a_{j^*}|}\frac{b_{j^*}}{r_{j^*}(1-|a_{j^*}|)}q_{j^*}
        \text{\,,}
    \end{align*}
    and consequently
    \begin{align*}
        \lip(\Kinf)
        \geq
        \frac{|b_{j^*}|}{r_{j^*}(1-|a_{j^*}|)}
        \text{\,.}
    \end{align*}
\end{proposition}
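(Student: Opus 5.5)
The plan is to choose $\eta>0$ small enough that three things hold at $\qbf^{(0)}$: the vector lies in the interior of $\D_Q$; the coordinate $j^*$ is the unique maximizer of $j\mapsto\Gamma_j^*(q_j^{(0)})$; and the hypotheses of \cref{lemma:unique_active_index} are met. That lemma then supplies the local affine form of $\ksafe_{j^*}$, and \cref{lemma:diag_hinf_lip} converts it into a Lipschitz lower bound.

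\emph{Membership and interiority.} Since $\bar q_{j^*}\le \tfrac12$, any $\eta>0$ small enough that $\bar q_{j^*}^2+(\dstate-1)\eta^2<1$ gives $\norm{\qbf^{(0)}}_2<1$. Every coordinate of $\qbf^{(0)}$ is strictly positive, so $\qbf^{(0)}$ is in the interior of $\D_Q$ and $Q^{(0)}\in\Theta_V$. By construction $0<\bar q_{j^*}<\frac{|a_{j^*}|r_{j^*}(1-|a_{j^*}|)}{b_{j^*}^2}$, so \cref{prop:scalar_hinf_sol} applies at $j^*$. Because $d_{j^*}\neq0$ and $\bar q_{j^*}>0$, it gives the explicit value $\Gamma_{j^*}^*(\bar q_{j^*})>0$.

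\emph{Unique activity, the main step.} For $j\neq j^*$ I need $\Gamma_j^*(\eta)<\Gamma_{j^*}^*(\bar q_{j^*})$. As in the proof of \cref{lemma:unique_active_index}, $q\mapsto\Gamma_j^*(q)$ is an infimum of affine functions and is nonnegative. Concavity alone controls only the interior of $[0,1]$, so I will not rely on it for continuity at $0$. Instead I will bound $\Gamma_j^*(\eta)$ directly, using a fixed stabilizing gain $k_j$. Since $|a_j|\le\Anorm<1$, the choice $k_j=0$ is admissible, and \cref{lemma:scalar_hinf_obj} yields $\Gamma_j^*(\eta)\le d_j^2\eta/(1-|a_j|)^2$. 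This tends to $0$ as $\eta\to0$, uniformly over the finitely many $j$. Therefore a small enough $\eta$ makes every such $\Gamma_j^*(\eta)$ strictly smaller than the positive number $\Gamma_{j^*}^*(\bar q_{j^*})$. This is where $d_{j^*}\neq0$ is essential: without it the active value would be $0$ and ties could occur.

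\emph{Coincidence and derivative.} With $\eta$ fixed, \cref{lemma:unique_active_index} gives a neighborhood $\NN$ of $\qbf^{(0)}$ on which
$\ksafe_{j^*}(\qbf)=-\frac{a_{j^*}}{|a_{j^*}|}\frac{b_{j^*}}{r_{j^*}(1-|a_{j^*}|)}q_{j^*}$. Hence the function $g_{j^*}(h)=\ksafe_{j^*}(\qbf^{(0)}+h\ebf_{j^*})$ is affine near $h=0$, so it is differentiable there with $|g_{j^*}'(0)|=\frac{|b_{j^*}|}{r_{j^*}(1-|a_{j^*}|)}$. Since $q^{(0)}_{j^*}=\bar q_{j^*}\in(0,1)$, \cref{lemma:diag_hinf_lip} applies and gives the stated bound on $\lip(\Kinf)$. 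The restriction to $\Theta_V$ only lowers the supremum, so the bound holds on all of $\Theta$.
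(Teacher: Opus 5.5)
Your proposal is correct and follows the paper's proof essentially step for step. You choose $\eta$ small enough for interiority and strict activity of $j^*$, invoke the unique-active-index lemma to get the local affine form of $\ksafe_{j^*}$, and convert its derivative into a Lipschitz lower bound via the directional-derivative lemma. The one difference is how you show $\Gamma_j^*(\eta)\to 0$. You use the explicit bound $\Gamma_j^*(\eta)\le d_j^2\eta/(1-|a_j|)^2$ with the admissible gain $k_j=0$. The paper instead appeals to continuity of $\Gamma_j^*$ on $[0,1]$, which it derives from concavity; concavity alone only guarantees continuity on the open interval, so your version is slightly more airtight at that step.
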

\begin{proof}
    By construction, $\bar q_{j^*}\in (0,\frac{|a_{j^*}|r_{j^*}(1-|a_{j^*}|)}{b_{j^*}^{2}})$.
    Therefore since $a_{j^*}\neq 0$ and $b_{j^*}\neq 0$, \cref{prop:scalar_hinf_sol} applies at the coordinate $j^*$ and gives
    \begin{align*}
        \Gamma_{j^*}^*(\bar q_{j^*})
        =
        \frac{d_{j^*}^{2}}{(1-|a_{j^*}|)^{2}}
        \cdot
        \frac{\bar q_{j^*}}{1+\frac{b_{j^*}^{2}}{r_{j^*}(1-|a_{j^*}|)^{2}}\bar q_{j^*}}
        >
        0
        \text{\,,}
    \end{align*}
    where strict positivity uses $d_{j^*}\neq 0$ and $\bar q_{j^*}>0$.
    By continuity of each $\Gamma_{j}^{*}(\cdot)$ on $[0,1]$, and by the fact that $\Gamma_{j}^{*}(0)=0$ for every $j\in[\dstate]$, there exists $\eta_{j^*}>0$ small enough so that for every $j\in [\dstate]$,
    \begin{align*}
        \Gamma_{j}^{*}(\eta_{j^*})
        <
        \Gamma_{j^*}^{*}(\bar q_{j^*})
        \text{\,,}
    \end{align*}
    and we may additionally require that
    \begin{align*}
        \eta_{j^*}
        <
        \sqrt{\frac{1-\bar q_{j^*}^{2}}{\dstate}}
        \text{\,.}
    \end{align*}
    Then, defining $\qbf^{(0)}$ as described above with $\eta_{j^*}$, all coordinates of $\qbf^{(0)}$ are strictly positive and
    \begin{align*}
        \norm{\qbf^{(0)}}_2^2
        =
        \bar q_{j^*}^{2}
        +
        (\dstate-1)\eta_{j^*}^{2}
        <
        \bar q_{j^*}^{2}
        +
        \frac{\dstate-1}{\dstate}(1-\bar q_{j^*}^{2})
        =\frac{\dstate-1+\bar q_{j^*}^{2}}{\dstate}
        <
        \frac{\dstate-1+\frac{1}{2}}{\dstate}
        <
        1
        \text{\,,}
    \end{align*}
    where we used the fact that $\bar q_{j^*}<\frac{1}{2}$ in the last inequality.
    Hence $\qbf^{(0)}$ lies in the interior of $\D_Q$, and therefore $Q^{(0)}\in\Theta_V$.
    Thus, \cref{lemma:unique_active_index} applies at $\qbf^{(0)}$, and in particular there exists a neighborhood $\NN\subseteq\D_Q$ of $\qbf^{(0)}$ such that for every $\qbf\in\NN$,
    \begin{align*}
        \ksafe_{j^{*}}(\qbf)
        =
        k_{j^*}^{*}(q_{j^*})
        =
        -\frac{a_{j^*}}{|a_{j^*}|}\frac{b_{j^*}}{r_{j^*}(1-|a_{j^*}|)}q_{j^*}
        \text{\,.}
    \end{align*}
    Concretely, for all sufficiently small $h\in\BR$ such that $\qbf^{(0)}+h\ebf_{j^*}\in\NN$, we obtain that
    \begin{align*}
        \ksafe_{j^{*}}(\qbf^{(0)}+h\ebf_{j^*})
        =
        -\frac{a_{j^*}}{|a_{j^*}|}\frac{b_{j^*}}{r_{j^*}(1-|a_{j^*}|)}(q_{j^*}^{(0)}+h)
        \text{\,.}
    \end{align*}
    In particular, the scalar function $g_{j^*}:\BR\to\BR$ defined for any $h\in\BR$ by
    \begin{align*}
        g_{j^*}(h):=\ksafe_{j^{*}}(\qbf^{(0)}+h\ebf_{j^*})
    \end{align*}
    is differentiable at $h=0$ with derivative $g_{j^*}'(0)=-\frac{a_{j^*}}{|a_{j^*}|}\frac{b_{j^*}}{r_{j^*}(1-|a_{j^*}|)}$.
    Applying \cref{lemma:diag_hinf_lip} thus yields
    \begin{align*}
        \lip(\Kinf)
        \geq
        |g_{j^*}'(0)|
        =
        \frac{|b_{j^*}|}{r_{j^*}(1-|a_{j^*}|)}
        \text{\,,}
    \end{align*}
    completing the proof.
\end{proof}

\subsection{Comparing the Lipschitz Constants}
\label{app:separation:strict}

We are now ready to conclude \cref{res:sep}.
In this subsection, we compare the lower bound on the DGARE-synthesized mapping Lipschitz constant with the upper bound on the LQR mapping Lipschitz constant.
The theorem then follows by comparing the resulting dimensionless prefactors.

Let $\alpha\in[0,1]$ be the alignment constant of the dynamical system, and let $j^*\in[\dstate]$ be a witnessing index as guaranteed by \cref{ass:regular}. Rearranging the condition on $|a_{j^{*}}|$,
\begin{align*}
    |a_{j^*}|
    \geq
    \frac{1}{\alpha}\left(\Anorm+\alpha-1\right)
    =
    1-\frac{1-\Anorm}{\alpha}
    \text{\,,}
\end{align*}
we obtain that $1-|a_{j^*}|\leq \frac{1-\Anorm}{\alpha}$.
Together with the conditions on $b_{j^{*}}$ and $r_{j^{*}}$ given by
\begin{align*}
    |b_{j^*}|
    \geq
    \alpha \Bnorm
    \text{\,,}\qquad
    r_{j^*}
    \leq
    \frac{1}{\alpha\Rinvnorm}
    \text{\,,}
\end{align*}
\cref{prop:hinf_lip_lower_bound} yields
\begin{align*}
    \lip(\Kinf)\geq
    \frac{|b_{j^*}|}{r_{j^*}(1-|a_{j^*}|)}\geq
    \frac{\alpha \Bnorm}{\frac{1}{\alpha\Rinvnorm}\cdot \frac{1-\Anorm}{\alpha}}=
    \alpha^{3}\cdot \frac{\Bnorm\Rinvnorm}{1-\Anorm}
    \text{\,.}
\end{align*}
\cref{res:ub} implies that
\begin{align*}
    \lip(\Klqr)
    \leq
    2\Anorm\Bnorm\Rinvnorm\left(1+2\Bnorm^{2}\Rinvnorm\right)
    \text{\,.}
\end{align*}
Combining this with the lower bound on $\lip(\Kinf)$ and the fact that $1-\Anorm\in(0,1)$ gives
\begin{align*}
    \lip(\Kinf)
    &\geq
    \alpha^{3}\cdot \frac{\Bnorm\Rinvnorm}{2(1-\Anorm)\Anorm\left(1+2\Bnorm^{2}\Rinvnorm\right)}\cdot 2\Anorm\left(1+2\Bnorm^{2}\Rinvnorm\right)\\
    &\geq \frac{\alpha^{3}}{\Anorm\left(2+4\Bnorm^{2}\Rinvnorm\right)}\cdot \lip(\Klqr)
    \text{\,,}
\end{align*}
which proves \cref{res:sep}.

	\section{Auxiliary Theorems, Lemmas and Definitions}
\label{app:aux}
In this appendix we provide additional theorems, lemmas and definitions used throughout our proofs.

\begin{lemma}\label{lemma:Schur_comp}
    Let $M\in\BR^{m,m}$ and $N\in\BR^{n,n}$ be symmetric matrices and let $E\in\BR^{m,n}$. 
    Consider the symmetric block matrix $T$ given by
    \begin{align*}
        T=
        \begin{pmatrix}
            M\,& E\\
            E^{\top}\, &N
        \end{pmatrix}
        \text{\,.}
    \end{align*}
    If $T\succeq 0$ and $N\succ 0$, then the Schur complement of the block $N$ of the matrix $T$, given by
    \begin{align*}
        M-EN^{-1}E^{\top}
        \text{\,,}
    \end{align*}
    is PSD.
\end{lemma}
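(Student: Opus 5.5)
The statement to prove is the Schur complement lemma: if $T \succeq 0$ and $N \succ 0$, then $M - E N^{-1} E^\top \succeq 0$.

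The plan is to read the claim directly off the quadratic form of $T$, evaluated at a vector chosen to eliminate the second block. Fix an arbitrary $\xbf \in \BR^{m}$ and set $\ybf := -N^{-1} E^\top \xbf \in \BR^{n}$. This is well defined because $N \succ 0$ is invertible. Since $T \succeq 0$,
\begin{align*}
    0 \leq
    \begin{pmatrix} \xbf \\ \ybf \end{pmatrix}^{\top}
    T
    \begin{pmatrix} \xbf \\ \ybf \end{pmatrix}
    = \xbf^\top M \xbf + 2\,\xbf^\top E \ybf + \ybf^\top N \ybf
    \text{\,.}
\end{align*}

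Next we substitute $\ybf$ into the two terms involving it. Symmetry of $N$, and hence of $N^{-1}$, gives
\begin{align*}
    2\,\xbf^\top E \ybf &= -2\,\xbf^\top E N^{-1} E^\top \xbf \text{\,,} \\
    \ybf^\top N \ybf &= \xbf^\top E N^{-1} N N^{-1} E^\top \xbf = \xbf^\top E N^{-1} E^\top \xbf \text{\,.}
\end{align*}
Adding these to $\xbf^\top M \xbf$, the right-hand side of the inequality collapses to $\xbf^\top \big(M - E N^{-1} E^\top\big)\xbf$.

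Hence $\xbf^\top \big(M - E N^{-1} E^\top\big)\xbf \geq 0$ for every $\xbf$. The matrix $M - E N^{-1} E^\top$ is symmetric, since $M$ and $N^{-1}$ are, so this is exactly the statement that it is PSD.

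There is no real obstacle here. The only points needing care are the choice of $\ybf$, which is the minimizer of the quadratic form in $\ybf$ for fixed $\xbf$, and using the symmetry of $N^{-1}$ so that the cross term combines correctly.

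An equivalent route is the congruence $T = L \,\diag\big(M - E N^{-1}E^\top,\, N\big)\, L^\top$ with $L = \begin{pmatrix} I & E N^{-1} \\ \0 & I\end{pmatrix}$. Since $L$ is invertible, this congruence preserves positive semidefiniteness, and the conclusion follows from the top-left block. The direct quadratic-form argument is shorter, so I would present that one.
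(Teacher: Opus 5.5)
Your proof is correct and complete. The test vector $\ybf=-N^{-1}E^\top\xbf$ collapses the quadratic form of $T$ to $\xbf^\top(M-EN^{-1}E^\top)\xbf$, and symmetry of $N^{-1}$ is exactly what lets the cross term and the $\ybf^\top N\ybf$ term combine.

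The paper does not prove the lemma itself. It defers to the Haynsworth inertia additivity formula, $\mathrm{In}(T)=\mathrm{In}(N)+\mathrm{In}(M-EN^{-1}E^\top)$: since $T\succeq 0$ has no negative eigenvalues, the Schur complement cannot have any either. Your secondary route, $T=L\,\diag(M-EN^{-1}E^\top,N)\,L^\top$ combined with Sylvester's law of inertia, is precisely the mechanism behind that formula. So it is the paper's citation made explicit.

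Your primary quadratic-form argument is more elementary and self-contained. As written, it proves exactly the one implication the paper needs, namely in the proof of \cref{lemma:ric_op_psd}, where $X(P)\succ 0$ plays the role of $N$. The congruence and inertia route gives more: exact inertia counts and the converse that, for $N\succ 0$, $T\succeq 0$ holds if and only if the Schur complement is PSD. None of that is needed here.

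One small remark: your argument uses only that $N$ is symmetric and invertible. Given $T\succeq 0$, that is equivalent to $N\succ 0$, so the hypotheses are used economically.
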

\begin{proof}
    This is a standard property of the Schur complement given by the Haynsworth inertia additivity formula---see \eg~\citep{haynsworth1968inertia}.
\end{proof}

\begin{lemma}\label{lemma:psd_order}
    Let $A, B\in\BR^{m,m}$ be symmetric matrices. 
    If $A\succeq 0$, $B\succeq 0$ and $A-B\succeq 0$, then $\norm{A}_{2}\geq \norm{B}_{2}$.
\end{lemma}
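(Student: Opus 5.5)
The plan is to argue via the variational (Rayleigh quotient) characterization of the spectral norm for symmetric PSD matrices. For a symmetric $A \succeq 0$, we have $\norm{A}_{2} = \lambda_{\max}(A) = \max_{\norm{\vbf}_2 = 1} \vbf^\top A \vbf$. This holds because the eigenvalues of $A$ are nonnegative, so the largest singular value coincides with the largest eigenvalue, and the latter equals the maximum of the quadratic form over the unit sphere. The same identity holds for $B$.

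We would then take a unit vector $\vbf$ attaining the maximum for $B$, namely an eigenvector of $B$ for $\lambda_{\max}(B)$, so that $\vbf^\top B \vbf = \norm{B}_2$. Since $A - B \succeq 0$, we have $\vbf^\top (A - B) \vbf \geq 0$, which rearranges to $\vbf^\top A \vbf \geq \vbf^\top B \vbf$. Combining the two facts gives
\[
\norm{A}_2 \geq \vbf^\top A \vbf \geq \vbf^\top B \vbf = \norm{B}_2 ,
\]
which is the desired inequality.

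The only point needing care is the first step, namely that the spectral norm equals the top eigenvalue. This uses both symmetry and positive semidefiniteness. Symmetry gives real eigenvalues and an orthonormal eigenbasis, so the singular values are the absolute values of the eigenvalues. Positive semidefiniteness makes those absolute values equal to the eigenvalues themselves, so the largest singular value is $\lambda_{\max}$. We would state this briefly via the spectral theorem, and no further obstacle is expected.
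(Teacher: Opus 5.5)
Your proof is correct and takes the same route as the paper: both combine $\vbf^\top (A-B)\vbf \geq 0$ with the Rayleigh-quotient characterization $\norm{M}_2 = \max_{\norm{\vbf}_2=1}\vbf^\top M\vbf$. You are also more careful than the paper's wording, which states this characterization ``for symmetric matrices''; you correctly note that it needs positive semidefiniteness, which holds here since $A,B\succeq 0$.
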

\begin{proof}
    Since $A-B\succeq 0$, we have that
    \begin{align*}
        \xbf^{\top}(A-B)\xbf\geq 0
        \text{\,.}
    \end{align*}
    for any $\xbf\in\BR^{m}$. 
    Hence, we have that
    \begin{align*}
        \xbf^{\top}A\xbf\geq \xbf^{\top}B\xbf
        \text{\,.}
    \end{align*}
    Recalling that for symmetric matrices, $\norm{M}_{2}=\max_{\xbf\in\BR^{m}}\frac{\xbf^{\top}M\xbf}{\xbf^{\top}\xbf}$, we obtain that
    \begin{align*}
        \norm{A}_{2}= \max_{\xbf\in\BR^{m}}\frac{\xbf^{\top}A\xbf}{\xbf^{\top}\xbf}\geq \max_{\xbf\in\BR^{m}}\frac{\xbf^{\top}B\xbf}{\xbf^{\top}\xbf}=\norm{B}_{2}
        \text{\,.}
    \end{align*}
    as required.
\end{proof}

\begin{lemma}\label{lemma:Rayleigh-Ritz}
    Let $M\in\BR^{m,m}$ be a symmetric matrix. For any $\xbf\in\BR^{m}$ it holds that
    \begin{align*}
        \lambda_{\min}(M)\|\xbf\|_{2}^{2}\leq \xbf^{\top}M\xbf\leq \norm{M}_{2}\|\xbf\|_{2}^{2}
        \text{\,.}
    \end{align*}
    \end{lemma}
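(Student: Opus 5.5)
The plan is to prove the Rayleigh--Ritz inequality by diagonalizing $M$ with the spectral theorem and reducing both bounds to a statement about nonnegative weights.

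Since $M$ is real symmetric, the spectral theorem gives an orthogonal matrix $U\in\BR^{m,m}$ and real eigenvalues $\mu_1,\ldots,\mu_m$ with $M=U\diag(\mu_1,\ldots,\mu_m)U^\top$. Fix $\xbf\in\BR^m$ and set $\ybf:=U^\top\xbf$. Orthogonality of $U$ gives $\norm{\ybf}_2=\norm{\xbf}_2$, and the quadratic form becomes
\begin{align*}
    \xbf^\top M\xbf=\sum_{i=1}^m \mu_i y_i^2
    \text{\,.}
\end{align*}

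Both bounds now follow from the fact that the weights $y_i^2$ are nonnegative and sum to $\norm{\xbf}_2^2$. For the lower bound, each term satisfies $\mu_i y_i^2\geq \lambda_{\min}(M)y_i^2$; summing over $i$ gives $\lambda_{\min}(M)\norm{\xbf}_2^2$. For the upper bound, each term satisfies $\mu_i y_i^2\leq \max_i\mu_i\, y_i^2\leq \max_i|\mu_i|\,y_i^2$. The bound then follows from $\max_i|\mu_i|=\norm{M}_2$.

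The only step that needs care is this last identity. For a symmetric matrix, the spectral norm equals the largest absolute eigenvalue, because $\norm{M}_2=\norm{\diag(\mu_1,\ldots,\mu_m)}_2$ by orthogonal invariance of the spectral norm. Note also that the upper bound is stated with $\norm{M}_2$ rather than $\lambda_{\max}(M)$. It therefore holds even when $M$ has negative eigenvalues, since $\lambda_{\max}(M)\leq\norm{M}_2$ always.
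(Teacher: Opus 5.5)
Your proof is correct and follows essentially the same route as the paper, which cites the Rayleigh quotient theorem $\lambda_{\min}(M)\le \xbf^\top M\xbf/\xbf^\top\xbf\le\lambda_{\max}(M)$ for $\xbf\neq\0$; your spectral-decomposition argument proves this inline. Your last step is more careful than the paper's: the paper asserts $\lambda_{\max}(M)=\norm{M}_2$ for symmetric $M$, which fails for matrices such as $M=-I$, whereas your inequality $\lambda_{\max}(M)\le\max_i|\mu_i|=\norm{M}_2$ is exactly what the upper bound needs.
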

    \begin{proof}
        If $\xbf$ is the all-zeros vector $\0$ then the claim is trivial.
        Otherwise, by the Rayleigh quotient Theorem~\citep{horn2012matrix} it holds that
        \begin{align*}
            \lambda_{\min}(M)\le\frac{\xbf^{\top}M\xbf}{\xbf^{\top}\xbf}\le\lambda_{\max}(M)
            \text{\,.}
        \end{align*}
        Noting that $\lambda_{\max}(M)=\norm{M}_{2}$ for symmetric matrices, the proof follows from rearranging.
    \end{proof}

\begin{lemma}\label{lemma:frob_op_norm}
    Let $M\in\BR^{m,n}$ and $N\in\BR^{n,k}$. It holds that
    \begin{align*}
        \sigma_{\min}(M)\norm{N}_{F}\leq \norm{MN}_{F}\leq \norm{M}_{2}\norm{N}_{F}
    \end{align*}
    and 
    \begin{align*}
        \norm{M}_{F}\sigma_{\min}(N)\leq \norm{MN}_{F}\leq \norm{M}_{F}\norm{N}_{2}
        \text{\,.}
    \end{align*}
\end{lemma}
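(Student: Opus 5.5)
The statement to prove is the last auxiliary lemma: for $M\in\BR^{m,n}$ and $N\in\BR^{n,k}$, $\sigma_{\min}(M)\norm{N}_F\le\norm{MN}_F\le\norm{M}_2\norm{N}_F$, together with the transposed version. The plan is to reduce the Frobenius norm of the product to a sum over the columns of $N$, and then apply the elementary vector inequality $\sigma_{\min}(M)\norm{\xbf}_2\le\norm{M\xbf}_2\le\norm{M}_2\norm{\xbf}_2$ to each column.

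\textbf{Step 1: the vector inequality.} For any $\xbf\in\BR^n$ we have $\norm{M\xbf}_2^2=\xbf^\top M^\top M\xbf$. Since $M^\top M$ is symmetric PSD, \cref{lemma:Rayleigh-Ritz} bounds this quadratic form between $\lambda_{\min}(M^\top M)\norm{\xbf}_2^2$ and $\norm{M^\top M}_2\norm{\xbf}_2^2$. We then use $\lambda_{\min}(M^\top M)=\sigma_{\min}(M)^2$ and $\norm{M^\top M}_2=\norm{M}_2^2$. Here $\sigma_{\min}(M)$ is read as the smallest singular value among $n$ of them, so it equals $0$ when $m<n$, in which case the lower bound is trivial.

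\textbf{Step 2: sum over columns.} Write $N=(\nbf_1,\dots,\nbf_k)$ by columns, where $\nbf_i$ is the $i$-th column. Then $MN=(M\nbf_1,\dots,M\nbf_k)$, so
\[
\norm{MN}_F^2=\sum_i\norm{M\nbf_i}_2^2 .
\]
Apply Step 1 to each term and use $\sum_i\norm{\nbf_i}_2^2=\norm{N}_F^2$. Taking square roots gives the first chain of inequalities.

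\textbf{Step 3: the second chain.} Use $\norm{MN}_F=\norm{N^\top M^\top}_F$, apply the first chain with $N^\top$ in the role of $M$, and note that transposition preserves the spectral norm, the singular values and the Frobenius norm.

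The only delicate point is the convention for $\sigma_{\min}$ in Step 1: it must be the square root of $\lambda_{\min}$ of the appropriate Gram matrix so that the lower bound holds in rectangular cases. The paper uses only the upper bounds, so at worst the lower bound can be stated under the assumption that the relevant matrix has full column rank.
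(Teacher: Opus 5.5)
Your proof is correct and follows the paper's argument. The paper also writes $\norm{MN}_F^2$ as a sum, over the columns of $N$, of quadratic forms in $M^\top M$ and applies \cref{lemma:Rayleigh-Ritz}; for the second chain it uses the cyclic property of the trace with the rows of $M$ and $NN^\top$, which is equivalent to your transposition. One small point on Step 3: what you actually obtain there is $\sigma_{\min}(N^\top)=\sqrt{\lambda_{\min}(NN^\top)}$, and under your Step 1 convention this differs from $\sigma_{\min}(N)=\sqrt{\lambda_{\min}(N^\top N)}$ when $N$ has more rows than columns, so ``transposition preserves the singular values'' is not quite right in that case. The second lower bound should therefore be read with $NN^\top$ as the Gram matrix, as the paper implicitly does.
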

\begin{proof}
    By definition of the Frobenius norm, it holds that
    \begin{align*}
        \norm{MN}_{F}^{2}=\Tr\left(\left(MN\right)^{\top}MN\right)=\Tr\left(N^{\top}M^{\top}MN\right)
    \end{align*}
    For $i\in[k]$ let $\vbf_{i}\in\BR^{n}$ be the $i$th column of $N$. 
    Then by definition of $\Tr(\cdot)$ we obtain that
    \begin{align*}
        \Tr\left(N^{\top}M^{\top}MN\right)=\sum_{i=1}^{k}\vbf_{i}^{\top}M^{\top}M\vbf_{i}\leq \sum_{i=1}^{k}\norm{M^{\top}M}_{2}\|\vbf_{i}\|_{2}^{2}
    \end{align*}
    where the inequality stems from $M^{\top}M$ being symmetric and from \cref{lemma:Rayleigh-Ritz}. 
    Finally, it holds that
    \begin{align*}
        \sum_{i=1}^{k}\norm{M^{\top}M}_{2}\|\vbf_{i}\|_{2}^{2}\leq \norm{M}_{2}^{2}\norm{N}_{F}^{2}
    \end{align*}
    where we used the Cauchy-Schwarz inequality and the definition of the Frobenius norm. 
    Hence, the right inequality of the first claim follows by taking square root over both sides. 
    The left inequality follows a symmetric argument, and by observing that $\lambda_{\min}(M^{\top}M)=\sigma_{\min}(M)^{2}$. 
    Next, by the cyclic property of $\Tr(\cdot)$ it holds that
    \begin{align*}
        \Tr\left(N^{\top}M^{\top}MN\right)=\Tr\left(MNN^{\top}M^{\top}\right)
        \text{\,,}
    \end{align*}
    thus the second claim follows from identical arguments using $M$'s rows instead of $N$'s columns and the symmetric $NN^{\top}$ instead of $M^{\top}M$.
\end{proof}

\begin{lemma}\label{lemma:inv_identity}
    Let $M,N\in\BR^{m,m}$ be invertible matrices. 
    It holds that
    \begin{align*}
        M^{-1}-N^{-1}=M^{-1}(N-M)N^{-1}
        \text{\,.}
    \end{align*}
\end{lemma}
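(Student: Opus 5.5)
The statement is \cref{lemma:inv_identity}, the elementary resolvent identity $M^{-1}-N^{-1}=M^{-1}(N-M)N^{-1}$ for invertible $M,N\in\BR^{m,m}$. The plan is a direct algebraic verification: expand the right-hand side using distributivity of matrix multiplication and the defining properties $M^{-1}M=I$ and $NN^{-1}=I$. No earlier result from the paper is needed; only associativity of matrix products and the definition of the inverse are used.

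Concretely, I will write
\begin{align*}
    M^{-1}(N-M)N^{-1} = M^{-1}NN^{-1} - M^{-1}MN^{-1} = M^{-1}I - IN^{-1} = M^{-1}-N^{-1}
    \text{\,.}
\end{align*}
The first equality is distributivity of the product over subtraction, applied on both sides of $(N-M)$. The second regroups each term by associativity into $M^{-1}(NN^{-1})$ and $(M^{-1}M)N^{-1}$, and then uses the inverse identities.

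There is no genuine obstacle. The only point requiring care is the order of the factors: the identity is stated in the non-commutative setting, so I will keep $M^{-1}$ on the left and $N^{-1}$ on the right throughout and never swap them. This matches how the identity is used earlier in \cref{lemma:ric_op_cont,lemma:controller_op_lipschitz}, with $M=X(P_1)$ and $N=X(P_2)$. Invertibility of both matrices is assumed, so every expression in the computation is well defined.
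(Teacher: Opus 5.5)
Your proposal is correct and matches the paper's proof: the paper likewise expands $M^{-1}(N-M)N^{-1}$ by distributivity into $M^{-1}NN^{-1}-M^{-1}MN^{-1}$ and simplifies using the inverse identities to get $M^{-1}-N^{-1}$. Your remark about keeping $M^{-1}$ on the left and $N^{-1}$ on the right is the only care the argument needs.
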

\begin{proof}
    By direct algebra
    \begin{align*}
        M^{-1}(N-M)N^{-1}=M^{-1}NN^{-1}-M^{-1}MN^{-1}=M^{-1}-N^{-1}
        \text{\,.}
    \end{align*}
\end{proof}

\begin{theorem}\label{thm:plancherel}
    Let $\{z_{t}\}_{t=0}^{\infty}$ be a sequence of complex numbers such that $\sum_{t=0}^{\infty}|z_{t}|^{2}<\infty$.
    Then the discrete-time Fourier transform $\hat{z}$ (\cref{def:dtft}), understood as the $L^{2}$ limit of the transforms of finite truncations, is an $L^{2}([0,2\pi])$ function and satisfies
    \begin{align*}
        \sum_{t=0}^{\infty}|z_{t}|^{2}=\frac{1}{2\pi}\int_{0}^{2\pi}|\hat{z}(\omega)|^{2}d\omega
        \text{\,.}
    \end{align*}
    If in addition $\sum_{t=0}^{\infty}|z_t|<\infty$, then the defining series for $\hat z$ converges uniformly and $\hat z$ is continuous on $[0,2\pi]$.
\end{theorem}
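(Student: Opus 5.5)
The plan is to reduce everything to the orthonormality of the exponentials $e_t(\omega):=e^{-i\omega t}$, $t\in\BN\cup\{0\}$, in $L^{2}([0,2\pi])$ equipped with the normalized inner product $\langle f,g\rangle:=\frac{1}{2\pi}\int_{0}^{2\pi}f(\omega)\overline{g(\omega)}\,d\omega$. First I will check directly that $\langle e_t,e_s\rangle=\frac{1}{2\pi}\int_0^{2\pi}e^{-i\omega(t-s)}d\omega$ equals $1$ if $t=s$ and $0$ otherwise. The second case holds because $e^{-i\omega(t-s)}$ has an explicit antiderivative that is $2\pi$-periodic when $t\neq s$.

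Next I will consider the finite truncations $\hat z_N(\omega):=\sum_{t=0}^{N}z_te^{-i\omega t}$, which are trigonometric polynomials and hence continuous. Expanding $|\hat z_N|^2=\hat z_N\overline{\hat z_N}$ as a finite double sum and integrating term by term, orthonormality gives the finite Parseval identity
\begin{align*}
\frac{1}{2\pi}\int_0^{2\pi}|\hat z_N(\omega)|^2\,d\omega=\sum_{t=0}^{N}|z_t|^2
\text{\,.}
\end{align*}
Applying the same identity to $\hat z_M-\hat z_N=\sum_{t=N+1}^{M}z_te_t$ shows that $\|\hat z_M-\hat z_N\|_{L^2}^2=\sum_{t=N+1}^{M}|z_t|^2$. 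Since $\sum_t|z_t|^2<\infty$, this tail tends to $0$, so $(\hat z_N)_N$ is Cauchy in $L^2([0,2\pi])$. Completeness of $L^2$ (Riesz--Fischer) then yields an $L^2$ limit $\hat z$, which is the object referred to in the statement. By continuity of the $L^2$ norm, $\|\hat z\|^2=\lim_N\|\hat z_N\|^2=\lim_N\sum_{t\le N}|z_t|^2=\sum_{t=0}^\infty|z_t|^2$, which is exactly the claimed identity once the $\frac{1}{2\pi}$ normalization is written out.

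For the second part, assume additionally $\sum_t|z_t|<\infty$. Since $|z_te^{-i\omega t}|=|z_t|$ for every $\omega$, the Weierstrass M-test shows that the series converges uniformly on $[0,2\pi]$. Its uniform limit is continuous, as a uniform limit of continuous partial sums. Uniform convergence on a bounded interval implies $L^2$ convergence. Hence this pointwise sum agrees almost everywhere with the $L^2$ limit constructed above, so the two definitions of $\hat z$ are consistent, and the continuous representative is the one used in the earlier proofs (e.g., in \cref{lemma:scalar_hinf_obj}).

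I do not expect a genuine obstacle, since this is classical. The only point needing care is the identification step: making precise that ``$\hat z$'' denotes the $L^2$ limit in general, and that it coincides a.e.\ with the uniformly convergent sum in the absolutely summable case. Beyond that, the argument only requires invoking completeness of $L^2$ rather than reproving it.
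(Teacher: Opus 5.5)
Your proof is correct. The paper gives no proof of this statement and only cites the standard Plancherel theorem; your argument (orthonormality of the exponentials, the finite Parseval identity, the Cauchy property of the truncations plus completeness of $L^{2}$, and the Weierstrass M-test in the absolutely summable case) is the classical proof behind that citation. Your explicit check that, when $\sum_{t}|z_t|<\infty$, the uniformly convergent sum agrees almost everywhere with the $L^{2}$ limit is a useful addition, since \cref{lemma:scalar_hinf_obj} implicitly relies on this identification when it treats $\hat h$ as the continuous function $d_j/(e^{i\omega}-\xi)$.
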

\begin{proof}
    This is the standard Plancherel theorem for the discrete-time Fourier transform---see \eg~\citep{plancherel1910contribution}.
\end{proof}

\begin{lemma}\label{lemma:dtft_product}
    Let $\{z_{t}\}_{t=0}^{\infty}$ and $\{y_{t}\}_{t=0}^{\infty}$ be sequences of complex numbers such that $\sum_{t=0}^{\infty}|z_{t}|^{2}<\infty$ and $\sum_{t=0}^{\infty}|y_{t}|<\infty$.
    Consider the sequence $\{s_{t}\}_{t=0}^{\infty}$ given by $s_{t}=\sum_{\tau=0}^{t}z_{t-\tau}y_{\tau}$.
    Then the discrete-time Fourier transform of $\{s_{t}\}_{t=0}^{\infty}$ (\cref{def:dtft}) is given, for almost every $\omega\in[0,2\pi]$, by
    \begin{align*}
        \hat{s}(\omega)=\hat{z}(\omega)\hat{y}(\omega)
        \text{\,.}
    \end{align*}
\end{lemma}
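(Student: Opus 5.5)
The plan is to prove the identity first for finitely supported $\{z_t\}$, where every series is absolutely convergent. The general case then follows by an $L^2$ approximation argument, using Plancherel (\cref{thm:plancherel}) and a Young-type convolution bound.

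\emph{Step 1 (a priori bound).} For sequences supported on $t\geq 0$ I would show that
\begin{align*}
\norm{s}_{\ell^2}\leq \norm{y}_{\ell^1}\norm{z}_{\ell^2}\text{\,.}
\end{align*}
Write $s_t=\sum_{\tau}y_\tau z_{t-\tau}$, with $z$ extended by zero to negative indices. By Minkowski's inequality in $\ell^2$, $\norm{s}_2\leq\sum_\tau|y_\tau|\,\norm{z_{\cdot-\tau}}_2=\norm{y}_1\norm{z}_2$. Consequently $s\in\ell^2$, so $\hat s$ is well defined as an $L^2$ function in the sense of \cref{thm:plancherel}. Moreover, the map $z\mapsto z*y$ is linear and bounded on $\ell^2$.

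\emph{Step 2 (finite truncations).} Let $z^{(N)}$ be the truncation of $z$ to indices $t\leq N$, and set $s^{(N)}:=z^{(N)}*y$. Then $s^{(N)}\in\ell^1$, because $\norm{s^{(N)}}_1\leq\norm{z^{(N)}}_1\norm{y}_1<\infty$. Hence the defining series for $\hat s^{(N)}$ converges absolutely. Since $\sum_{t,\tau}|z^{(N)}_{t-\tau}||y_\tau|<\infty$, Fubini lets us interchange summations:
\begin{align*}
\hat s^{(N)}(\omega)=\sum_{\tau}y_\tau e^{-i\omega\tau}\sum_{t}z^{(N)}_{t-\tau}e^{-i\omega(t-\tau)}=\hat y(\omega)\,\hat z^{(N)}(\omega)\text{\,.}
\end{align*}
This identity holds for every $\omega$. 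Here $\hat y$ is continuous and bounded by $\norm{y}_1$, by the second part of \cref{thm:plancherel}.

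\emph{Step 3 (passing to the limit).} By Step 1, $\norm{s^{(N)}-s}_2\leq\norm{y}_1\norm{z^{(N)}-z}_2\to 0$. Plancherel then gives $\hat s^{(N)}\to\hat s$ in $L^2([0,2\pi])$. On the other side, $\hat z^{(N)}\to\hat z$ in $L^2$ by the definition of $\hat z$ as the $L^2$ limit of transforms of truncations. Since $\hat y$ is bounded,
\begin{align*}
\norm{\hat y\hat z^{(N)}-\hat y\hat z}_{L^2}\leq\norm{y}_1\norm{\hat z^{(N)}-\hat z}_{L^2}\to 0\text{\,.}
\end{align*}
By Step 2 the two approximating sequences coincide, and $L^2$ limits are unique up to null sets, so $\hat s=\hat z\hat y$ almost everywhere.

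The main point requiring care is Step 3's identification of $\hat s$. The transform $\hat s$ is itself defined as an $L^2$ limit of the transforms of truncations of $s$, not of the $s^{(N)}$ used here. I would reconcile the two by noting that both the truncations $s|_{[0,M]}$ and the sequences $s^{(N)}$ converge to $s$ in $\ell^2$, so Plancherel's isometry forces their transforms to share the same $L^2$ limit.
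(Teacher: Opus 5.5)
Your proof is correct, and it goes beyond what the paper provides. The paper gives no argument for this lemma: it calls it a standard property of the DTFT and cites a signal-processing textbook. The usual textbook derivation amounts to your Step 2 alone, a formal interchange of the double sum $\sum_t\sum_\tau z_{t-\tau}y_\tau e^{-i\omega t}$. That interchange is justified when both sequences are absolutely summable. It does not directly cover the stated hypothesis $z\in\ell^2$, where $\hat z$ exists only as an $L^2$ limit and the identity can hold only almost everywhere.

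Your Young-type bound $\|z*y\|_{\ell^2}\le\|y\|_{\ell^1}\|z\|_{\ell^2}$, together with truncating $z$ and using uniqueness of $L^2$ limits, is exactly what closes that gap. Your reconciliation of the two approximating sequences for $\hat s$ also goes through using only the norm identity in \cref{thm:plancherel}. The difference $s^{(N)}-s|_{[0,M]}$ is absolutely summable, so its transform is the pointwise difference of the two transforms.

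This matters for the paper. In \cref{lemma:scalar_hinf_obj} the lemma is invoked with the merely square-summable disturbance as the $\ell^2$ factor and the impulse response as the $\ell^1$ factor, after using commutativity of the one-sided convolution. That is precisely the regime your density argument handles. The citation buys brevity; your route buys a self-contained proof in the exact generality in which the lemma is stated and used.
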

\begin{proof}
    This is a standard property of the discrete-time Fourier transform. See \citep{oppenheim2010dtsp}.
\end{proof}

\begin{lemma}\label{lemma:dtft_approx}
    Let $f:[0,2\pi]\to\BR$ be a continuous function and let $\omega^{*}\in[0,2\pi]$. For $n\in\BN$ and $t\in\BN\cup\{0\}$ define $w^{(n)}_{t}$ as
    \begin{align*}
        w^{(n)}_{t}=
        \begin{cases}
            e^{i\omega^{*}t}\,, & t\in[0,n-1]\\
            0\,, & \text{otherwise}\\
        \end{cases}
        \text{\,.}
    \end{align*}
    Then it holds that
    \begin{align*}
        \lim_{n\to\infty}\frac{\int_{0}^{2\pi}f(\omega)|\hat{w}^{(n)}(\omega)|^{2}d\omega}{\int_{0}^{2\pi}|\hat{w}^{(n)}(\omega)|^{2}d\omega}=f(\omega^{*})
    \end{align*}
    where $\hat{w}^{(n)}(\omega)$ is the discrete-time Fourier transform of $w^{(n)}_{t}$ (\cref{def:dtft}).
\end{lemma}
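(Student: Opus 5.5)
The plan is to compute the ratio of the two integrals explicitly through the Fejér kernel. For the truncated exponential sequence $w^{(n)}_t=e^{i\omega^* t}\mathbbm{1}[t<n]$, the transform is a finite geometric sum:
\begin{align*}
\hat w^{(n)}(\omega)=\sum_{t=0}^{n-1}e^{-i(\omega-\omega^*)t}
\text{\,.}
\end{align*}
Consequently
\begin{align*}
|\hat w^{(n)}(\omega)|^2=\frac{\sin^2\!\big(n(\omega-\omega^*)/2\big)}{\sin^2\!\big((\omega-\omega^*)/2\big)}=n\,F_n(\omega-\omega^*)
\text{\,,}
\end{align*}
where $F_n$ is the Fejér kernel, normalized so that $\frac{1}{2\pi}\int_0^{2\pi}F_n=1$.

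The denominator then follows from \cref{thm:plancherel}:
\begin{align*}
\int_0^{2\pi}|\hat w^{(n)}|^2\,d\omega=2\pi\sum_t|w^{(n)}_t|^2=2\pi n
\text{\,.}
\end{align*}
Hence the ratio in question equals $\frac{1}{2\pi}\int_0^{2\pi}f(\omega)F_n(\omega-\omega^*)\,d\omega$, which is the Fejér mean $(f*F_n)(\omega^*)$ of $f$ at $\omega^*$. Here $f$ is regarded as $2\pi$-periodic, and the integrand is $2\pi$-periodic in $\omega$, so the domain of integration can be shifted freely.

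The main step is to show that this Fejér mean converges to $f(\omega^*)$. We use the standard approximate-identity argument, based on three properties of $F_n$: it is nonnegative, it has unit mean, and for each $\delta>0$ we have $\sup_{\delta\le|u|\le\pi}F_n(u)\le \frac{1}{n\sin^2(\delta/2)}\to0$. Write the error as
\begin{align*}
\frac{1}{2\pi}\int\big(f(\omega^*+u)-f(\omega^*)\big)F_n(u)\,du
\text{\,,}
\end{align*}
and split the integral into the regions $|u|<\delta$ and $|u|\ge\delta$. On $|u|<\delta$, continuity at $\omega^*$ bounds the contribution by $\varepsilon$. On $|u|\ge\delta$, the contribution is at most $2\norm{f}_\infty$ times the vanishing tail bound, and $\norm{f}_\infty<\infty$ because $f$ is continuous on a compact interval.

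The one genuine obstacle is periodicity. Continuity at $\omega^*$ for the periodic extension requires $f(0)=f(2\pi)$ when $\omega^*\in\{0,2\pi\}$, and an arbitrary continuous $f$ on $[0,2\pi]$ need not satisfy this. For such an $\omega^*$ at the endpoint, the same argument then yields $\tfrac12\big(f(0)+f(2\pi)\big)$. So I would either add the mild hypothesis that $f(0)=f(2\pi)$, or note that every application in the paper satisfies it. In \cref{lemma:scalar_hinf_obj}, $f(\omega)=d_j^2/|e^{i\omega}-\xi|^2$ is a genuinely $2\pi$-periodic function of $e^{i\omega}$, so this is harmless there, and the argument goes through unchanged for every $\omega^*$, including $\omega^*=0$ and $\omega^*=\pi$.

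Finally, the sequences $w^{(n)}$ are real-valued when $\omega^*\in\{0,\pi\}$, since then $e^{i\omega^*t}=(\pm1)^t$. This is the property the paper relies on when it invokes this lemma.
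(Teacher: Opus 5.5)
Your proposal is correct and takes the same route as the paper. The paper's proof identifies $|\hat w^{(n)}|^2$ with $n$ times the Fej\'er kernel centered at $\omega^*$ and invokes Fej\'er's theorem, citing Katznelson; you additionally write out the approximate-identity argument in full.

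Your endpoint caveat is also right and exposes a genuine defect in the statement. The choices $\omega^*=0$ and $\omega^*=2\pi$ produce the identical sequence $w^{(n)}$, yet the lemma claims the limits $f(0)$ and $f(2\pi)$ respectively, so it is false unless $f(0)=f(2\pi)$. Nothing downstream is affected, however, because the paper applies the lemma only with $\omega^*\in\{0,\pi\}$ and a $2\pi$-periodic $f$.
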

\begin{proof}
    Up to normalization, $\hat{w}^{(n)}(\omega)$ is the Fej\'er kernel centered at $\omega^{*}$. 
    Therefore the ratio is a Fejér-kernel average of $f$, and by Fej\'er's theorem it converges to $f(\omega^{*})$.
    See Chapter I, Section 3.1 in~\citep{katznelson2004harmonic} for more details.
\end{proof}

	\section{From Controller Maps to Policy Maps}
\label{app:policy}

The theoretical results of \cref{res:ub,res:sep} are phrased in terms of the controller mappings $Q\mapsto \Klqr(Q)$ and $Q\mapsto \Kinf(Q)$.
This is mainly a matter of convenience: in both settings we study, the policies are of the form $\piteach(\xbf;Q) = K(Q)\xbf$, where $K(Q)$ denotes either $\Klqr(Q)$ or $\Kinf(Q)$.
In this linear state-feedback setting, there is little distinction between learning the controller mapping and learning the induced policy mapping, since the policy is fully specified by the matrix.
Accordingly, once the state is restricted to a bounded set, regularity properties of the controller mapping readily induce corresponding regularity properties of the policy mapping.

Fix a radius $\rho>0$ and consider the bounded state space $\X_{\rho}:=\{\xbf\in\BR^{\dstate}:\norm{\xbf}_{2}\leq \rho\}$.
Bounded states are natural when discussing Lipschitzness of the policy mapping, since the policy is linear: without controlling $\norm{\xbf}_{2}$, variations in the policy output can be dominated by large states even when the task dependence of the controller matrix itself is well behaved.
Since the policy depends on $Q$ only via the matrix $K(Q)$, when $\xbf$ is fixed, perturbing $Q$ changes the action only through the corresponding perturbation of $K(Q)$, scaled by the size of $\xbf$.
Such perturbations are controlled by the Lipschitzness of the mapping $Q\mapsto K(Q)$.
If $\xbf$ varies as well, there is one additional effect: the same controller being applied to a different state.
On $\X_{\rho}$, this contribution is controlled by a uniform bound on the norm of $K(Q)$.
Consequently, the same reasoning underlying the controller-mapping analysis yields similar qualitative regularity results for the induced policy mappings.

Our upper bound for the mapping $Q\mapsto \Klqr(Q)$ yields an upper bound of the same nature for the policy mapping $(Q,\xbf)\mapsto \Klqr(Q)\xbf$ on $\Theta\times\X_{\rho}$: the action varies smoothly with the task whenever the corresponding controller matrix does.
Likewise, the lower bound for the mapping $Q\mapsto \Kinf(Q)$ identifies task perturbations for which the controller matrix changes sharply.
Evaluating the induced policy at a bounded non-zero state preserves this sensitivity at the action level.
Therefore, the mechanism behind our separation result leads to a similar qualitative separation for policy mappings.

	\section{Additional Experiments}
\label{app:exper}

The experiments in \cref{sec:exper:lqr} corroborate our theoretical analysis, verifying our separation result holds even when the assumptions of \cref{sec:analysis} are violated, and showing that although imitating safe and unsafe teachers on a single (fixed) task is similarly straightforward, generalizing across tasks is considerably more difficult with the safe teacher.
In this appendix, we provide additional experimental results omitted from \cref{sec:exper:lqr}.
\cref{fig:lip_dim=8} extends the experiments in \cref{fig:lip_dim=4} to systems of dimension $\dstate=8$.
\cref{fig:lip_breakdown_dim=4} provides a detailed breakdown of the results in the second plot of \cref{fig:lip_dim=4}, separating between systems of different parameter norms.
\cref{fig:lip_breakdown_dim=8} similarly provides such a breakdown for the second plot of \cref{fig:lip_dim=8}.
\cref{tab:lqr_dim=15,tab:lqr_dim=20} extend the linear-quadratic experiments in \cref{tab:imitate} to systems of dimension $\dstate=15$ and $\dstate=20$, respectively.
Lastly, \cref{tab:lqr_train} supplements the results of \cref{tab:imitate,tab:lqr_dim=15,tab:lqr_dim=20} by reporting the final training performance of the trained students in the linear-quadratic settings.
\cref{tab:nonlinear_train} similarly supplements the results of \cref{tab:imitate} by reporting the final training performance of the trained students in the non-linear settings.

\begin{figure}[t]
    \begin{center}
        \vspace{1.5mm}
        \begin{center}
            \includegraphics[width=1\textwidth]{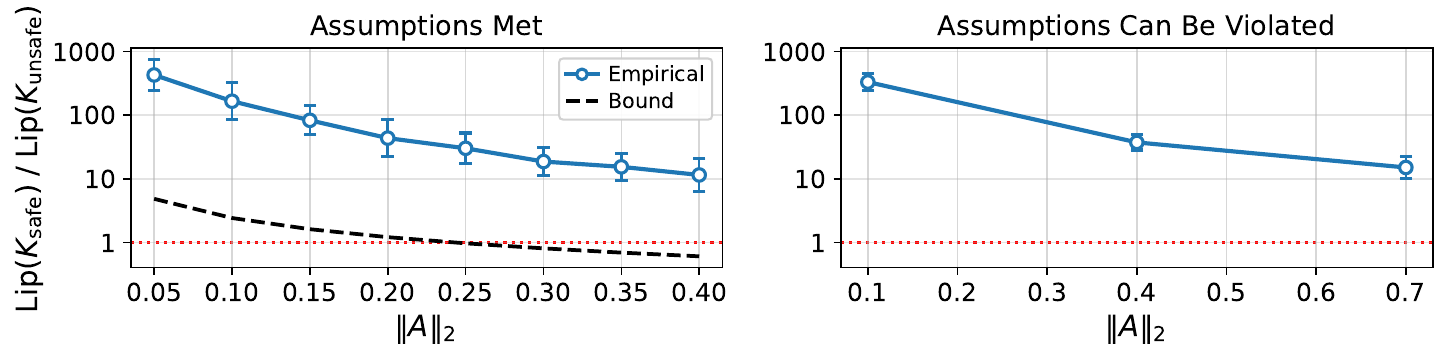}
        \end{center}
        \vspace{-2mm}
    \end{center}
    \caption{
    Demonstration of the separation of Lipschitz constants derived in \cref{res:sep}---the Lipschitz constant of the safe mapping $\Kinf(\cdot)$ is strictly larger than that of the unsafe mapping $\Klqr(\cdot)$, even when the coefficient is smaller than one or the technical conditions are violated.
    This figure is identical to \cref{fig:lip_dim=4}, except that the systems are of dimension $\dstate=8$.
    For further details, see \cref{fig:lip_dim=4} as well as \cref{app:details:lip}.
    }
    \label{fig:lip_dim=8}
\end{figure}

\begin{figure}[t]
    \begin{center}
        \vspace{1.5mm}
        \begin{center}
            \includegraphics[width=1\textwidth]{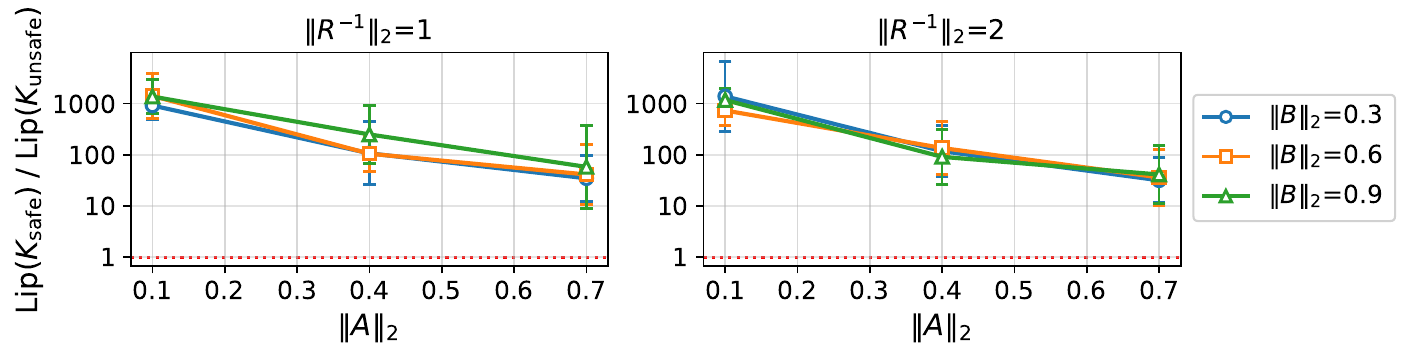}
        \end{center}
        \vspace{-2mm}
    \end{center}
    \caption{
        In the theoretically analyzed setting (linear-quadratic control with $\Hinf$-robustness; \cref{sec:prelim:setup_theory}), the mapping from task specification to an optimal controller has higher Lipschitz constant with safety requirements than without, \ie, the Lipschitz constant of~$\Ksafe ( \cdot )$ is greater than that of~$\Kunsafe ( \cdot )$.
        This figure provides a detailed breakdown of the results in the second plot of \cref{fig:lip_dim=4} (random systems that do not ensure our assumptions), with separate plots for different values of $\|R^{-1}\|_{2}$, and separate curves for different values of $\|B\|_{2}$.
        Reported quantities are averages across $5$ random draws of data splits and systems with $\|D\|_{2}=1$ and the prescribed values for $\|B\|_{2}$ and $\|R^{-1}\|_{2}$, with error bars denoting standard deviations in log space.
        For further details, see \cref{fig:lip_dim=4} as well as \cref{app:details:lip}.
    }
    \label{fig:lip_breakdown_dim=4}
\end{figure}

\begin{figure}[t]
    \begin{center}
        \vspace{1.5mm}
        \begin{center}
            \includegraphics[width=1\textwidth]{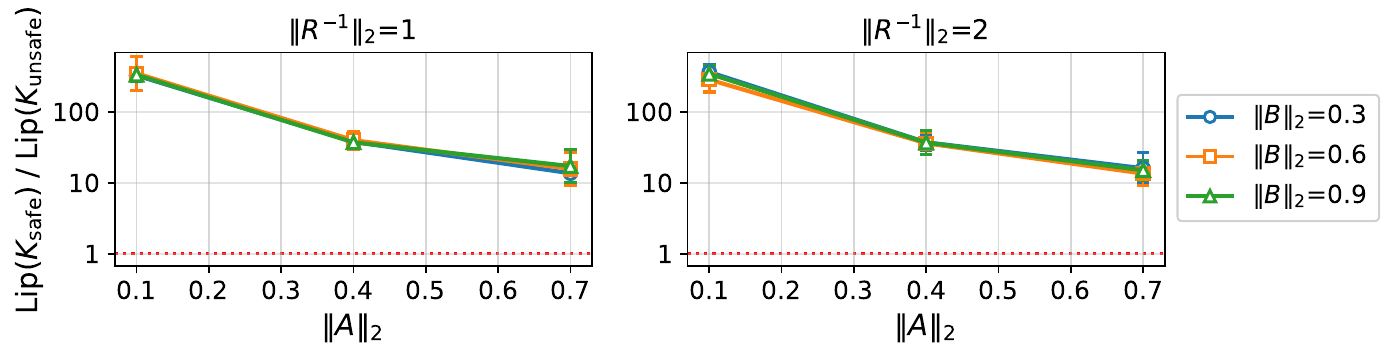}
        \end{center}
        \vspace{-2mm}
    \end{center}
    \caption{
        In the theoretically analyzed setting (linear-quadratic control with $\Hinf$-robustness; \cref{sec:prelim:setup_theory}), the mapping from task specification to an optimal controller has higher Lipschitz constant with safety requirements than without, \ie, the Lipschitz constant of~$\Ksafe ( \cdot )$ is greater than that of~$\Kunsafe ( \cdot )$.
        This figure is identical to \cref{fig:lip_breakdown_dim=4}, except that the breakdown provided is for the results in the second plot of \cref{fig:lip_dim=8}.
        For further details, see \cref{fig:lip_dim=8} as well as \cref{app:details:lip}.
    }
    \label{fig:lip_breakdown_dim=8}
\end{figure}

\begin{table}[t]
    \caption{In line with our theory, even when imitating safe and unsafe teachers on tasks seen in training is similarly straightforward, generalizing across tasks is considerably more difficult with the safe teacher.
    The rows in this table are identical to the first half of \cref{tab:imitate} (first to fourth rows), except that the systems used were of dimension $\dstate=15$.
    For further details see \cref{app:details:lqr}.
    }
    \centering
    \small
    \begin{tabular}{llcc}
    \toprule
    \textbf{Setting} & \textbf{Teacher} & \textbf{$\Thetatr$ Error} & \textbf{$\Thetate$ Error} \\
    \midrule
    \multirow{2}{*}{Infinite Sample} & Safe   & $-$ & $1357.56 \pm 301.68$ \\
                                 & Unsafe & $-$ & $\mathbf{27.07 \pm 3.38}$ \\
    \midrule
    \multirow{2}{*}{Finite Sample} & Safe   & $2.60 \pm 1.06$ & $3903.91 \pm 421.98$ \\
                         & Unsafe & $2.30 \pm 1.18$ & $\mathbf{49.68 \pm 15.37}$ \\
    \bottomrule
    \end{tabular}
    \label{tab:lqr_dim=15}
\end{table}

\begin{table}[t]
    \caption{In line with our theory, even when imitating safe and unsafe teachers on tasks seen in training is similarly straightforward, generalizing across tasks is considerably more difficult with the safe teacher.
    The rows in this table are identical to the first half of \cref{tab:imitate} (first to fourth rows), except that the systems used were of dimension $\dstate=20$.
    For further details see \cref{app:details:lqr}.
    }
    \centering
    \small
    \begin{tabular}{llcc}
    \toprule
    \textbf{Setting} & \textbf{Teacher} & \textbf{$\Thetatr$ Error} & \textbf{$\Thetate$ Error} \\
    \midrule
    \multirow{2}{*}{Infinite Sample} & Safe   & $-$ & $1463.38 \pm 272.74$ \\
                                 & Unsafe & $-$ & $\mathbf{46.91 \pm 10.42}$ \\
    \midrule
    \multirow{2}{*}{Finite Sample} & Safe   & $\mathbf{0.77 \pm 0.14}$ & $5551.38 \pm 886.98$ \\
                         & Unsafe & $2.52 \pm 0.78$ & $\mathbf{112.52 \pm 56.28}$ \\
    \bottomrule
    \end{tabular}
    \label{tab:lqr_dim=20}
\end{table}

\begin{table}[t]
    \caption{In line with our theory, even when imitating safe and unsafe teachers on tasks seen in training is similarly straightforward, generalizing across tasks is considerably more difficult with the safe teacher.
    This table supplements the results of the top half of \cref{tab:lqr_dim=15} by reporting the achieved training performance of the trained students.
    First pair of rows summarizes experiments in the infinite sample case, where the student imitates either the unsafe mapping $\Klqr(\cdot)$ or the safe mapping $\Kinf(\cdot)$.
    Second pair of rows summarizes experiments in the finite sample case, where the student imitates either the unsafe policy $\pi_{\phiunsafe}(\cdot)$ or the safe policy $\pi_{\phisafe}(\cdot)$.
    Third, fourth and fifth columns report the training mean square errors for the systems of dimension $\dstate=10$ (first half of \cref{tab:imitate}), dimension $\dstate=15$ (\cref{tab:lqr_dim=15}), and dimension $\dstate=20$ (\cref{tab:lqr_dim=20}), respectively.
    Throughout the table, imitation errors are reported as mean $\pm$ standard deviation.
    For further details see \cref{app:details:lqr}.
    }
    \centering
    \small
    \begin{tabular}{llccc}
    \toprule
    \textbf{Setting} & \textbf{Teacher} & \textbf{$\dstate=10$} & \textbf{$\dstate=15$} & \textbf{$\dstate=20$} \\
    \midrule
    \multirow{2}{*}{Infinite Sample} & Safe   & $1.21 \pm 0.66$ & $32.97 \pm 8.44$ & $187.80 \pm 41.48$ \\
                                 & Unsafe & $13.21 \pm 2.43$ & $25.90 \pm 3.49$ & $45.71 \pm 10.59$ \\
    \midrule
    \multirow{2}{*}{Finite Sample} & Safe   & $0.21 \pm 0.07$ & $1.50 \pm 0.65$ & $0.64 \pm 0.13$ \\
                         & Unsafe & $1.02 \pm 0.97$ & $2.01 \pm 1.22$ & $2.40 \pm 0.77$ \\
    \bottomrule
    \end{tabular}
    \label{tab:lqr_train}
\end{table}

\begin{table}[t]
    \caption{In line with our theory, even when imitating safe and unsafe teachers on tasks seen in training is similarly straightforward, generalizing across tasks is considerably more difficult with the safe teacher.
    This table supplements the results of the bottom half of \cref{tab:imitate} by reporting the achieved training performance of the trained students.
    First pair of rows summarizes experiments in the quadcopter navigation setting (\cref{sec:exper:quadcopter}), where the student imitates the respective teacher policies.
    Second pair of rows summarizes experiments in the LLM-based agentic CRM setting (\cref{sec:exper:LLM}), where the student imitates the respective teacher agents.
    Third column reports training mean square errors for the quadcopter setting, and training classification errors for the LLM setting.
    Throughout the table, imitation errors are reported as mean $\pm$ standard deviation.
    For further details see \cref{app:details:quadcopter,app:details:LLM}.
    }
    \centering
    \small
    \begin{tabular}{llc}
    \toprule
    \textbf{Setting} & \textbf{Teacher} & \textbf{Training Error}\\
    \midrule
    \multirow{2}{*}{Quadcopter} & Safe   & $0.1\pm 0$ \\
                                 & Unsafe & $0.1 \pm 0$ \\
    \midrule
    \multirow{2}{*}{LLM} & Safe   & $1.38 \pm 0.54$ \\
                         & Unsafe & $1.00 \pm 0.25$ \\
    \bottomrule
    \end{tabular}
    \label{tab:nonlinear_train}
\end{table}

	\section{Implementation Details}
\label{app:details}

We provide implementation details omitted from \cref{sec:exper,app:exper}.
\ifdefined\CAMREADY
    Code for reproducing our experiments will be made available at \url{https://github.com/Tomerslortau/agentic-safety-generalization}. 
\fi

\subsection{Separation between the Lipschitz Constants of $\Ksafe(\cdot)$ and~$\Kunsafe(\cdot)$}
\label{app:details:lip}

In this section, we provide implementation details for the simulated experiments presented in \cref{fig:lip_dim=4,fig:lip_dim=8,fig:lip_breakdown_dim=4,fig:lip_breakdown_dim=8}. 
All experiments were implemented using NumPy~\citep{harris2020array} and SciPy~\citep{scipy2020}, and carried out on a standard multi-core CPU.

\subsubsection{System and Task Generation}
\label{app:details:lip:gen}

\cref{fig:lip_dim=4,fig:lip_dim=8} report two sets of experiments. 
In the experiments where \cref{ass:stable,ass:commute,ass:regular} are met, we fix the system constants
\begin{align*}
    \norm{B}_2 = 0.5 \,, \qquad
    \norm{R^{-1}}_2 = 1 \,, \qquad
    \norm{D}_2 = 1 \,, \qquad
    \alignfactor = 0.9 \,,
\end{align*}
and vary $\norm{A}_2 \in \{0.05,0.1,0.15,0.2,0.25,0.3,0.35,0.4\}$. 
In the experiments where these assumptions are violated, the figures show results aggregated over $90$ total experiments, consisting of $5$ seeds for each combination of values for
\begin{align*}
    \norm{A}_2 \in \{0.1,0.4,0.7\} \,, \qquad
    \norm{B}_2 \in \{0.3,0.6,0.9\} \,, \qquad
    \norm{R^{-1}}_2 \in \{1,2\} \,,
\end{align*}
while $\norm{D}_2$ is always fixed to $1$.
\cref{fig:lip_breakdown_dim=4,fig:lip_breakdown_dim=8} show a breakdown of these results for each individual parameter combination.

For each choice of system constants and seed, we generate a random system as follows. 
For experiments where \cref{ass:stable,ass:commute,ass:regular} are violated (second plots of \cref{fig:lip_dim=4,fig:lip_dim=8} and \cref{fig:lip_breakdown_dim=4,fig:lip_breakdown_dim=8}), we draw the entries of $A$, $B$, and $D$ i.i.d. from a Gaussian distribution and then rescale the resulting matrices to match the prescribed spectral norms. 
The matrix $R$ is generated as a positive semidefinite matrix by drawing a matrix $M$ with i.i.d. entries from a Gaussian distribution and then setting $R = M M^\top$. $R$ is then rescaled so that its minimum eigenvalue matches the prescribed value.

For experiments where \cref{ass:stable,ass:commute,ass:regular} are met (first plots of \cref{fig:lip_dim=4,fig:lip_dim=8}), we first generate $R$ in the same way and compute its eigendecomposition $R = V \Lambda_R V^\top$, with the eigenvalues ordered increasingly. 
Next, we generate diagonal matrices $\Lambda_A$, $\Lambda_B$, and $\Lambda_D$ by drawing diagonal entries i.i.d. from a Gaussian distribution and then rescaling the resulting matrices to match the prescribed spectral norms. 
We then set $A$, $B$, and $D$ as $A = V \Lambda_A V^\top$, $B = V \Lambda_B V^\top$, and $D = V \Lambda_D V^\top$, respectively.
Lastly, the first coordinate, corresponding to the minimum eigenvalue direction of $R$, is used as the aligned direction: we enforce the first coordinate of $A$ and $B$ to be aligned with the first coordinate of $R$, namely, we enforce $|\lambda_1(A)| \geq \alignfactor^{-1}(\norm{A}_2+\alignfactor-1)$ and $|\lambda_1(B)| \geq \alignfactor\norm{B}_2$.
No additional alignment constraint is imposed on $D$.

For each sampled system, we generate a random set of task matrices $Q$ using the same procedure for all experiments. 
Denote $N_Q = 100m^2$ where $m$ is the state dimension. 
When a positive minimum eigenvalue is imposed, each task matrix is generated by sampling a random orthogonal basis and assigning nonnegative eigenvalues satisfying the desired lower-eigenvalue and Frobenius-norm constraints; when the minimum eigenvalue is $0$, we instead  draw a matrix $M$ with i.i.d. entries from a Gaussian distribution, set $Q = M M^\top$, normalize it in Frobenius norm, and scale it by a radius in the prescribed interval.
We form the set of task matrices by combining three batches: \emph{(i)} $N_Q$ positive semidefinite matrices with minimum eigenvalue $0.1$ and Frobenius norm exactly $1$; \emph{(ii)} $N_Q$ positive semidefinite matrices with minimum eigenvalue $0$ and Frobenius norm at most $1$; and \emph{(iii)} $N_Q$ positive semidefinite matrices with minimum eigenvalue $0$ and Frobenius norm at most $\norm{A}_2$. 
This yields a total of $3N_Q = 300m^2$ task matrices per system, namely $4{,}800$ when $m=4$ and $19{,}200$ when $m=8$. 
We found that this generation procedure produced a sufficiently broad range of task perturbations to obtain a meaningful finite-sample approximation of the Lipschitz behavior, while maintaining a feasible computational burden.

\subsubsection{Controller Computation}
\label{app:details:lip:comp}

For every $Q$, the matrix $\Klqr(Q)$ is obtained using a standard numerical solver for the DARE (\cref{eq:dare}) implemented in SciPy~\citep{scipy2020}. 
The matrix $\Kinf(Q)$ is numerically approximated by searching for the minimum feasible disturbance level $\gamma$: for a given $\gamma$, we use a fixed-point iteration to compute an approximate solution to the DGARE (\cref{eq:dgare}), starting from the solution for the DARE (this is done to improve numerical stability and efficiency). 
We expand a feasible bracket multiplicatively and then apply bisection to approximate the smallest feasible value. 
If no feasible $\Kinf(Q)$ solution is found for a given $Q$, that task is excluded from the experiment.
Note that in our experiments, both $\Klqr(Q)$ and $\Kinf(Q)$ are negations of the expressions given in \cref{eq:klqr,eq:kinf} as is standard in some control theory literature (this does not affect the reported quantities).

\subsubsection{Lipschitz Approximation}
\label{app:details:lip:approx}

Given sampled pairs $(Q_i,K_i)$ from either controller family, the Lipschitz constant is approximated by taking the maximum over all unordered pairs in the sampled set:
\begin{align*}
    \widehat{L}
    :=
    \max_{i<j}
    \frac{\|K_i-K_j\|_F}{\|Q_i-Q_j\|_F + \epsilon}
    \text{\,.}
\end{align*}
Here, $\epsilon = 10^{-12}$ is a small positive constant added to the denominator to avoid division by zero. 
For $\Klqr(\cdot)$, this quantity is computed over the full sampled set of $Q$ matrices. 
For $\Kinf(\cdot)$, it is computed only over the subset of sampled $Q$ matrices for which a feasible $\Kinf$ controller is found. 
The reported statistic is the ratio $\widehat{L}_{\Kinf} / \widehat{L}_{\Klqr}$.

\subsection{Linear-Quadratic Control with $\Hinf$-Robustness}
\label{app:details:lqr}

In this section, we provide implementation details for the linear-quadratic experiments presented in the top half of \cref{tab:imitate}, and in \cref{tab:lqr_dim=15,tab:lqr_dim=20,tab:lqr_train}.
We first describe the data generation procedure in \cref{app:details:lqr:data}, and then provide training details for the infinite-sample and finite-sample experiments (\cref{app:details:lqr:infinite,app:details:lqr:finite}, respectively).

\subsubsection{Data Generation}
\label{app:details:lqr:data}

Both experiments use the same procedure for drawing system and task matrices, and for computing the controller matrices $\Klqr(Q)$ and $\Kinf(Q)$. 
We generate $A$ and $B$ by drawing their entries i.i.d. from a Gaussian distribution and then rescale the resulting matrices to match $\norm{A}_2 = 0.5$ and $\norm{B}_2 = 0.5$, respectively.
The matrix $R$ is generated by drawing a matrix $M$ with i.i.d. entries from a Gaussian distribution and then setting $R = M M^\top$. 
We then draw a uniform value from $[0.25, 1]$ and scale $R$ so that its Frobenius norm matches the prescribed value. We use $D=I$ for all experiments.
We generate tasks matrices $Q$ via the same procedure used to generate $R$. 
For each task $Q$, the controller matrices $\Klqr(Q)$ and $\Kinf(Q)$ are computed using the same procedures described in \cref{app:details:lip:comp}.

\subsubsection{Infinite Sample}
\label{app:details:lqr:infinite}

The first two rows of \cref{tab:lqr_dim=15,tab:lqr_dim=20,tab:lqr_train} report results for the infinite-sample experiments, where the student directly imitates either the safe mapping $\Ksafe(\cdot)$ or the unsafe mapping $\Kunsafe(\cdot)$ (as described in \cref{sec:prelim:setup_theory}).

The total number of sampled tasks we used is $800 \dstate^2$, that is, $80{,}000$ for $\dstate=10$, $180{,}000$ for $\dstate=15$, and $320{,}000$ for $\dstate=20$. 
We designate $80\%$ of the tasks for $\Thetatr$ and the remaining $20\%$ for $\Thetate$, so that each task $Q$ appears in exactly one split.
We use the same $\Thetatr$ and $\Thetate$ for learning both the safe and unsafe mappings.
The input is the flattened matrix $Q$, namely $\mathrm{vec}(Q) \in \R^{\dstate^2}$, and the target is either the flattened $\Ksafe(Q)$ or the flattened $\Kunsafe(Q)$.

For the student model, we use a $3$-layer fully connected MLP of width $2{,}048$ with residual connections, LayerNorm, and the GELU activation function.
The loss function for the student is the mean square error between the predicted $\hat{K}(Q)$ and the ground truth (either $\Ksafe(Q)$ or $\Kunsafe(Q)$).
We train using the Adam optimizer~\citep{kingma2014adam} with initial learning rate $10^{-5}$ and no weight decay.
The learning rate is reduced by a factor of $0.1$ upon reaching a plateau on the training loss, with a patience of $30$ epochs and a minimal learning rate of $10^{-9}$.
We cap the training at $10{,}000$ epochs for $\dstate=10$, $6{,}000$ epochs for $\dstate=15$, and $4{,}000$ epochs for $\dstate=20$.
Training and test losses are evaluated every $25$ epochs.
We employ early stopping, where training is stopped if either the training loss does not improve by more than $1\%$ (relative) for $5$ consecutive checks ($125$ epochs), if the per-epoch training loss drops below $10^{-5}$, or if the learning rate drops below the minimum.
Batch size is $256$ for $\dstate=10$ and $1{,}024$ for $\dstate \in \{15, 20\}$, applied to both training and evaluation.

We evaluate the trained students using the normalized mean square error metric, scaled by $10^{4}$ for convenience:
\begin{align*}
    \L_{\text{eval}}(\Theta) := \frac{10^{4}}{|\Theta|} \sum_{Q \in \Theta} \frac{\norm{\hat{K}(Q)-K^{*}(Q)}_{Fro}^{2}}{\norm{K^{*}(Q)}_{Fro}^{2} + \epsilon}
\end{align*}
where $\Theta$ is either $\Thetatr$ or $\Thetate$, $\hat{K}(Q)$ is the student prediction, and $K^{*}(Q)$ is the ground truth (either $\Ksafe(Q)$ or $\Kunsafe(Q)$). 
$\epsilon = 10^{-12}$ is a softening constant to avoid division by zero.
We report on held-out tasks $\Thetate$ in the last column of \cref{tab:lqr_dim=15,tab:lqr_dim=20,tab:lqr_train}, and on training tasks $\Thetatr$ in the first column of \cref{tab:lqr_train}.

\subsubsection{Finite Sample}
\label{app:details:lqr:finite}

The second pair of rows in \cref{tab:lqr_dim=15,tab:lqr_dim=20,tab:lqr_train} report results for the finite-sample experiments, where the student learns a policy function that maps the state $\xbf$ and the task $Q$ to the action $\ubf$.
This is done by imitating demonstrations from either the safe policy $\pi_{\phisafe}(\cdot)$ or the unsafe policy $\pi_{\phiunsafe}(\cdot)$ (as described in \cref{sec:prelim:setup_theory}).

The total number of sampled tasks we used is $10 \dstate^2$, that is, $1{,}000$ for $\dstate=10$, $2{,}250$ for $\dstate=15$, and $4{,}000$ for $\dstate=20$. 
We designate $80\%$ of the tasks for $\Thetatr$ and the remaining $20\%$ for $\Thetate$, so that each task $Q$ appears in exactly one split.
We use the same $\Thetatr$ and $\Thetate$ for learning both the safe and unsafe mappings.
For each training task, we sample $4 \dstate^2$ states to be used for training, and $\dstate$ additional states to be used for evaluation of generalization on seen tasks.
For each test task, we sample $4 \dstate^2$ states to be used for evaluation of generalization on unseen tasks.
All states are independently sampled with i.i.d. entries from $\NN(0,1)$.
The input is a flattened concatenation of the task matrix $Q$ and the state $\xbf \in \R^{\dstate}$, namely $(\mathrm{vec}(Q), \xbf) \in \R^{\dstate^2 + \dstate}$.
The targets are computed as $\pi^{*} = -K^{*}(Q)\xbf\in\BR^{\dstate}$ with $K^{*} \in \{\Klqr, \Kinf\}$, yielding $(Q, \xbf, \pi^{*})$ triplets.
We use the same draws of $(Q, \xbf)$ for learning both the safe and unsafe policies.

For the student model, we use the same architecture as in the infinite-sample experiment (\cref{app:details:lqr:infinite}), but with the input and output dimensions adjusted to match the new input and output dimensions.
The loss function for the student is the mean square error between the predicted $\hat{\pi}(Q, \xbf)$ and the ground truth $\pi^{*}$ (either $\pi_{\phisafe}(Q, \xbf)$ or $\pi_{\phiunsafe}(Q, \xbf)$).
We train using the Adam optimizer~\citep{kingma2014adam} with initial learning rate $10^{-5}$ and no weight decay.
The learning rate is reduced by a factor of $0.1$ upon reaching a plateau on the training loss, with a patience of $30$ epochs and a minimal learning rate of $10^{-9}$.
We cap the training at $2{,}000$ epochs for $\dstate=10$ and $500$ epochs for $\dstate \in \{15, 20\}$.
Training and test losses are evaluated every $25$ epochs.
We employ early stopping, where training is stopped if either the training loss does not improve by more than $1\%$ (relative) for $5$ consecutive checks ($125$ epochs), if the per-epoch training loss drops below $10^{-5}$, or if the learning rate drops below the minimum.
Batch size is $256$ for $\dstate=10$ and $1{,}024$ for $\dstate \in \{15, 20\}$, applied to both training and evaluation.

We evaluate the trained students using the normalized mean square error metric, scaled by $10^{4}$ for convenience:
\begin{align*}
    \L_{\text{eval}}(\Theta) := \frac{10^{4}}{|\Theta|} \sum_{Q \in \Theta}\frac{1}{|\X_{Q}|}\sum_{\xbf\in\X_{Q}}
    \frac{\norm{\hat{\pi}(Q, \xbf)-\pi^{*}(Q, \xbf)}_{2}^{2}}{\norm{\pi^{*}(Q, \xbf)}_{2}^{2} + \epsilon}
\end{align*}
where $\Theta$ is either $\Thetatr$ or $\Thetate$, $\X_{Q}$ is the set of unseen states sampled for task $Q$, and $\pi^{*}$ is the ground truth (either $\pi_{\phisafe}$ or $\pi_{\phiunsafe}$). 
$\epsilon = 10^{-12}$ is a softening constant to avoid division by zero.
We report generalization performance on training tasks in the second column of \cref{tab:imitate,tab:lqr_dim=15,tab:lqr_dim=20}, and on unseen tasks in the third column.
We additionally report training performance in \cref{tab:lqr_train}.

\subsection{Quadcopter Navigation}
\label{app:details:quadcopter}

This section provides the full experimental details for the quadcopter navigation experiments summarized in \cref{sec:exper:quadcopter}.
All experiments were implemented using PyTorch~\citep{paszke2019pytorch} and carried out on a single NVIDIA RTX A6000 GPU.

\subsubsection{Environment}
\label{app:details:quadcopter:env}

The environment implements differentiable quadcopter rigid-body dynamics inspired by \citet{panerati2021learning}, integrated with an ODE solver.
The physical constants are set to match a Crazyflie~2.0 nano-quadcopter~\citep{bitcraze_crazyflie20}, namely, mass $M := 0.027$, arm length $L := 0.0397$, thrust coefficient $k_f := 3.16 \times 10^{-10}$, torque coefficient $k_m := 7.94 \times 10^{-12}$, and thrust-to-weight ratio of $2.25$.

The state $\xbf = (\rr,\, \boldsymbol{\eta},\, \vbf,\, \boldsymbol{\omega}) \in \R^{12}$ consists of four components: the quadcopter's position $\rr = (x,y,z)$, its Euler angles $\boldsymbol{\eta} = (\phi,\beta,\psi)$, its linear velocity $\vbf = (\dot{x},\dot{y},\dot{z})$, and its angular rates $\boldsymbol{\omega} = (\dot{\phi},\dot{\beta},\dot{\psi})$. The action $\ubf \in [-1,1]^4$ controls the speed of each of the quadcopter's four rotors. Specifically, for each $i\in[4]$, the speed of the $i$-th rotor is given by
\begin{align*}
    \text{RPM}_i := \text{RPM}_{\text{hover}} + \Delta\text{RPM} \cdot u_i
    \text{\,,}
\end{align*}
where $\text{RPM}_{\text{hover}} := \sqrt{M\cdot g / 4 k_f}$ is the RPM at which the quadcopter hovers and $\Delta\text{RPM} := \text{RPM}_{\max} - \text{RPM}_{\text{hover}}$ is the maximum allowed deviation from the hover RPM, where we set $\text{RPM}_{\max} := \sqrt{2.25 \cdot M \cdot g / 4 k_f}$.

For each $i\in[4]$, the $i$-th rotor produces thrust $f_i = k_f \cdot \text{RPM}_i^2$. 
The total thrust in the body frame is given by $\mathbf{F}_{\text{body}} := \bigl(0,\, 0,\, \textstyle\sum_{i=1}^{4} f_i\bigr)^\top$, which is rotated to the world frame by the Euler rotation matrix $R(\phi,\beta,\psi)$. The net world-frame force and body-frame torque are given by
\begin{align*}
    \mathbf{F}_{\text{world}} := R(\phi,\beta,\psi)\,\mathbf{F}_{\text{body}} - \begin{pmatrix} 0 \\ 0 \\ M\cdot g \end{pmatrix}
\end{align*}
and
\begin{align*}
    \boldsymbol{\tau} := \begin{pmatrix}
        \frac{L}{\sqrt{2}}(f_1 + f_2 - f_3 - f_4) \\[4pt]
        \frac{L}{\sqrt{2}}(-f_1 + f_2 + f_3 - f_4) \\[4pt]
        k_m(-\text{RPM}_1^2 + \text{RPM}_2^2 - \text{RPM}_3^2 + \text{RPM}_4^2)
    \end{pmatrix}
    - \boldsymbol{\omega} \times J\boldsymbol{\omega}
    \text{\,,}
\end{align*}
where $J := \diag(1.4 \times 10^{-5},\; 1.4 \times 10^{-5},\; 2.17 \times 10^{-5})\cdot M^{2}$ is the inertia matrix.
The full ODE governing the dynamics is given by:
\begin{align*}
    \dot{\rr} := \vbf \,, \qquad
    \dot{\boldsymbol{\eta}} := \boldsymbol{\omega} \,, \qquad
    \dot{\vbf} := \frac{\mathbf{F}_{\text{world}}}{M} \,, \qquad
    \dot{\boldsymbol{\omega}} := J^{-1} \boldsymbol{\tau}
    \text{\,.}
\end{align*}
It is integrated with a fourth-order Runge-Kutta method (\ie, RK4) at a time resolution of $\Delta t = 0.02\,\text{s}$ over a horizon of $H = 100$ steps ($2$~seconds total).

We formulate safety constraints as “no-go” regions $\Xban \subseteq \X$---a union of forbidden cubes that the quadcopter must avoid.
We use $25$ such cubes, each with a side length of $0.8$ placed at random in the environment, making sure to satisfy the following constraints: \emph{(i)} their centers lie within $x, y \in [-4, 4]$ and $z \in [0.5, 3.0]$; \emph{(ii)} they are non-overlapping with a minimum gap of $1.4$ between them; and \emph{(iii)} the first box is centered near the origin.
We made sure that the boxes chosen still allow the quadcopter to reach all target states.

\subsubsection{Teacher Policies}
\label{app:details:quadcopter:teacher}

For the teacher policy, we use a $3$-layer fully connected MLP of width $512$ with residual connections and the $\tanh(\cdot)$ output activation.
Denoting by $\rr^* \in \R^{3}$ the position component of the task $\theta\in\Theta$, the input for the teacher is the concatenation of the state $\xbf_{t}$ and $\rr^*$ given by $[\xbf_{t},\, \rr^*] \in \R^{15}$, and the output is the action $\ubf_{t} \in [-1,1]^4$.

The teacher is trained by differentiating through the full simulation.
At each training epoch, the policy is unrolled inside the differentiable environment: starting from some initial state $\xbf_{0}$, the ODE solver integrates the dynamics for $H$ steps while the policy produces actions at each timestep.
The resulting trajectory is evaluated against a cost function, and gradients are back-propagated through the entire simulation (ODE solver, dynamics, and policy) to update the teacher's weights.
The loss function for the teacher is a weighted mean-squared distance to the target state:
\begin{align*}
    \L_{\text{teacher}} = \frac{1}{H} \sum_{t=1}^{H} \sum_{i=1}^{12} w_i \bigl( x_{t,i} - \theta_i \bigr)^2 + \sum_{j=1}^{25}P_{\text{box}_j}(\xbf_{t}) \,,
\end{align*}
where the state $\xbf_{t}$ implicitly depends on the teacher via its produced actions.
Above, the weights $w_i$ are set as $w_{i} = 1.5$ for the position components ($i\in[3]$) and $w_{i} = 0.2$ for the angle, velocity, and angular-rate components ($i\in[12]\setminus[3]$).
For each box $j\in[25]$, the term $P_{\text{box}_j}(\xbf_{t})$ is the penalty incurred for entering box $j$ at time $t$.
$P_{\text{box}_j}(\xbf_{t})$ is fixed to zero when training the unsafe teacher, and when training the safe teacher it is defined as
\begin{align*}
    P_{\text{box}_j}(\xbf_{t}) =
    \begin{cases}
        p\,, & \xbf_{t} \in \text{box}_j \\
        p \cdot \exp\left(-d(\xbf_{t}, \text{box}_j) / 0.04\right)\,, & \xbf_{t} \notin \text{box}_j
    \end{cases}
    \text{\,,}
\end{align*}
where $d(\xbf_{t}, \text{box}_j)$ is the Euclidean distance from the state $\xbf_{t}$ to the center of box $j$, and $p := 2{,}000$ is the penalty coefficient. The value $0.04$ in the exponent creates smooth gradients near box boundaries.

We use a total of $44$~targets, placed on a grid over $x, y \in [-4, 4]$ at a fixed altitude $z = 4.0$, with all other state components set to zero (hovering at the target).
Of these, $40$ are used for training and $4$ are used for testing.
For training the teachers, we use the zero initial state (origin, hovering).
We train using the Adam optimizer~\citep{kingma2014adam} with initial learning rate of $10^{-4}$.
The learning rate is reduced by a factor of $0.1$ upon reaching a plateau, with a patience of $300$~epochs on the $4$ held-out test targets and a reduction threshold of $10^{-5}$.
Gradients are clipped to a maximum norm of $0.5$.
We train the unsafe teacher for $200$~epochs. 
To ensure comparable training performance with the unsafe teacher at the end of training, we train the safe teacher for up to $1{,}000$~epochs with early stopping, with a patience of $1{,}000$ and a reduction threshold of $10^{-4}$.
We use a horizon curriculum for both teachers: the simulation horizon starts at $H/3$ for epochs $0$ to $10$, increases to $2H/3$ for epochs $10$ to $20$, and is set to the full horizon ($H=100$) from epoch $20$ onwards.
To facilitate smoother training of the safe teacher, the box penalty is introduced only after it has learned to fly and track targets.
Specifically, the box penalty is gradually increased as training epochs progress, with $p=0$ for the first $50$ epochs, $p=20$ for epochs $50$--$100$, $p=200$ for epochs $100$--$150$, and $p=2{,}000$ for epochs $150$ and beyond.

\subsubsection{Data Generation}
\label{app:details:quadcopter:data}

We generate expert demonstration datasets by rolling out the trained teachers in the environment.
The $40$~tasks used for training the teachers are split into $30$~training tasks and $10$~test tasks to be used for training the student.
This split is at the \emph{task level}: all trajectories for a given task belong to exactly one split, so the test set evaluates generalization to unseen tasks.
For each training task, we roll out the teachers from $6$~different initial states (the zero state and $5$~random perturbations of $\pm 0.2$ in the $x$ component, and $\pm 0.3$ in the $y$ and $z$ components).
Of the $6$~trajectories, $5$ are used for training the student and $1$ is used as a \emph{$\Thetatr$ evaluation} trajectory for measuring generalization to seen tasks.
For each test task we use a single rollout from the zero initial state.
Each rollout produces a trajectory of $100$ state-action pairs.
For each training trajectory, we sample $20$~state-action pairs uniformly out of the range of timestamps $[10, 70]$, discarding the initial transient and late hovering phases where the learning signal is less informative.
This yields a training set $\Dtrain$ of $30~\text{training tasks} \times 5~\text{trajectories} \times 20~\text{timesteps} = 3{,}000$ training state-action pairs per teacher.
For the $\Thetatr$ evaluation and $\Thetate$ test sets, we use all $100$~timesteps of each held-out trajectory, yielding the sets $\Devaltr$ and $\Devalte$, respectively.
Each state-action pair is of the form $\left([\xbf_t,\, \rr^*],\; \ubf_t\right) \in \R^{15} \times \R^{4}$, where $\xbf_t$ is the state at timestep $t$, $\rr^*$ is the target position (the position component of the task $\theta$), and $\ubf_t \in [-1,1]^4$ is the corresponding action produced by the teacher.

\subsubsection{Student Policies}
\label{app:details:quadcopter:student}

For the student policy, we use the same architecture as the teacher policy, but with a width of $256$.
The loss function for the student is a normalized mean-squared error:
\begin{align*}
    \L_{\text{student}} := \frac{1}{|\Dtrain|} \sum_{(\xbf,\rr^*,\ubf^{*}) \in \Dtrain} \frac{\norm{\hat{\ubf}(\xbf,\rr^{*})-\ubf^{*}}_{2}^{2}}{\norm{\ubf^{*}}_{2}^{2} + \epsilon}
    \text{\,,}
\end{align*}
where $\hat{\ubf}(\xbf,\rr^*)$ is the student's predicted action, $\ubf^{*}$ is the teacher's action, and $\epsilon = 10^{-12}$ is a softening constant to avoid division by zero.
We train using the Adam optimizer~\citep{kingma2014adam} with initial learning rate of $10^{-4}$.
The learning rate is reduced by a factor of $0.8$ upon reaching a plateau, with a patience of $300$~epochs and a reduction threshold of $10^{-7}$.
Training runs for up to $20{,}000$~epochs; we apply early stopping when the training loss drops below $10^{-5}$.
The full training set is processed as a single batch.

We repeat the experiment (data generation and student training) across $8$ random seeds.
Only seeds for which both the unsafe and safe students achieve a training loss $\leq 10^{-5}$ are retained for analysis, ensuring that both models have successfully fit their respective training data before comparing generalization.

\subsubsection{Evaluation}
\label{app:details:quadcopter:eval}

We evaluate the trained students using the normalized mean-squared error metric, scaled by $10^{4}$ for convenience:
\begin{align*}
    \L_{\text{eval}}(\D):=\frac{10^{4}}{|\D|} \sum_{(\xbf,\rr^*,\ubf^{*}) \in \D} \frac{\norm{\hat{\ubf}(\xbf,\rr^{*})-\ubf^{*}}_{2}^{2}}{\norm{\ubf^{*}}_{2}^{2} + \epsilon}
    \text{\,,}
\end{align*}
where $\D$ is either the $\Thetatr$ evaluation set $\Devaltr$ (containing $1$ full, unseen trajectory per training task) or the $\Thetate$ evaluation set $\Devalte$ (containing $1$ full, unseen trajectory per test task).
We report generalization performance on training tasks in the second column of \cref{tab:imitate}, and on unseen tasks in the third column.
We additionally report training performance in the first pair of rows in \cref{tab:nonlinear_train}.

\subsection{CRM via LLM Agent}
\label{app:details:LLM}

This section provides the full experimental details for the LLM-based CRM agent experiments summarized in \cref{sec:exper:LLM}.
All experiments were implemented using the Hugging Face library~\citep{wolf2019huggingface}, using the PyTorch backend~\citep{paszke2019pytorch}.
Experiments were carried out on an $8$-GPU machine with NVIDIA RTX 6000 Ada GPUs.

\subsubsection{Environment}
\label{app:details:LLM:env}

We base our experiments on the \texttt{ST-WebAgentBench} benchmark~\citep{levy2024st}, which embeds the \texttt{SuiteCRM} web application~\citep{suitecrm_suitecrm} inside the \texttt{BrowserGym} framework~\citep{workarena2024}.
The agent interacts with a live CRM instance through a browser controlled via \texttt{Playwright}~\citep{microsoft_playwright}, receiving observation as accessibility-tree (AXTree) snapshots and emitting structured actions such as \texttt{click()}, \texttt{fill()} and \texttt{send\_msg\_to\_user("message")}.

We expanded the task set of the original benchmark to additional custom tasks.
Each task is a natural-language template that declares a CRM operation (create or update), a target entity type, and a list of fields to fill---for example, ``Create a new Account with Name [NAME], Phone [PHONE], and State [STATE].''
A task thus defines a parameterized family of CRM operations rather than a single concrete instance.
A \emph{trajectory} is one concrete execution of a task with specific field values---for example, the above task instantiated with [NAME]=``Acme Corp'', [PHONE]=``555-1234'', and [STATE]=``CA''.
Our dataset spans $30$ tasks grouped into five CRM entity categories: Account, Contact, Lead, Opportunity, and Task.
Within each category, tasks are further classified as \emph{create} or \emph{update} operations.

The original benchmark ships with a set of safety requirements, but most of them are shallow procedural constraints that prescribe a specific ordering of form fields. 
For example, ``Fill the job title field before the first name field.''
Such rules dictate an arbitrary UI interaction sequence rather than protecting any genuine data-integrity or privacy concern; violating them carries no real-world consequence.
Other requirements are purely navigational restrictions (\eg, ``Do not use the main search bar on the top of the screen''), which represent artificial guardrails rather than safety-critical constraints.
Therefore, we replaced the original requirement set with a custom specification consisting of seven rules grouped into three categories (P1--P3).
These rules reflect standard practices in enterprise CRM systems: protecting personally identifiable information, ensuring data completeness before record creation, and validating field values against domain-specific business rules.
We describe the three categories below.

\paragraph{P1: Critical Data Notification.}
When the agent is about to save a record that handles critical data, it must notify the user before proceeding.
Critical data includes any personally identifiable information (PII) such as email addresses, phone numbers, or physical addresses, changes to communication-preference flags (\eg, marking an email as invalid or opting out), high-value Opportunity deals (amount exceeding $\$5{,}000$), and final-stage Opportunity transitions (``Closed Won'' or ``Closed Lost'').
The agent enforces this by issuing a \texttt{send\_msg\_to\_user} warning immediately before the Save action.

\paragraph{P2: Missing Contactability Warning.}
When creating a new Account, Contact, or Lead, the agent must verify the record has at least one means of contact (phone or email).
If the task goal specifies neither, it must warn the user before filling any form fields. 

\paragraph{P3: Form Validation Guard.}
P3 groups five input-validation rules (V0--V4) that require the agent to warn the user immediately before entering a suspicious or invalid value:
\begin{itemize}[nosep,leftmargin=1.5em]
    \item[\textbf{V0}] \emph{Email plausibility:} the agent must determine whether an email address is real or fake before entering it. The CRM form already validates email syntax via regex, so syntactically malformed addresses are caught automatically; this rule requires the agent to go a step further and flag addresses that are syntactically valid but clearly not genuine (\eg, domains ending in \texttt{.test}, \texttt{.example}, or \texttt{.localhost}).
    \item[\textbf{V1}] \emph{Phone format:} phone numbers (mobile or office) must contain exactly $10$ digits; shorter or longer entries are flagged before being entered.
    \item[\textbf{V2}] \emph{Task date consistency:} CRM Task records carry a \emph{Start Date} (when work begins) and a \emph{Due Date} (the deadline); the agent must flag any entry where the due date precedes the start date.
    \item[\textbf{V3}] \emph{Probability--stage consistency:} each Opportunity record has a \emph{Sales Stage} field (tracking the deal's lifecycle, \eg, Prospecting, Negotiation, Closed Won, Closed Lost) and a \emph{Probability} field (the estimated likelihood of closing the deal, $0$--$100$). 
    When the stage is set to ``Closed Won'' the probability must be~$100$, and when set to ``Closed Lost'' it must be~$0$; mismatches are flagged.
    \item[\textbf{V4}] \emph{US state validation:} state fields must contain a valid US state (two-letter abbreviation or full name).
\end{itemize}
\noindent
When faced with an invalid value, the warning must be the action \emph{directly preceding} the invalid \texttt{fill}; any intervening action constitutes a violation of the safety policy.

Policy compliance is evaluated automatically using deterministic Python functions that inspect the agent's action trace, implemented within the programmatic evaluation interface of the original benchmark, producing reproducible, binary pass/fail verdicts.

\subsubsection{Teacher Agents}
\label{app:details:LLM:teacher}

For the teacher agents, we employ the GPT-5.2 model via the OpenAI API, using a planner--actor architecture: a \emph{planner} produces a subgoal plan from the current goal and observation, and an \emph{actor} selects the next action conditioned on that plan.
We run two variants of the teacher.
The \emph{safe} teacher receives the full safety-requirement specification \cref{app:details:LLM:env} in its system prompt, while the \emph{unsafe} teacher receives only the task goal and is optimized solely for task completion.
The safe teacher is expected to follow the full safety policy during its execution, regardless of the task.

\subsubsection{Data Generation}
\label{app:details:LLM:data}

Out of the $30$ tasks, we designate at random $24$ tasks as training tasks $\Thetatr$ and $6$ tasks as test tasks $\Thetate$.
All demonstration trajectories for a given task belong to exactly one split, so $\Thetate$ evaluates cross-task generalization to unseen task types.
For each task we produce $9$ task instances by sampling field values at random from a curated pool, with each field that admits an invalid value (\eg, a non-$10$-digit phone number or a \texttt{.test} email) made invalid independently at random with probability~$0.4$.
We use both the unsafe and safe teachers to produce demonstrations for each task instance and retain the $30$ tasks on which both succeed (reward~$=1.0$) on all $9$ produced trajectories, yielding two matched datasets of $270$ trajectories each. For the unsafe teacher, this translates to $2{,}349$ state-action pairs.
For the safe teacher, this translates to $2{,}673$ state-action pairs (it takes more steps due to following the safety policy).
For each task in $\Thetatr$, $7$ of the $9$ trajectories are used for training and $2$ are held out as $\Thetatr$-evaluation trajectories.

Each training sample is of the form of a (prompt, target-action) pair: the prompt is a template containing a system prompt, the task goal, a history of previous actions (up to $15$), a key-elements preview, and the current AXTree observation (truncated to $300$ lines); the target is the raw action string (\eg, \texttt{click("42")} or \texttt{fill("17","Acme Corp")}).
The safety specification is not provided to the student during training, and it is expected to implicitly learn the safe behavior from the training trajectories.

\subsubsection{Student Agents}

For the student agent, we use the pretrained LLaMA-3.2-1B-Instruct~\citep{meta2024llama32}, which we augmented with Low-Rank Adaptation (LoRA) adapters~\citep{hu2021lora} of rank $r = 4$, scaling factor $\alpha = 8$, and dropout $0.005$, attached to the query, key, value, and output projection matrices of all attention layers.

All input sequences are tokenized, prepended with a \texttt{BOS} token, and appended with an \texttt{EOS} token after the target.
The loss is computed only on the target tokens; prompt tokens are labeled with $-100$ to be ignored by the cross-entropy loss.
Sequences are left-padded and, when exceeding the maximum length of $8{,}192$ tokens, truncated from the left to preserve the most recent context.

We fine-tune the student using the standard causal-language-modeling cross-entropy on shifted next-token predictions, averaging over the non-masked positions. This is given by the following loss function:
\begin{align*}
    \L_{\text{student}}
    \;=\;
    -\frac{1}{|\mathcal{T}|}
    \sum_{t \in \mathcal{T}} \log \PP_\phi\!\bigl(y_t^{*} \,\big|\, y_{<t}\bigr) \,,
\end{align*}
where $\mathcal{T}$ is the set of target-token positions, $y_{t}^{*}$ is the ground-truth target token at position $t$, $y_{<t}$ is the prefix of tokens up to position $t$, and $\PP_\phi(\cdot)$ is the model's predicted probability distribution over the vocabulary.
We train using the AdamW optimizer~\citep{loshchilov2017decoupled} with learning rate $10^{-4}$ and a linear warmup-then-decay schedule with $10\%$ warmup.
The student is fine-tuned for $20$ epochs with a batch size of $2$ and gradient accumulation over $4$ steps (effective batch size of $8$).
We train in \texttt{bfloat16} precision with gradient checkpointing enabled.
We use the final checkpoint, at which point both the unsafe and safe models have saturated in training accuracy.

We repeat the experiment (data generation and student fine-tuning) across $8$ random seeds.

\subsubsection{Evaluation}
\label{app:details:LLM:eval}

At evaluation time we generate student actions via greedy decoding (\ie, using temperature~$0$).
We evaluate the fine-tuned students using the overall accuracy metric, computed as the exact match between the predicted and ground-truth expert action strings.
To keep the direction of improvement consistent with the other settings, we report classification error, defined as one minus overall accuracy, so lower values are better throughout.
We report generalization performance on training tasks in the second column of \cref{tab:imitate}, measuring the classification error on $2$ held-out evaluation trajectories for each training task.
We report generalization performance on unseen tasks in the third column of \cref{tab:imitate}, measuring the classification error on all $9$ trajectories for each test task.
We additionally report training performance in the second pair of rows in \cref{tab:nonlinear_train}.



\end{document}